\newcommand{\norm}[1]{\left\lVert#1\right\rVert}
\newcommand{\abs}[1]{\left |#1\right |}
\newcommand*\diff{\mathop{}\!\mathrm{d}}
\newcommand{\trace}[1]{\mathrm{tr}\left(#1\right)}
\newcommand{\calB}{\mathcal{B}}
\newcommand{\calN}{\mathcal{N}}
\newcommand{\bbR}{\mathbb{R}}
\newcommand{\ie}{i.e.}
\newcommand{\eg}{e.g.}
\newtheorem{theorem}{Theorem}[section]
\newtheorem{corollary}[theorem]{Corollary}
\newtheorem{lemma}[theorem]{Lemma}
\theoremstyle{remark}
\newtheorem{remark}{Remark}
\newtheorem{conjecture*}{Conjecture}
\theoremstyle{plain}
\title{SpecNet2:
Orthogonalization-free spectral embedding\\
by neural networks}
\begin{document}

\author[1]{Ziyu Chen\thanks{Email: ziyu@math.duke.edu}}
\author[2]{Yingzhou Li\thanks{Email: yingzhouli@fudan.edu.cn}
}

\author[1]{Xiuyuan Cheng\thanks{Email: xiuyuan.cheng@duke.edu}}
\affil[1]{Department of Mathematics, Duke University}
\affil[2]{School of Mathematical Sciences, Fudan University}

\date{}

\maketitle

\begin{abstract}%

Spectral methods which represent data points by eigenvectors of kernel
matrices or graph Laplacian matrices have been a primary tool in
unsupervised data analysis. In many application scenarios, parametrizing
the spectral embedding by a neural network that can be trained over
batches of data samples gives a promising way to achieve automatic
out-of-sample extension as well as computational scalability. Such an
approach was taken in the original paper of SpectralNet (Shaham et al.
2018), which we call SpecNet1. The current paper introduces a new
neural network approach, named SpecNet2, to compute spectral embedding which optimizes an equivalent objective of the eigen-problem and removes the orthogonalization layer in SpecNet1. SpecNet2 also allows separating
the sampling of rows and columns of the graph affinity matrix by tracking
the neighbors of each data point through the gradient formula.
Theoretically, we show that any local minimizer of the new
orthogonalization-free objective reveals the leading eigenvectors.
Furthermore, global convergence for this new orthogonalization-free objective using a batch-based gradient descent method is proved. Numerical experiments demonstrate the improved
performance and computational efficiency of SpecNet2 on simulated data and
image datasets. 

\end{abstract}

\section{Introduction}

Spectral embedding, namely representing data points in lower dimensional
space using eigenvectors of a kernel matrix or graph Laplacian matrix,
plays a crucial role in unsupervised data analysis. It can be used, for
example, for dimension reduction, spectral clustering, and revealing
the underlying topological structure of a dataset. A known challenge in
the use of spectral embedding is the out-of-sample extension. Another
shortcoming in practice is the possible high computational cost due to the
involvement of constructing a kernel matrix and solving an eigen-problem.
To overcome these challenges, previously, the original
SpectralNet~\cite{shaham2018spectralnet}, which we call SpecNet1 in this
paper, adopted a neural network to embed data into the eigenspace of its
associated graph Laplacian matrix. To be able to enforce the orthogonality
among eigenvectors, an additional orthogonalization layer is appended to
the neural network and updated after each optimization step. 
Accurate computation of the orthogonalization layer requires evaluation
of the neural network model on the whole dataset, which would be computationally too expensive for large datasets. To reduce the computational cost in practice, 
in SpecNet1 the computation of the orthogonalization layer is approximated by only using mini-batches  of data samples, see more in Section \ref{subsec:specnet1}.
However, when a small batch is being used, the approximation error therein leads to unsatisfactory
convergence behavior in practice. 
This is further elaborated in Remark~\ref{rmk:2}. 
This work develops SpecNet2,
which removes the orthogonalization layer and will compute the neural network spectral embedding more efficiently.

From the perspective of linear algebra eigen-problems, 
SpecNet1 adopts the projected gradient descent method to optimize an {\it orthogonally-constrained} quadratic objective of the eigenvalue problem, where the orthogonalization layer is updated to conduct the orthogonalization projection step through a QR decomposition or a Cholesky decomposition. 
Meanwhile, the past decade has witnessed a trend in employing
{\it unconstrained} optimization to address the eigenvalue problem without the need for the orthogonalization step~\cite{Liu2015c, Lei2016, Li2019c,
Gao2020, Gao2021}, especially in the field of computational
chemistry~\cite{Mauri1993, Ordejon1993, Wang2019}. These unconstrained optimization techniques are also known as ``orthogonalization-free optimization'' for solving eigen-problems. All of these methods adopt various forms
of quadratic polynomials as their objective functions. Some works ~\cite{Liu2015c,
Lei2016, Li2019c, Gao2020, Wang2019} are equivalent to applying the
penalty method to the orthogonally constrained optimization problem. There
are two major reasons behind moving from constrained optimization to
unconstrained optimization: 1) explicit orthogonalization requires
solving a reduced size eigenvalue problem, which is not of high parallel
efficiency; 2) explicit orthogonalization requires accessing the entire
vectors, which is incompatible with batch updating scheme. As a result, in dealing with large-scale eigenvalue problems where parallelization and/or batch updating scheme are needed, an unconstrained optimization approach is preferred. This naturally suggests the use of unconstrained optimization
for the eigen-problem in neural network spectral embedding methods.

In the current paper, we modify the orthogonalization-free objective in
\cite{Li2019c} for the graph Laplacian matrix and use it under the spectral
network framework~\cite{shaham2018spectralnet} so as to compute neural network
parametrized spectral embedding of data. The contribution of the work includes 
\begin{itemize}

\item
The proposed spectral network, SpecNet2, is trained with an
orthogonalization-free training objective, which can be optimized more
efficiently than SpecNet1. In particular, the new optimization
objective we proposed allows updating the graph neighbors of the samples in a
mini-batch at each iteration, which memory-wise only requires loading part
of the affinity matrix restricted to that graph neighborhood. Thus the
method better scales to large graphs.

\item 
Theoretically, it is proved that the global minimum of the
orthogonalization-free objective function (unique up to a rotation)
reveals the leading eigenvectors of the graph Laplacian matrix and the
iterative update scheme is guaranteed to converge to the global minimizer
for any initial point up to a measure-zero set.

\item
The efficiency and advantage of SpecNet2 over SpecNet1 with neighbor
evaluation scheme are demonstrated empirically on several numerical examples.
The network embedding also shows better stability and accuracy in some
cases. 

\end{itemize}

The rest of the paper is organized as follows. In
Section~\ref{sec:notation}, we introduce notations used throughout the
paper, as well as the background of the spectral embedding problem we aim
to solve. In Section~\ref{sec:lamethod}, we propose an
orthogonalization-free iterative eigen-problem solver from a numerical linear
algebra point of view with three updating schemes. In
Section~\ref{sec:nnmethod}, we introduce the neural network
parametrization as well as the updating scheme implementations in neural
network training. Theoretical results are analyzed in
Section~\ref{sec:theorem}. Numerical results are shown in
Section~\ref{sec:numres}. Finally, we conclude our paper with discussions
in Section~\ref{sec:conclusion}.

\section{Background}
\label{sec:notation}
    
\subsection{Graph Laplacian and spectral embedding}

Given a dataset of $n$ samples $X = \{x_i\}_{i=1}^n$ in $\bbR^m$, an
\emph{affinity matrix} $W$ is constructed such that $W_{i,j}$ measures the
similarity between $x_i$ and $x_j$. By construction, $W$ is a real
symmetric matrix of size $n \times n$, and $W_{i,j} \ge 0$. One could also
view $W$ as the weights on edges of an undirected graph $G=(V,E)$, where
$V = [n] = \{1, 2, \dots, n\}$, $E = \{(i,j), W_{i,j} > 0\}$. In many
scenarios, $W$ is constructed to be a sparse matrix. For example, in
Laplacian eigenmap~\cite{belkin2003laplacian} and Diffusion
map~\cite{coifman2006diffusion}, the affinity matrix can be constructed as
\begin{equation} \label{eq:kernel-W}
    W_{i,j} = h \left(  \frac{ \|x_i - x_j\|^2}{ \sigma^2} \right),
\end{equation}
where $h$ is a non-negative function on $[0,\infty)$, is compactly
supported or decays exponentially. As a result, when the kernel bandwidth
$\sigma$ is chosen to be the scale of the size of a local neighborhood,
then $W_{i,j}$ is significantly non-zero only when $x_i$ is a nearby
neighbor of $x_j$. Typical non-negative function $h$ includes the
indicator function of $[0,1)$, Gaussian function $e^{-r^2}$, truncated
Gaussian function, et al. Other examples of $W$
which differs from the form of \eqref{eq:kernel-W} include kNN graphs and
kernels with self-tuned bandwidth~\cite{cheng2020convergence}. These
constructions also produce a sparse real symmetric matrix $W$.

Given an affinity matrix $W$, the \emph{degree matrix} $D$ of $W$ is a
diagonal matrix with diagonal entries defined by $D_{i,i} = \sum_{j=1}^n
W_{i,j}$. Note that $D_{i,i} > 0$ whenever the graph has no isolated node. The
matrix $P := D^{-1} W$ is row-stochastic and can be viewed as the
transition matrix of a random walk on the graph. The matrix $L_{rw} = I-
P$ is called the ``random-walk graph Laplacian'' and $L_{rw}$ has real
eigenvalues and eigenvectors $L_{rw} \psi_k = \lambda_k \psi_k$, starting
from $\lambda_1 = 0$ and $\psi_1$ is a constant eigenvector. Throughout this paper, we call the zero eigenvalue the ``trivial'' eigenvalue and eigenvectors associated with zero eigenvalue the ``trivial'' eigenvectors of $L_{rw}$; ``nontrivial'' refers to non-zero eigenvalues and eigenvectors associated with non-zero eigenvalues of $L_{rw}$. When
the graph is connected, the trivial eigenvalue zero is of multiplicity one. The
first $K-1$ nontrivial eigenvectors $\psi_2, \dots, \psi_{K}$ associated with
the smallest eigenvalues $0 < \lambda_2 \le \cdots \le \lambda_K$ of
$L_{rw}$, can provide a low-dimensional embedding of the dataset $X$,
known as the Laplacian Eigenmap~\cite{belkin2003laplacian}, where each
sample is mapped to
\begin{equation} \label{eq:eigen-map}
    x_i \mapsto \Psi(x_i) =  [\psi_2(i), \dots, \psi_K(i) ] \in \bbR^{K-1}.
\end{equation}
Diffusion maps~\cite{coifman2006diffusion} is closely related that maps
\begin{equation}\label{eq:diffusion-map}
    x_i \mapsto \Psi_t(x_i) =  [\lambda_2^t\psi_2(i), \dots,
    \lambda_K^t\psi_K(i) ] \in \bbR^{K-1},
\end{equation}
where $t > 0$ is the diffusion time. These embeddings using the
eigenvectors of graph Laplacians are called \emph{spectral embedding}. The
eigenvectors of unnormalized graph Laplacian $D-W$ have also been used for
embedding and spectral clustering.

\subsection{Out-of-sample extension and limiting eigenfunctions}\label{sec:limiting}

Note that in \eqref{eq:eigen-map}, the mapping $\Psi$ is defined on the
discrete points $x_i\in X$ but not on the whole space yet, since it is
provided by the eigenvectors of a discrete graph Laplacian matrix. The
problem of \emph{out-of-sample extension} for kernel methods and spectral
methods refers to the problem of efficiently generalizing the spectral
embedding to new samples not in $X$. Recomputing the eigenvalue
decomposition on the extended dataset is computationally too expensive to
be practical. Ideally, we would like to generalize the spectral embedding
without such recomputation. Classical methods include Nystr{\"o}m
extension~\cite{nystrom1930praktische, belabbas2009landmark,
williams2000using} and its variants~\cite{bermanis2013multiscale}. More
recently, a neural network-based approach has been proposed in
\cite{shaham2018spectralnet} to parametrize the eigenvectors of the
Laplacian that automatically gives an out-of-sample extension.

Theoretically, it is thus natural to ask when the mapping $\Psi (x_i)$ is
the restriction of an underlying eigenfunction in the continuous space on
the dataset $X$. An answer has been provided by the theory of
\emph{spectral convergence} in a manifold data setting: when data are
sampled on a sub-manifold $\mathcal{M}$ which can be of lower
dimensionality than the ambient space, the eigenvectors and eigenvalues of
the graph Laplacian $L_{rw}$ built from $n$ samples with kernel bandwidth
parameter $\sigma$ converge to the eigenfunctions and eigenvalues of a
limiting differential operator $\mathcal{L}$ when $n \to \infty$ and
$\sigma \to 0$~\cite{coifman2006diffusion}. The expression of
$\mathcal{L}$ depends on the affinity construction and the kernel matrix
normalization, e.g., when data points are uniformly sampled on the
manifold with respect to the Riemannian volume then $\mathcal{L} =
-\Delta_{\mathcal{M}}$ (the Laplace-Beltrami operator up to a sign); and
when density is non-uniform, $\mathcal{L}$ is a certain infinitesimal
generator of the manifold diffusion process. The spectral convergence on
finite samples requires $\sigma$ to scale with $n$ in a proper way, and in
practice, the low-lying eigenvectors, namely those with smaller
eigenvalues of $\mathcal{L}$ near zero, converge faster than the
high-frequency (high-lying) ones.

As a result, in applications where the data samples can be viewed as lying
on or near to a low-dimensional submanifold, it is natural to parametrize the
first $K-1$ nontrivial eigenvectors $\psi_k$ of the large kernel matrix,
evaluated at sample $x_i$, by a neural network, that gives us $\psi_{k,
\theta}(x_i)$, $k=2,\dots, K$, where $\theta$ stands for network
parameters.

\subsection{Summary of SpecNet1
}\label{subsec:specnet1}

SpecNet1~\cite{shaham2018spectralnet} adopts neural network parametrizations of eigenvectors of a normalized graph Laplacian, and the network is trained by minimizing an objective which is the variational form of the eigen-problem with an orthogonality constraint. Here we briefly review the three ingredients of the method of SpecNet1: the linear algebra optimization objective, the batch-based gradient evaluation scheme, and the neural network parametrization (including the orthogonalization layer). 

{\it Optimization objective.}
From a linear algebra point of view,
SpecNet1 aims to find the first $K$ eigenvectors of the symmetrically
normalized Laplacian $L_{sym}:=I-D^{-\frac{1}{2}}WD^{-\frac{1}{2}}$ via
solving the following orthogonally constrained optimization problem
\begin{equation} \label{eq:specnet1}
    \min_{\substack{Y^\top Y = n I \\ Y \in \bbR^{n \times K}}} f_1(Y)
    = \frac{1}{n}\trace{Y^\top (I-D^{-\frac{1}{2}}WD^{-\frac{1}{2}}) Y}.
\end{equation}
Note that in \eqref{eq:specnet1},  $Y$ is a real array as in the classical variational form of eigen-problem. It will be parametrized by a neural network below. 

{\it Mini-batch gradient evaluation.}
When the graph is large, memory constraint in practice usually prevents loading the full graph affinity matrix $W$ into the memory or solving the full matrix $Y$ over iterations. Thus, mini-batch is used in the training of SpecNet1. Given a batch of data points $\calB \subset X$, SpecNet1 performs a single projected
gradient descent step of a surrogate constrained optimization problem
\begin{equation} \label{eq:specnet1-batch}
    \min_{\substack{Y_\calB^\top Y_\calB = b I \\
    Y_\calB \in \bbR^{b \times K}}} \tilde{f}_1(Y_\calB)
    = \frac{1}{b}\trace{Y_\calB^\top (I-\tilde{D}_\calB^{-\frac{1}{2}}
    W_{\calB, \calB}\tilde{D}_\calB^{-\frac{1}{2}}) Y_\calB},
\end{equation}
where $W_{\calB,\calB}$ is a submatrix of $W$ with row and column index
associated to data points in $\calB$, $\tilde{D}_{\calB}$ is the diagonal
degree matrix of $W_{\calB, \calB}$, and $b$ is the number of data points
in $\calB$. 
We call~\eqref{eq:specnet1-batch} the ``local evaluation scheme'' of SpecNet1, as it only uses $ W_{\calB, \calB}$ retrieved from the matrix $W$ when updating $Y_\calB$. 
In this paper, we will propose and study three different  mini-batch evaluation schemes in the training SpecNet2, and local scheme is one of the three. Corresponding to the other two mini-batch evaluation schemes of SpecNet2, which are called ``neighbor'' and ``full'' schemes respectively, we also study the counterpart schemes for SpecNet1. The details are explained in Section \ref{subsec:net2-training} (for neural network training) and Section  \ref{sec:minibatch} (for linear algebra optimization problem). 
Figure \ref{fig:onemoon-net-all} gives a comparison of the different mini-batch schemes used to train SpecNet1 and SpecNet2.
It can be seen that the performance of the local scheme is inferior to the other mini-match evaluation schemes.
Actually, in the linear algebra iterative solver (without neural network parametrization) of the variational eigen-problem, the relatively worse performance of the local scheme already presents, c.f. Figure \ref{fig:onemoon-la-evecs}. 
This is because only using the submatrix $W_{\calB, \calB}$ may drastically lose the information of $W$ when the batch size is small, especially when the graph is sparse. In contrast, the neighbor and full schemes use more information of $W$. See more in later sections.

{\it Neural network parametrization.}
The neural network architecture in SpecNet1~\cite{shaham2018spectralnet} contains two parts: 
First, a network mapping $\Phi_\theta: \bbR^m \to \bbR^K$, parametrized by $\theta$,
which maps an input data point $x_i \in \bbR^m $ to the $K$-dimensional space of spectral embedding coordinates;
Second, 
an additional linear layer $\Xi \in \bbR^{K \times K}$, mapping from $\bbR^K$ to $\bbR^K$ and parametrized by the matrix $\Xi$, such that the composed mapping $\Phi_\theta(x_i) \Xi$ approximates the spectral embedding (eigenvectors), i.e., 
\begin{equation}
    \Psi(x_i) \approx \Phi_\theta(x_i) \Xi.
\end{equation}
The linear layer parametrized by $\Xi$ is called the ``orthogonalization layer''. The neural network embedding of the entire dataset $X$ is then represented as
\begin{equation} \label{eq:nnY}
    Y(X) =
    \begin{pmatrix}
        (\Phi_\theta(x_1) \Xi)^\top
        & (\Phi_\theta(x_2) \Xi)^\top
        & \cdots
        & (\Phi_\theta(x_n) \Xi)^\top
    \end{pmatrix}^\top\in\mathbb{R}^{n\times K},
\end{equation}
where $(\Phi_\theta(x_i) \Xi)^\top$ is a column vector in $\mathbb{R}^K$ for each $i=1,\dots,n$.

{\it  Influence on the orthogonality constraint.}
We now explain a crucial difference when parametrizing $Y$ by a neural network on the maintenance of the orthogonality constraint when using mini-batch. 
Note that the network representation \eqref{eq:nnY} differs from a real array $Y$ in that all rows of $Y$ in \eqref{eq:nnY} are related via network parametrization $\theta$ and $\Xi$.
Using mini-batch, in a linear-algebra update of $Y$ in \eqref{eq:specnet1-batch}, an update on $Y_\calB$ would only change $Y_\calB$ and leave the rest entries $Y_{\calB^c}$ unchanged, where $\calB^c = X \setminus \calB$. In contrast, using the back-propagated gradient to update network parameters in  \eqref{eq:nnY}, any update on $\theta$ and $\Xi$ would change the embedding of all data points in $\calB$ and $\calB^c$.

In the training of SpecNet1, the neural network parameters $\theta$ and
$\Xi$ are updated separately in a mini-batch iteration. Specifically, at
each mini-batch iteration, SpecNet1 first computes an overlapping matrix $\begin{pmatrix}
(\Phi_\theta(x_i))^\top \Phi_\theta(x_j) \end{pmatrix}_{x_i,x_j \in
\calB}$ and its Cholesky factor $L$. Then the orthogonalization layer
parameter $\Xi$ is updated as $\Xi = \sqrt{b}(L^{-1})^\top$ to enforce the
orthogonality constraint in \eqref{eq:specnet1-batch}. In the
second step, it takes a gradient descent step or
an equivalent optimization step of $\tilde{f}_1(Y_\calB)$ with respect to
$\theta$ to update weights $\theta$ and keep the orthogonalization layer unchanged. Due to the
dependence among rows of $Y$ as in \eqref{eq:nnY}, we emphasize that such
a mini-batch iteration also changes $Y_{\calB^c}$ and the orthogonality
constraint as in \eqref{eq:specnet1} cannot be exactly maintained.

We see in Figure \ref{fig:onemoon-la-evecs}
that in the linear algebra setting, SpecNet1 achieves good convergence with both the full and neighbor evaluation schemes; however, in the neural network setting, SpecNet1 with the neighbor scheme performs significantly worse than the full scheme, as shown in Figure \ref{fig:onemoon-net-all}. This is because in the neural network, at each iteration, the orthogonalization is computed based on the update only on the neighborhood of $\calB$ for the neighbor scheme, while for the full scheme, the orthogonalization is computed on the updated output on the whole dataset $X$. On the other hand, due to memory constraints, we do not want to perform orthogonalization over all data samples at each iteration, we thus want to find a way such that we can still obtain good convergence with light memory budget. This motivates our development of SpecNet2 in this paper.

\subsection{Other related works}

The convergence of graph Laplacian eigenvectors to the limiting
eigenfunctions of the manifold Laplacian operator has been proved in a
series of works~\cite{belkin2007convergence, von2008consistency,
burago2014graph, singer2016spectral} and recently in
\cite{trillos2020error, calder2019improved, dunson2021spectral,
calder2020lipschitz, cheng2021eigen}. The result shows that in the
i.i.d. manifold data setting, the empirical graph Laplacian eigenvectors
approximate the eigenfunctions evaluated on the data points in the large
sample limit, where the kernel bandwidth is properly chosen to decrease to
zero. The robustness of spectral embedding with input data noise has been
shown in \cite{shen2020scalability}, among others. Based on these
theories, the current work utilizes the neural network to approximate
eigenfunctions so as to generalize to test data samples, due to that the
eigenfunctions are the consistent limit of the eigenvectors of properly
constructed graph Laplacian. 

For neural network methods to obtain dimension-reduced embedding, neural
network embedding guided by pairwise relation was explored earlier in
SiameseNet~\cite{hadsell2006dimensionality}, where the training objective
is heuristic. Using kernel affinity and spectral embedding to overcome the
topological constraint in neural network embedding has been explored in
\cite{mishne2019diffusion}, and under the Variational Auto-encoder
framework in \cite{li2020variational}. The current paper differs from
these auto-encoder methods in that SpecNet2, same as in SpecNet1, outputs
a dimension-reduced representation of data in a low-dimensional space,
from which the training objective is computed via the graph Laplacian
matrix.

\section{Orthogonalization-free Iterative Eigensolver}
\label{sec:lamethod}

We first investigate an orthogonalization-free iterative eigensolver,
which serves as the loss function of SpecNet2 from a linear algebra point
of view. Then three updating schemes incorporated with the coordinate
descent method are proposed and compared, which later will be turned into
the mini-batch technique in the neural network in
Section~\ref{sec:nnmethod}. Finally, the computational costs of three
updating schemes are analyzed.

\subsection{Unconstrained optimization}\label{sec:unconstrained}

Recall that the spectral embedding is by computing the leading
eigenvectors of the graph Laplacian $L_{rw} = I - D^{-1} W$. Equivalently,
it aims to find $K$ eigenvectors corresponding to the $K$ largest
eigenvalues of a generalized eigenvalue problem (GEVP) with the matrix
pencil $(W, D)$, where $K-1$ is the dimension of embedded space and $D$ is
the diagonal degree matrix associated with $W$. More explicitly, the
generalized eigenvalue problem is of the form,
\begin{equation} \label{eq:evp}
    \begin{split}
        & W U = D U \Lambda, \\
        & U^\top D U = n^2 I,
    \end{split}
\end{equation}
where $\Lambda \in \bbR^{K \times K}$ is a diagonal matrix with its
diagonal entries being the largest $K$ eigenvalues of $(W,D)$, $U \in
\bbR^{n \times K}$ is the corresponding eigenvector matrix, and $I$
denotes the identity matrix of size $K$. Throughout this paper, we assume
the eigenvalue problem \eqref{eq:evp} has a nonzero eigengap between the
$K$-th and $(K+1)$-th eigenvalues. Such a GEVP has been extensively
studied and many efficient algorithms can be found in~\cite{Golub2013} and
references therein.

In contrast to the constrained optimization problem as in SpecNet1, we
propose to solve an unconstrained optimization problem to find the
eigenpairs of \eqref{eq:evp}. Many previous works~\cite{Liu2015c,
Lei2016, Li2019c, Wang2019} adopt an unconstrained optimization problem
for solving the standard eigenvalue problem, \ie, with $D = I$ in
\eqref{eq:evp}. The optimization problem therein minimizes $\norm{W - Y
Y^\top}_{\text{F}}^2$ without any constraint on $Y$.

Extending the optimization problem to GEVP, we propose the following
unconstrained optimization problem,
\begin{equation} \label{eq:uopt}
    \min_{Y \in \bbR^{n \times K}} f_2(Y) = \frac{1}{n^2}\trace{-2 Y^\top W Y +
    \frac{1}{n^2}Y^\top D Y Y^\top D Y}.
\end{equation}
The gradient of $f_2(Y)$ with respect to $Y$ is
\begin{equation} \label{eq:gradY}
    \nabla_Y f_2(Y) = -4 \frac{W}{n}Y + 4 \frac{D}{n^3}Y Y^\top DY.
\end{equation}
Note that $\nabla_Y f_2(Y)$ in \eqref{eq:gradY} is $n$ times the actual
gradient of $f_2(Y)$ in \eqref{eq:uopt}. 
The reason of normalizing $f_2(Y)$ and $\nabla_Y f_2(Y)$ in the  way above is due to that we want to ensure an $O(1)$ limit, corresponding to the continuous limit of the eigen-problem, as $n\to\infty$. Details are explained in Appendix~\ref{appendix:scaling}.

Once we obtain the solution $\hat{Y}$ to \eqref{eq:uopt}, we can retrieve
the approximations to eigenvectors of $D^{-1}W$, denoted as $\hat{U}$, by
a single step of Rayleigh-Ritz method. More specifically, $\hat{U}$ is
calculated as $\hat{U} = \hat{Y}O$, where $O \in \bbR^{K\times K}$
satisfies
\begin{equation}\label{eq:decouple}
    \hat{Y}^{\top}W\hat{Y}O = \hat{Y}^{\top}D\hat{Y}O\hat{\Lambda},
\end{equation}
for diagonal matrix $\hat{\Lambda}$ as a refined approximation of the
eigenvalues of $(W,D)$.

Since the first trivial constant eigenvector of $D^{-1}W$ is typically not
useful, one can skip solving for that in \eqref{eq:uopt} by deflation,
\ie, replacing $W$ by $W-\eta\eta^\top$, where
$\eta=\frac{d}{\|\sqrt{d}\|_2}$, and $d\in\mathbb{R}^n$ is a column vector
with $d_i = D_{i,i}$. Since $D$ is positive-definite,
Theorem~\ref{thm:minimizers} and other analysis results still hold except
that we will skip the first trivial eigenvector in $Y^\star$, where $Y^\star$ is the minimizer of \eqref{eq:uopt}. Hence, for
the rest of the paper beside Section~\ref{sec:theorem}, we will use
\begin{equation} \label{eq:uopt2}
    \min_{Y \in \bbR^{n \times K}} f_2(Y) = \frac{1}{n^2}\trace{-2 Y^\top
    (W-\eta\eta^\top) Y + \frac{1}{n^2}Y^\top D Y Y^\top D Y}.
\end{equation}
The gradient of $f_2(Y)$ is then
\begin{equation} \label{eq:gradY2}
    \nabla f_2(Y) = -4 \frac{W-\eta\eta^\top}{n}Y
    + 4 \frac{D}{n^3}Y Y^\top DY.
\end{equation}

\subsection{Different gradient evaluation schemes}
\label{sec:minibatch}

In this subsection, we introduce efficient optimization methods of
loss \eqref{eq:uopt2} by mini-match. Mini-batch is a mandatory technique
in dealing with big datasets. Traditional mini-batch techniques randomly
sample a mini-batch of data points $\calB \subset X$, and solve the
reduced problem on $\calB$.
Due to the fact that the computational cost to
evaluate the term $Y^\top D Y$ in \eqref{eq:gradY2} is very expensive for
large $n$, we study different approximations to the gradient $\nabla
f_2(Y)$, which yields three different gradient evaluation schemes. The
visualization of these schemes in terms of the corresponding entries of
$W-\eta\eta^\top$ is shown in Figure \ref{fig:matrix}.

\begin{figure}[t]
    \centering
    \includegraphics[width=0.32\textwidth]{./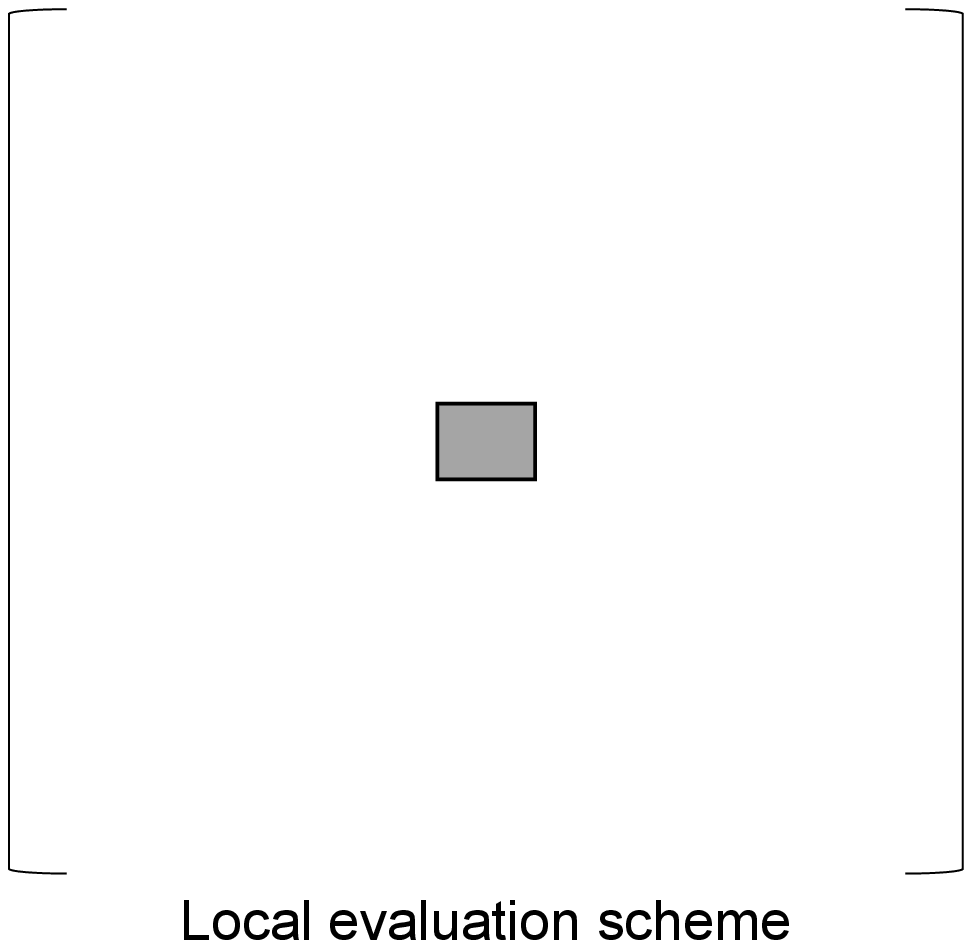}
    \includegraphics[width=0.32\textwidth]{./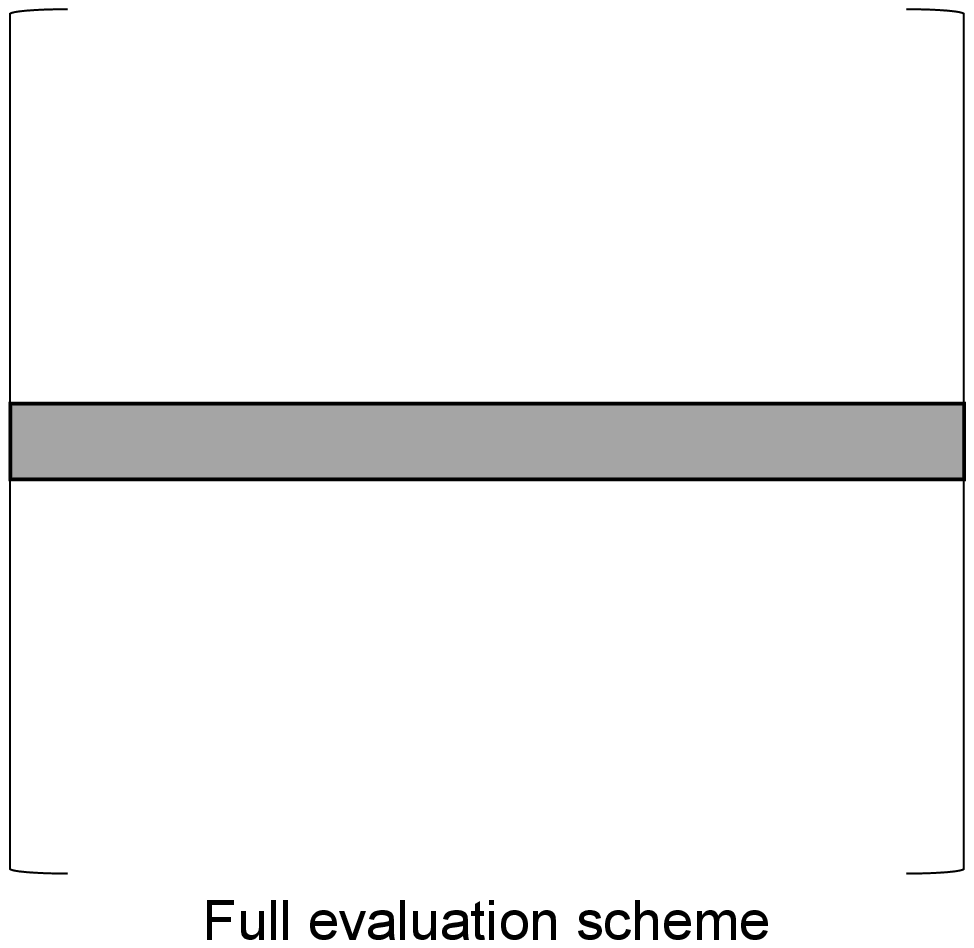}
    \includegraphics[width=0.32\textwidth]{./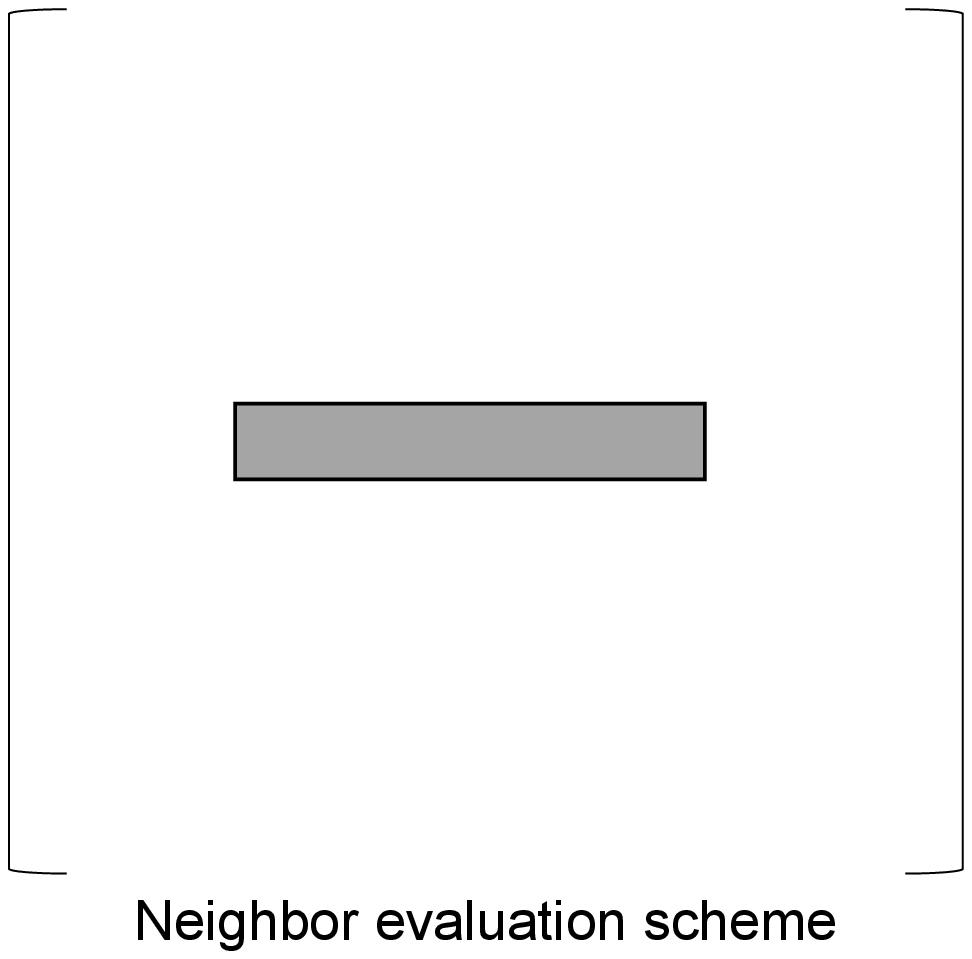}
    \caption{Entries of $W-\eta\eta^\top$ used (in gray block) in
    different gradient evaluation schemes at each batch step. See section
    \ref{sec:minibatch} for detail.}
    \label{fig:matrix}
\end{figure}

\begin{itemize}
\item 
\textbf{\emph{Local evaluation scheme:}} One can evaluate the gradient on each
mini-batch as
\begin{equation} \label{eq:gradYlocal}
    \nabla_\calB \tilde{f}_{2}(Y_{\calB}) = -\frac{4}{b}
    (W_{\calB,\calB}-\tilde{\eta}_{\calB}\tilde{\eta}_{\calB}^\top) Y_{\calB} +
    \frac{4}{b^3} \tilde{D}_{\calB} Y_{\calB} Y_{\calB}^\top \tilde{D}_{\calB}
    Y_{\calB},
\end{equation}
where $\tilde{f}_{2}(Y) = \frac{1}{b^2}\trace{-2 Y^\top
(W_{\calB,\calB}-\tilde{\eta}_{\calB}\tilde{\eta}_{\calB}^\top) Y +
\frac{1}{b^2}Y_{\calB}^\top \tilde{D}_{\calB} Y_{\calB} Y_{\calB}^\top
\tilde{D}_{\calB} Y_{\calB}}$ is the objective function on $\calB$, $b =
\abs{\calB}$ is the cardinality of $\calB$ and
$\tilde{\eta}=\frac{\tilde{d}}{\|\sqrt{\tilde{d}}\|_2}$, and
$\tilde{d}\in\mathbb{R}^{\abs{\calB}}$ is a column vector with
$\tilde{d}_i = \tilde{D}_{\calB,i,i}$, i.e., the $i$-th diagonal entry of
$\tilde{D}_\calB$. The iterative algorithm then conducts the update as,
\begin{equation}\label{eq:iterYlocal}
    Y_{\calB} = Y_{\calB} - \alpha \nabla_\calB \tilde{f}_{2}(Y_{\calB}),
\end{equation}
where $\alpha>0$ is the stepsize.

Consider an example, where data points are relatively well-separated and
the affinity matrix is very sparse. Such a mini-batch sampling is
difficult to capture the neighbor points effectively and $W_{\calB,\calB}$
for most $\calB$ is nearly diagonal. Comparing \eqref{eq:gradY2} and
\eqref{eq:gradYlocal}, \eqref{eq:gradYlocal} is not a good approximation
of \eqref{eq:gradY} unless $\calB$ is sufficiently large to capture the
asymptotic behavior of the continuous limit. Therefore, optimizing the
loss function using such a mini-batch technique requires either a big
batch size or many iterations to achieve reasonable results.

\item \textbf{\emph{Full evaluation scheme:}} We evaluate the gradient on batch
$\calB$ as
\begin{equation} \label{eq:gradYfull}
    \nabla_{\calB} f_{2}(Y) = -\frac{4}{n}
    (W_{\calB,X}-\eta_{\calB}\eta^\top) Y + \frac{4}{n^3} D_{\calB} Y_{\calB}
    Y^\top D Y,
\end{equation}
where $D_{\calB}$ is the principle submatrix of $D$ restricting to rows
and columns in $\calB$. And the update is then conducted as
\begin{equation} \label{eq:iterYfull}
    Y_{\calB} = Y_{\calB} - \alpha \nabla_{\calB} f_{2}(Y),
\end{equation}
where $\alpha>0$ is the stepsize.

This update is the block coordinate descent method applied to the proposed
optimization problem. The computational burden lies in evaluating
$\eta^\top Y$ and $Y^\top DY$ every iteration.

\item \textbf{\emph{Neighbor evaluation scheme:}} We introduce another way to
conduct mini-batch on the gradient directly, which is block coordinate
gradient descent with dynamic updating and plays an important role in the
later neural network part. Given a sampled mini-batch $\calB$, we define
the neighborhood of $\calB$ as,
\begin{equation} \label{eq:def-B-nbh}
    \calN(\calB) = \{ x_j \mid W_{i,j} \neq 0, x_i \in \calB \},
\end{equation}
and we abbreviate it as $\calN$. The gradient of batch $\calB$ is
evaluated as
\begin{equation} \label{eq:gradYneighbor}
    \nabla_{\calB} \bar{f}_{2}(Y) = -\frac{4}{n}
    W_{\calB,\calN}Y_{\calN}+\frac{4}{n}\eta_{\calB}\eta^\top Y
    + \frac{4}{n^3} D_{\calB} Y_{\calB} Y^\top D Y.
\end{equation}
Note that $\eta^\top Y = \eta_{\calB}^\top Y_\calB + \eta_{\calB^c}^\top
Y_{\calB^c}$ and $Y^\top DY= Y_{\calB}^\top D_{\calB}Y_\calB +
Y_{\calB^c}^\top D_{\calB^c}Y_{\calB^c}$, where $\calB^c = [n]\backslash
\{i: x_i \in \calB\}$. At each iteration, we only update $\eta^\top Y$ and
$Y^\top DY$ on batch $\calB$ in \eqref{eq:gradYneighbor}; that is, we
update $\eta_{\calB}^\top Y_\calB$ for $\eta^\top Y$ and $Y_{\calB}^\top
D_{\calB}Y_\calB$ for $Y^\top DY$ using $Y_\calB$ without touching the
$\calB^c$ part. The iterative algorithm then conducts the update as,
\begin{equation} \label{eq:iterYneighbor}
    Y_{\calB} = Y_{\calB} - \alpha \nabla_{\calB} \bar{f}_{2}(Y)
\end{equation}
for $\alpha$ being the stepsize.
\end{itemize}

Similarly, we can evaluate the gradient of $f_1(Y)$ using three different evaluation schemes, whose detail can be found in Appendix~\ref{subsec:f1scheme}.

\begin{figure}[t]
    \includegraphics[width=0.245\textwidth]{./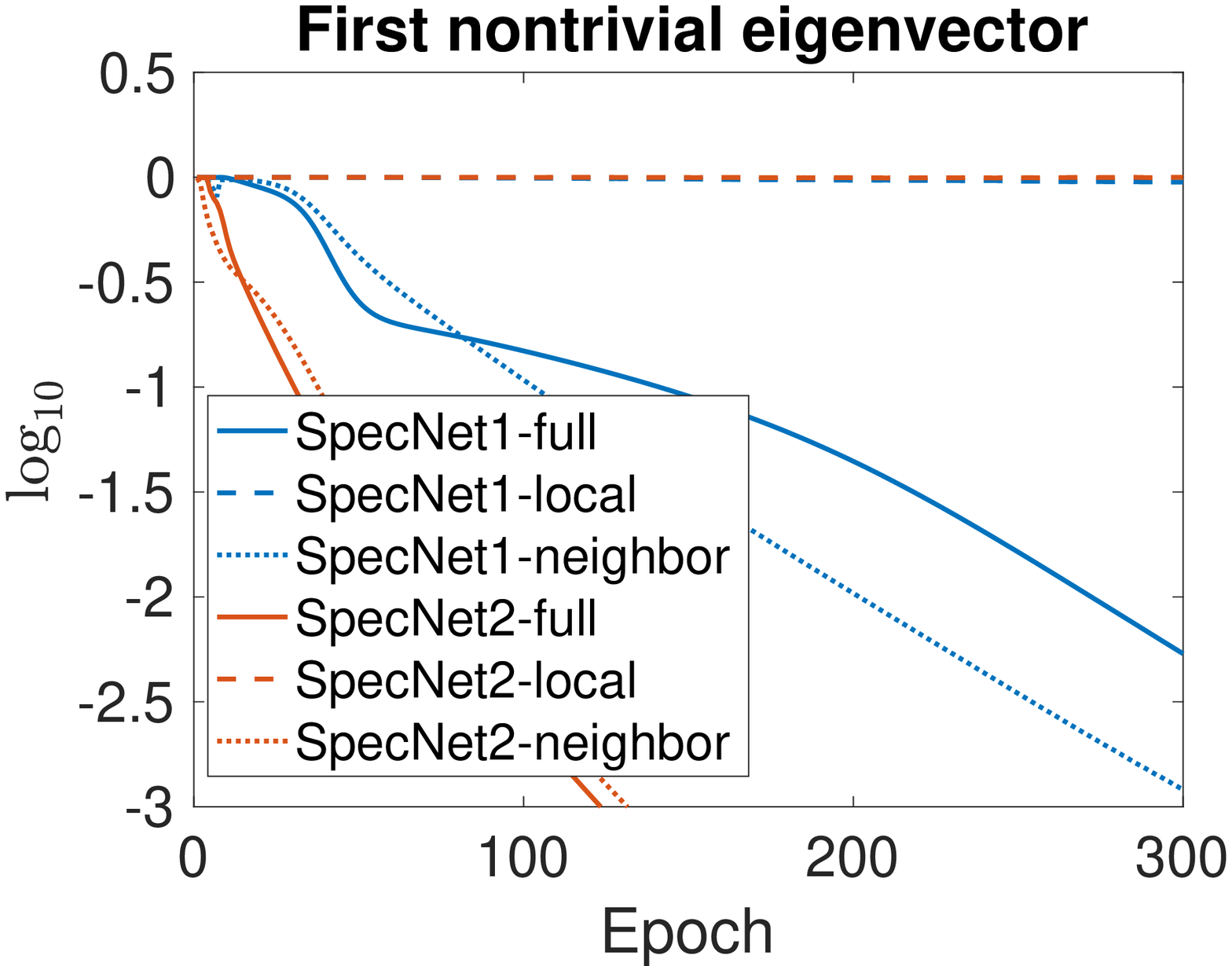}
    \includegraphics[width=0.245\textwidth]{./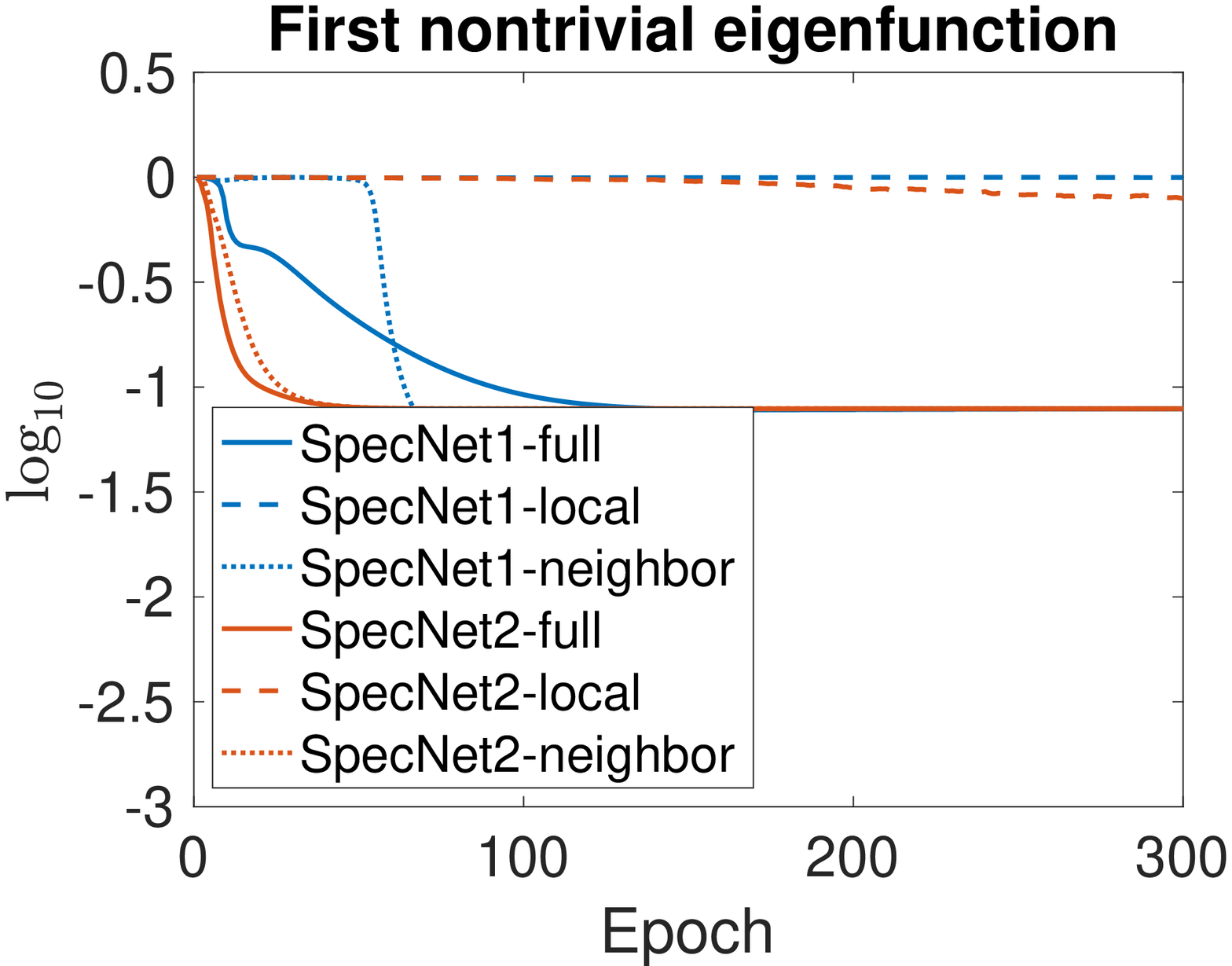}
    \includegraphics[width=0.245\textwidth]{./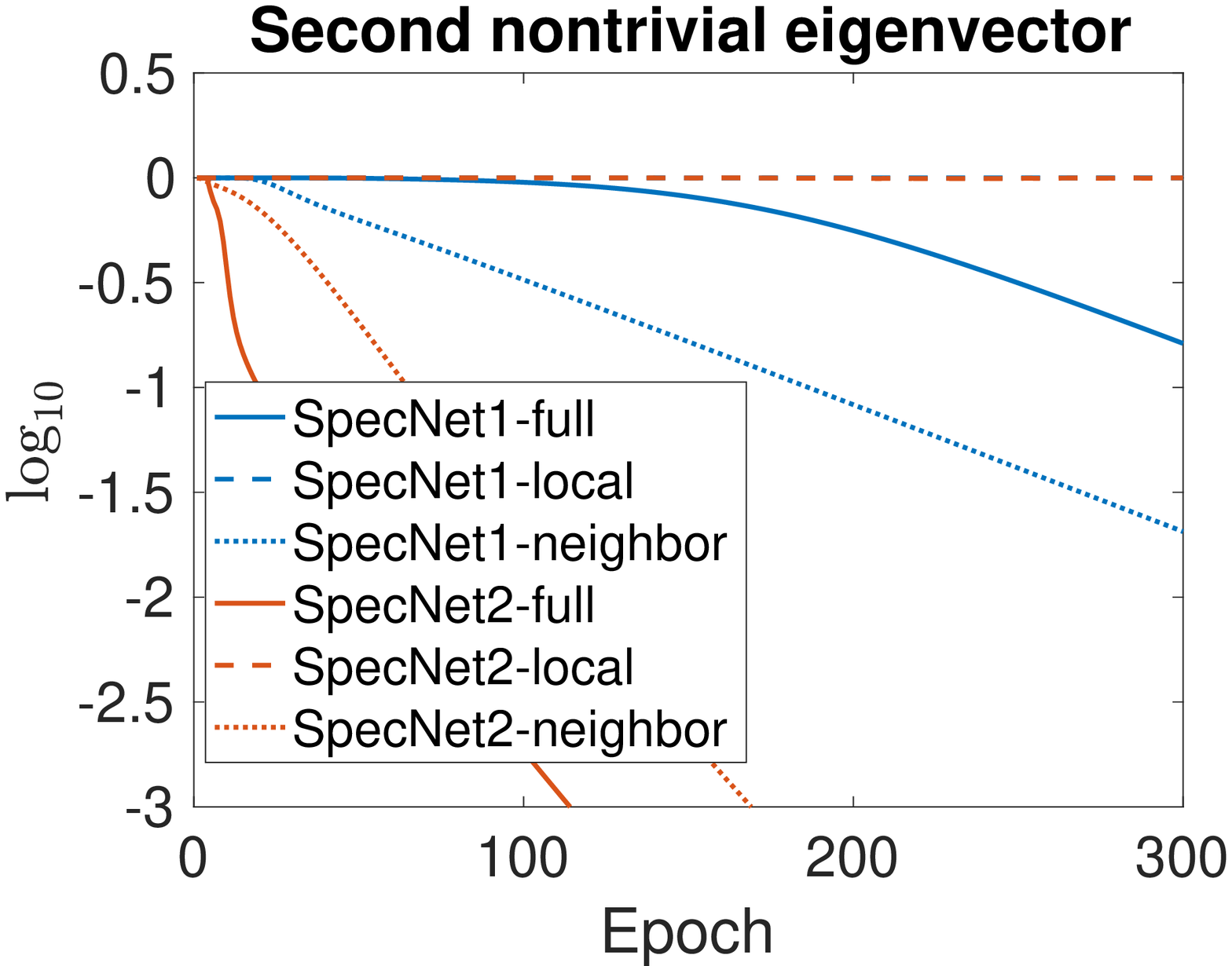}
    \includegraphics[width=0.245\textwidth]{./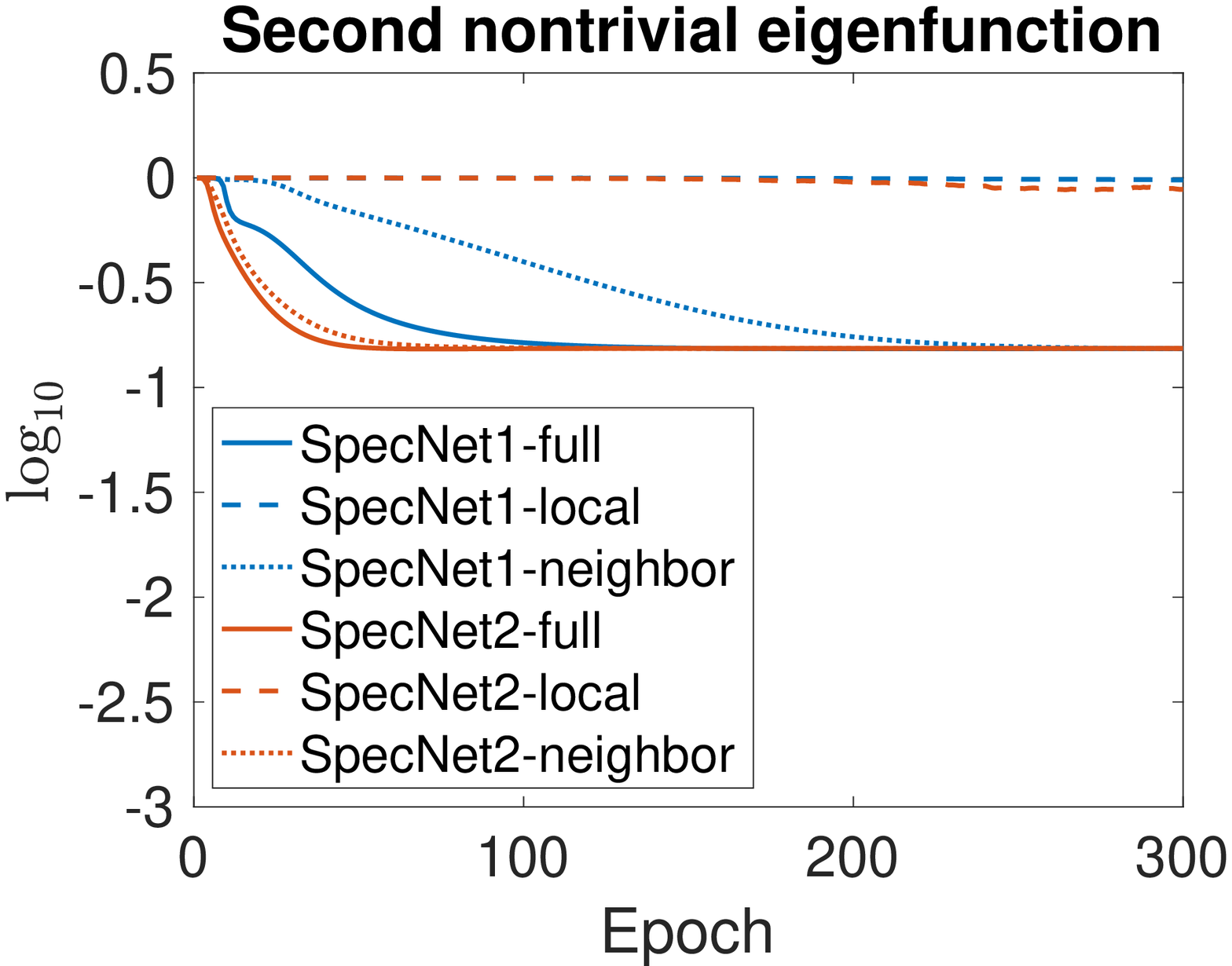}
    \caption{
    In the linear algebra problems of SpecNet1 and SpecNet2, relative errors of eigenvector (or eigenfunction, see titles of subfigures) approximations by
    different evaluation schemes on the one moon training dataset. The affinity matrix is $2000\times 2000$ and detail about how the data is generated can be found in Appendix \ref{detail:numres}. Legend refers to the update: SpecNet1-full~\eqref{eq:iterYfull1}, SpecNet1-local~\eqref{eq:iterYlocal1}, SpecNet1-neighbor~\eqref{eq:iterYneighbor1}, SpecNet2-full~\eqref{eq:iterYfull}, SpecNet2-local~\eqref{eq:iterYlocal}, SpecNet2-neighbor~\eqref{eq:iterYneighbor}. The relative error for eigenfunction approximations is defined in~\eqref{eq:relativeerror}, and the relative error for eigenvector approximation is defined as $\frac{\norm{\psi-\tilde{\psi}}_2}
    {\norm{\psi}_2}$,
    where $\psi$ is the true eigenvector of $(D,W)$ and $\tilde{\psi}$ is the corresponding column in $\hat{U}$ obtained through \eqref{eq:decouple} that approximates $\psi$ at each iteration.}
    \label{fig:onemoon-la-evecs}
\end{figure}

\begin{remark}
    In the linear algebra sense, both the gradient in
    \eqref{eq:gradYfull} and the gradient in
    \eqref{eq:gradYneighbor} are the same as the exact gradient of
    $f_2(Y)$ restricted to batch $\calB$.
    When the neural network gets
    involved, the full and neighbor gradient evaluation schemes become
    different, which will be discussed in section \ref{subsec:net2-training}. The
    gradient in \eqref{eq:gradYlocal}, however, is the
    gradient of $\tilde{f}_2$, 
    which is not
    $\nabla_\calB f_2$ unless $\calB = X$.
\end{remark}

We illustrate the convergence of three gradient evaluation schemes of $f_1$ and $f_2$ on a one moon
dataset, the visualization of which is shown in Figure~\ref{fig:moon}, and the results are shown in
Figure~\ref{fig:onemoon-la-evecs}. Here we choose constant stepsize for each method.
The formulation for computing relative errors can be found in
Appendix~\ref{detail:onemoon}.

\begin{remark}
    \label{rmk:2}
    As shown in Figure~\ref{fig:onemoon-la-evecs}, the full and neighbor
    gradient evaluation schemes of $f_1$ and $f_2$ can achieve good convergence, but the local scheme of either $f_1$ or $f_2$ does not converge. See Figure
    \ref{fig:onemoon-net-all} for the illustration in the neural network
    setting.
\end{remark}

\subsection{Computational cost of different schemes}

We study the computational cost of different gradient evaluation schemes, taking $f_2$ as an example. Consider a sparse affinity matrix with on average $s$ nonzeros on rows and
columns.

For the local evaluation scheme, the computational cost for the first part in
\eqref{eq:gradYlocal} is $O(\abs{\calB}^2 K)$ for $\abs{\calB}$ being the
cardinality of $\calB$ and the cost for the second part is
$O(\abs{\calB}K^2)$. The overall computational cost per batch step is then
$O(\abs{\calB}^2 K)$, assuming $\abs{\calB}\geq K$.

For the full evaluation scheme, the computational cost for the first part in
\eqref{eq:gradYfull} is $O(n \abs{\calB} K)$ and the cost for the second
part is $O(nK^2)$. The overall computational cost per batch step is $O( n
\abs{\calB} K)$, again assuming $\abs{\calB}\geq K$.

For the neighbor evaluation scheme, the computational cost for the first part
in \eqref{eq:gradYneighbor} is $O(s \abs{\calB} K)$, where $s$ is the number of neighbors of $\calB$. While the na{\"i}ve
computation of the third part in \eqref{eq:gradYneighbor} costs $O(nK^2)$
operations, same as the full update scheme. When dynamic updating is taken
into consideration at each step, only $Y$ restricted to $\calB$ is
updated, and the matrix $Y^\top D Y$ can be efficiently updated in
$O(\abs{\calB}K^2)$ operations. Hence we can dynamically update the matrix
$Y^\top D Y$ throughout iterations and the computation of the third part
in \eqref{eq:gradYneighbor} is reduced to $O(\abs{\calB}K^2)$. Similarly,
we can dynamically update the vector $\eta^{\top}Y$, and only those
restricted to $\calB$ is updated, and the second term can be updated in
$O(\abs{\calB}K)$ operations. The overall computational cost per batch
step is then $O(s\abs{\calB}K)$, assuming $s\geq K$. 

\section{Neural network parametrization and training}
\label{sec:nnmethod}
Inspired by the convergence results by two gradient evaluation schemes \eqref{eq:gradYfull} and \eqref{eq:gradYneighbor} as shown in Figure \ref{fig:onemoon-la-evecs}  as well as the theoretical guarantee for their convergence that we will prove later in Section \ref{sec:theorem}, we propose a neural network that can incorporate the linear algebra formulations in Section \ref{sec:minibatch}.

\subsection{Network parametrization of eigenfunctions}

In section \ref{sec:limiting} we mention that eigenvectors of the
graph Laplacian matrix can be viewed as the restriction of underlying
eigenfunctions of a limiting operator on the dataset $X$. \cite{shaham2018spectralnet} suggests we approximate those eigenfunctions by a neural
network. In this paper, we use a feedforward fully-connected neural network, and it can be extended to other types of neural networks, for example, convolutional neural network. Suppose the neural network computes a map $G_\theta:
\mathbb{R}^m\to\mathbb{R}^K$, where $\theta$ denotes the network weights.
Let $Y = G_\theta(X)$, so that each coordinate of $G_\theta$, $(G_\theta)_i$, $i=1,\dots,K$ is an approximation to an eigenfunction, and each column of $G_{\theta}(X)$ approximates an eigenvector of the graph Laplacian matrix. Our goal is to find a good approximation by training the neural network, SpecNet2, with the orthogonalization-free objective function $L(\theta) = f_2(Y)$.

\subsection{Network Training for SpecNet2}\label{subsec:net2-training}

In this subsection, we introduce the training of SpecNet2; that is, how to update $\theta$ to minimize $L(\theta) = f_2(Y)$. We have proposed three different gradient evaluation schemes in section \ref{sec:minibatch} to calculate the gradients in the block coordinate descent method to minimize $f_2(Y)$ in the linear algebra setup. In the neural network setting, note that $\frac{\partial L(\theta)}{\partial\theta} = \nabla_Y f_2(Y)\cdot\frac{\partial G_{\theta}(X)}{\partial\theta}$, we can also incorporate these gradient evaluation schemes to evaluate $\nabla_Y f_2(Y)$ in the training of a neural network. Let $\calB\subset X$ be the randomly sampled mini-batch, and
$\calN$ be the neighborhood of $\calB$. Note that unlike in the linear algebra setup where we can only update $Y$ on $\calB$, we are updating $\theta$ for the neural network, such that once $\theta$ is updated, not only $G_{\theta}(\calB)$ is different but also $G_{\theta}(\calB^c)$. We follow the notations as in
section \ref{sec:minibatch}, and we have different gradient evaluation
schemes for SpecNet2 as follows:

\textbf{\emph{Local evaluation scheme:}} At each batch step, we can
compute the neural network mapping of batch $\calB$ as $Y_\calB =
G_{\theta}(\calB)$, so that we can obtain $\nabla_\calB
\tilde{f}_{2}(Y_\calB)$ by plugging $Y_\calB$ into
\eqref{eq:gradYlocal}. Then we want to minimize
$\trace{Y_\calB(\theta)^\top \nabla_\calB \tilde{f}_{2}(Y_\calB)}$ and
update $\theta$ using the gradient of $\trace{Y_\calB(\theta)^\top
\nabla_\calB \tilde{f}_{2}(Y_\calB)}$ with respect to $\theta$ through
the chain rule, where inside the trace we write the first term $Y_\calB$
as $Y_\calB(\theta)$ to emphasize it is a function of $\theta$; and the
second term $\nabla_\calB \tilde{f}_{2}(Y_\calB)$ is detached and viewed
as constant. 

\textbf{\emph{Full evaluation scheme:}} At each batch step, we can compute
$\nabla_\calB f_{2}(Y_\calB)$ by plugging  $Y_\calB$ and $Y$ into
\eqref{eq:gradYfull}. Then we want to minimize
$\trace{Y_\calB(\theta)^\top \nabla_\calB f_{2}(Y_\calB)}$ and update
$\theta$ using the gradient of $\trace{Y_\calB(\theta)^\top \nabla_\calB
f_{2}(Y_\calB)}$ through the chain rule. And similarly, inside the trace
we only view $Y_\calB(\theta)$ as a function of $\theta$ but $\nabla_\calB
f_{2}(Y_\calB)$ as constant when computing the gradient.

\textbf{\emph{Neighbor evaluation scheme:}} We keep a record of two
matrices $(YDY)_{\star}$ and $Y_{0}$ throughout the training, where they
are initialized at the first iteration: $(YDY)_{\star} = Y^\top DY$ and
$Y_{0} = Y$, and detach both of them. At each batch step, we compute
$Y_\calN = G_{\theta}(\calN)$. Then we update $(YDY)_{\star} =
(YDY)_{\star} - Y_{0}(\calN)^\top D_{\calN}Y_{0}(\calN) + Y_\calN^\top
D_{\calN}Y_\calN$ followed by an update of $Y_{0}$ on $\calN$ as
$Y_{0}(\calN) = Y_\calN$. Both matrices are again detached. The gradient of
$f_2(Y)$ on $\calB$ is then evaluated as
\begin{equation*}
    \nabla_\calB \bar{f}_{2}(Y_\calB) = -\frac{4}{n}
    W_{\calB,\calN}Y_{\calN}+\frac{4}{n}\eta_{\calB}\eta^\top
    Y_{0} + \frac{4}{n^3}
    D_{\calB} Y_{\calB} (YDY)_{\star}.
\end{equation*}
Then we minimize $\trace{Y_\calB(\theta)^\top \nabla_\calB
\bar{f}_{2}(Y_\calB)}$ and update $\theta$ by computing the gradient of $\trace{Y_\calB(\theta)^\top \nabla_\calB \bar{f}_{2}(Y_\calB))}$ by the
chain rule. Similarly, inside the trace we only view $Y_\calB(\theta)$ as
a function of $\theta$ but $\nabla_\calB \bar{f}_{2}(Y_\calB)$ as constant
when computing the gradient.

Details about the network training for SpecNet1 can be found in Appendix~\ref{subsec:net1-training}. With those different learning objective functions from different evaluation schemes, we can choose an optimizer, for example, SGD or Adam, with some user-selected learning rate to update the network weights $\theta$. Detail for the choice in our experiments is introduced in Appendix~\ref{detail:numres}.

\begin{remark}\label{rmk:3} Note that while $\nabla_\calB
\bar{f}_{2}(Y_\calB)$ in \eqref{eq:gradYneighbor} is the exact gradient of
$f_2(Y)$ on $\calB$, the gradient $\nabla_\calB \bar{f}_{2}(Y_\calB)$ we
evaluate here in the neural network setting is no longer the exact
gradient of $f_2(Y)$ of $Y$ on $\calB$, but only an approximation.
\end{remark}

\section{Theoretical Analysis}
\label{sec:theorem}

In this section, we provide a theoretical guarantee for the performance of
SpecNet2 by analyzing the optimization iterations to minimize
\eqref{eq:uopt}. In Section~\ref{sec:energylandscape}, we discuss the
energy landscape of \eqref{eq:uopt}. Through our analysis, we show that
\eqref{eq:uopt} is a nonconvex function whose local minima are global
minima. In addition, we also give the explicit expression of the global
minima of \eqref{eq:uopt}, which span the same space as that of the
leading eigenvectors of matrix pencil $(W, D)$, assuming $D$ is positive definite. All analysis in this
section holds for general symmetric matrix $W$ and diagonal positive
definite matrix $D$ such that $(W, D)$ has at least $K$ positive
eigenvalues. Hence our results apply to deflated matrix pencil $(W - \eta
\eta^\top, D)$ as well. In Section~\ref{sec:globalconvergence}, based on
the energy landscape, we prove the global convergence of the gradient
descent method with full and neighbor evaluation schemes for all initial
points in a giant ball except a measure-zero set.

\subsection{Analysis of energy landscape}
\label{sec:energylandscape}

The explicit form of the local minimizers of \eqref{eq:uopt} are
explicitly given in Theorem~\ref{thm:minimizers}.
\begin{theorem}
    \label{thm:minimizers}
    The local minimizers of \eqref{eq:uopt} are of the form,
    \begin{equation} \label{eq:minimizerY}
        Y^\star = U \Lambda^{\frac{1}{2}} Q,
    \end{equation}
    where $U$ and $\Lambda$ are defined as in \eqref{eq:evp}, and $Q \in
    \bbR^{K \times K}$ denotes an arbitrary orthogonal matrix.
\end{theorem}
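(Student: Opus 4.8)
The plan is to reduce the generalized eigenvalue problem to a standard symmetric one, for which the energy landscape is classical, and then to pin down the explicit form by a first- and second-order analysis of the critical points. Since $D$ is diagonal and positive definite, set $A = D^{-1/2} W D^{-1/2}$, which is symmetric, and change variables $Z = D^{1/2} Y$. This is a linear bijection on $\bbR^{n \times K}$, so it preserves the local/global minimizer structure. Under it, $Y^\top W Y = Z^\top A Z$ and $Y^\top D Y = Z^\top Z$, whence $n^2 f_2(Y) = \trace{-2 Z^\top A Z + \frac{1}{n^2}(Z^\top Z)^2}$. Completing the square yields $\trace{-2 Z^\top A Z + \frac{1}{n^2}(Z^\top Z)^2} = \norm{n A - \frac{1}{n} Z Z^\top}_{\text{F}}^2 - n^2\norm{A}_{\text{F}}^2$, so minimizing $f_2$ is equivalent to the symmetric low-rank approximation problem $\min_Z \norm{n A - \frac{1}{n} Z Z^\top}_{\text{F}}^2$. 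The generalized eigenpairs in \eqref{eq:evp} transfer cleanly: with $V = D^{1/2} U$ we get $A V = V \Lambda$ and $V^\top V = n^2 I$, so the eigenvalues of $A$ coincide with the generalized eigenvalues of $(W,D)$.

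Next I would characterize all critical points. Setting the gradient (the analogue of \eqref{eq:gradY}) to zero gives $A Z = \frac{1}{n^2} Z (Z^\top Z)$, which shows $A$ maps $\mathrm{range}(Z)$ into itself, i.e. the column space of $Z$ is an $A$-invariant subspace. Writing a thin SVD $Z = P \Sigma R^\top$ with $P$ having orthonormal columns, $\Sigma$ diagonal with nonnegative entries, and $R \in \bbR^{K \times K}$ orthogonal, the critical-point equation reduces to $A P \Sigma = \frac{1}{n^2} P \Sigma^3$. For each index with $\sigma_i > 0$ this forces $A p_i = \frac{\sigma_i^2}{n^2} p_i$, so $p_i$ is a unit eigenvector of $A$ with eigenvalue $\mu_i = \sigma_i^2/n^2 \ge 0$. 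Hence every critical point selects a set of (at most $K$) eigenvectors of $A$ with nonnegative eigenvalues, the scaling being fixed by $\Sigma^2 = n^2 \Lambda_{\mathrm{sel}}$ and the right orthogonal factor $R$ being free. Substituting back, a critical point selecting eigenvalues $\{\mu_{i_\ell}\}$ has objective value $-n^2 \sum_{\ell} \mu_{i_\ell}^2$ (equivalently, residual $n^2 \sum_{\text{unselected}} \mu^2$), and rank-deficient points, including $Z = 0$, correspond to selecting fewer than $K$ eigendirections.

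Finally I would show that only the top-$K$ selection is a local minimum. The value $-n^2 \sum_\ell \mu_{i_\ell}^2$ is minimized over admissible selections by the $K$ largest (nonnegative) eigenvalues, which exist and are positive by the eigengap/positivity assumption; this yields the candidate $Z^\star = V \Lambda^{1/2} Q$, and translating back via $Y = D^{-1/2} Z$ gives exactly $Y^\star = U \Lambda^{1/2} Q$ as in \eqref{eq:minimizerY}. To exclude every other critical point as a local minimizer, I would construct an explicit descent direction: whenever an unselected eigenvalue $\mu_j$ exceeds a selected $\mu_{i_\ell}$ (or whenever $Z$ is rank-deficient, so a null direction can be rotated onto an unused positive eigenvector), perturb $Z$ by rotating the $i_\ell$-th singular direction toward the $j$-th eigenvector of $A$ and verify that the second-order term of $f_2$ along this path is strictly negative. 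This certifies such points are strict saddles, so every local minimizer must be the top-$K$ selection and is thus global. I expect the main obstacle to be this second-order step: the Hessian of the quartic term $\frac{1}{n^2}(Z^\top Z)^2$ must be evaluated carefully along the chosen direction, and the case of repeated eigenvalues requires handling the rotational freedom within an eigenspace, which is precisely the arbitrary orthogonal factor $Q$ that \eqref{eq:minimizerY} absorbs.
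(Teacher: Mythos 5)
Your proposal is correct, and it shares the paper's overall skeleton---pass to $D^{1/2}Y$ and the symmetric matrix $D^{-1/2}WD^{-1/2}$, classify all stationary points as selections of at most $K$ eigendirections times an orthogonal factor, then rule out every non-top-$K$ selection by a second-order argument---but the technical device is genuinely different. The paper never completes the square: it writes a stationary point as $D^{1/2}Y = V_r A$ with $A$ of full row rank, deduces $\Lambda_r = AA^\top$ and hence $A = \Lambda_r^{1/2}Q$ (instead of your SVD parametrization), and then evaluates the Hessian bilinear form of $f_2$ directly at hand-picked directions, $S_0 = U_i Q_\perp$ when $r<K$ and $S_0 = \begin{bmatrix} U_i & 0 & \cdots & 0\end{bmatrix}Q$ when $r=K$ but a non-leading eigenvector is selected, obtaining the negative values $-4\lambda_i$ and $-4\lambda_i + 4\lambda_{K+1}$, the latter negative by the eigengap assumption. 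Your reformulation as symmetric low-rank approximation, $\min_Z \norm{nA - \frac{1}{n}ZZ^\top}_{\text{F}}^2$, buys what the paper's route leaves implicit: coercivity of $f_2$ (hence existence of global minimizers, which the paper only argues inside the proof of Corollary~\ref{cor:minimizers}), the critical value $-n^2\sum_{\mathrm{sel}}\mu^2$ read off instantly, and a saddle verification that never requires writing down the Hessian of the quartic term. The step you flag as the main obstacle does go through cleanly: for a rank-deficient critical point, taking $S = \hat{v}_j q^\top$ with $q$ orthogonal to the row space of $Z$ makes the cross term $ZS^\top$ vanish, so the residual norm changes by $-2\mu_j t^2 + t^4/n^2 < 0$ for small $t$; for the swap direction the change is $2t^2(\mu_{i_\ell}-\mu_j) + t^4/n^2 < 0$. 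These are exactly the paper's directions $S_0$ rewritten in $Z$-coordinates, so the two proofs agree on which perturbations kill the spurious critical points; yours simply organizes the computation around the residual matrix rather than the Hessian, which is arguably cleaner and connects to classical Eckart--Young-type landscape results. One small point to make explicit when you write it up: eigendirections with $\mu_i = 0$ have $\sigma_i = 0$ and contribute nothing to $Z$, so ``selections'' consist of eigenvectors with strictly positive eigenvalues---the same restriction the paper makes silently by writing $\Lambda^{1/2}$.
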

The proof of Theorem~\ref{thm:minimizers} can be found in
Appendix~\ref{app:thmproof}. Through the analysis, we find that $f_2(Y)$
is nonconvex and all local minimizers span the same space as the
eigenvectors of $(W, D)$ associated with the $K$ largest eigenvalues.

\begin{corollary}
    \label{cor:minimizers}
    All local minimizers of \eqref{eq:uopt} are global minimizers.
\end{corollary}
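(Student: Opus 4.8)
The plan is to read off the conclusion almost directly from Theorem~\ref{thm:minimizers}. Since that theorem already characterizes every local minimizer as $Y^\star = U\Lambda^{\frac{1}{2}}Q$ for an arbitrary orthogonal $Q$, it suffices to establish two facts: (i) the objective value $f_2(Y^\star)$ is the same for all such minimizers (in particular, independent of $Q$), and (ii) a global minimizer of \eqref{eq:uopt} actually exists. Granting both, the global minimizer is itself a local minimizer and hence carries the common value shared by all local minimizers, which forces every local minimizer to attain the global minimum.

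For step (i) I would compute $f_2(U\Lambda^{\frac{1}{2}}Q)$ explicitly using the defining relations of the GEVP~\eqref{eq:evp}, namely $WU = DU\Lambda$ and $U^\top D U = n^2 I$. These give $U^\top W U = U^\top D U\,\Lambda = n^2\Lambda$, so that $Y^{\star\top} W Y^\star = n^2 Q^\top \Lambda^2 Q$ and $Y^{\star\top} D Y^\star = n^2 Q^\top \Lambda Q$. Substituting into \eqref{eq:uopt} and invoking the orthogonal invariance of the trace, $\trace{Q^\top \Lambda^2 Q} = \trace{\Lambda^2}$, yields $f_2(Y^\star) = \frac{1}{n^2}\bigl(-2n^2\trace{\Lambda^2} + \frac{1}{n^2}\,n^4\trace{\Lambda^2}\bigr) = -\trace{\Lambda^2}$, which is manifestly independent of $Q$. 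Thus all local minimizers produce the identical objective value.

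For step (ii) I would argue existence of a global minimizer by coercivity. The quartic term equals $\frac{1}{n^4}\trace{(Y^\top D Y)^2} = \frac{1}{n^4}\norm{Y^\top D Y}_{\mathrm{F}}^2 \ge 0$, and since $D$ is positive definite it grows like $\|Y\|^4$, dominating the quadratic term $-\frac{2}{n^2}\trace{Y^\top W Y}$ as $\|Y\|\to\infty$. Hence $f_2$ is continuous and coercive on $\bbR^{n\times K}$, so it attains its infimum, and any global minimizer is in particular a local minimizer.

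Combining the two steps closes the argument: the global minimizer is one of the points $U\Lambda^{\frac{1}{2}}Q$ of Theorem~\ref{thm:minimizers}, so the common value $-\trace{\Lambda^2}$ attained by every local minimizer coincides with the global minimum; therefore each local minimizer is a global minimizer. The only mildly delicate point is the existence/coercivity step, but with $D$ positive definite the quartic penalty clearly dominates and this is routine; the substantive content is the invariance computation, which the GEVP relations render transparent.
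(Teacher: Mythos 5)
Your proposal is correct and follows essentially the same route as the paper's own proof: establish coercivity of $f_2$ (the quartic term with positive definite $D$ dominates, so a global minimizer exists and is among the local minimizers), then observe that every local minimizer from Theorem~\ref{thm:minimizers} attains the common value $-\trace{\Lambda^2} = -\sum_{i=1}^K \lambda_i^2$. The only difference is that you carry out the substitution $f_2(U\Lambda^{\frac{1}{2}}Q)$ explicitly via the GEVP relations, which the paper states without computation.
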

The proof of Corollary~\ref{cor:minimizers} can be found in Appendix~\ref{proof:cor5}. According to Theorem~\ref{thm:minimizers} and
Corollary~\ref{cor:minimizers}, the unconstrained optimization problem
\eqref{eq:uopt} does not have any spurious local minima and all local
minimizers are global minimizers. Furthermore, the target of our problem,
leading $K$ eigenpairs of $(W,D)$, can be extracted from the global
minimizers through a single step Rayleigh-Ritz method, as mentioned in
\eqref{eq:decouple}.

\subsection{Global convergence}
\label{sec:globalconvergence}

In this section, we prove the global convergence for the iterative
schemes, \eqref{eq:iterYfull} and \eqref{eq:iterYneighbor}, with full and
neighbor gradient evaluation schemes, respectively. The energy landscape
analysis in the previous section already hints at the global convergence
from the gradient flow perspective. Here, we give a rigorous statement and
its proof for the global convergence of our iterative scheme
\eqref{eq:iterYfull}, which can be applied to \eqref{eq:iterYneighbor}
directly.

The difficulties of the convergence analysis come from two aspects. First,
our objective function $f_2(Y)$ is a fourth-order polynomial of $Y$, and
its Hessian is unbounded from above for $Y \in \bbR^{n \times K}$ and so
is the Lipschitz constant. Second, the iterative scheme updates $Y$ on
different batches for different iterations. Hence the iterative mapping is
not fixed across iterations.

We first prove a few lemmas to overcome these difficulties and then
conclude the global convergence in Theorem~\ref{thm:convergence}. In
Lemma~\ref{lem:bound}, we define a giant ball with radius $R$ and prove
that our iterative scheme never leaves the ball. Given the bounded ball,
we then have a bounded Lipschitz constant being defined in
Lemma~\ref{lem:Lip} and a nonempty set for stepsize $\alpha$.
Lemma~\ref{lem:grad} shows that our iterative scheme converges to
first-order points of $f_2(Y)$. Combining these lemmas together with
results in \cite{lee2019first}, we prove the global convergence.

We define a set of notations to simplify the statements of lemmas and
theorem. The mini-batch technique partitions the dataset $X$ into disjoint
$b$ batches. We denote the index set of mini-batch partitions as $\{S_1,
S_2,\dots, S_b\}$ such that $S_p \cap S_q = \emptyset$ for $p \neq q$ and
$\cup_p S_p = [n]$. For an index $i$, $i^c$ denotes the complement
indices, \ie, $i^c = [n]\backslash \{i\}$. $D_i$ denotes the $i$-th
diagonal entry of $D$ and $Y_i$ denotes the $i$-th row of $Y$.
$Y^{(\ell)}$ denotes the iteration variable at $\ell$-th iteration.
Further, we define two constants and a function depending on entries of
$W$ and $D$,
\begin{equation*}
   M_1 := \max_i \frac{W_{i,i} + \sqrt{W_{i,i}^2 
    + D_i \norm{W_{i,i^c} D_{i^c}^{-\frac{1}{2}}}_2^2
    + \frac{D_i}{2}}}{2D_i},\,
    M_2 := \max_i \frac{W_{i,i}^2}{4D_i}
    + \frac{\norm{W_{i,i^c}D_{i^c}^{-\frac{1}{2}}}_2^2}{4},
\end{equation*}
and
$M(R) := 3\left(\max_i W_{i,i}^2 R^2 + \max_i D_i^2\cdot n^2K^2R^6
    + \max_i D_i \norm{W_{i,i^c} D_{i^c}^{-\frac{1}{2}}}_2^2
    \cdot nR^2\right)$,
where the $R$ will be the radius of the giant ball.

\begin{lemma}
    \label{lem:bound}
    Let $R$ be a constant such that $R \geq 2\sqrt{M_1}$ and $\alpha$ be
    the stepsize such that 
    \[
    \alpha < \min\{\frac{-2M_2 +
    \sqrt{4M_2^2+3M(R)R^2}}{8M(R)}, \frac{1}{16M(R)}\}.
    \]
    Then for any $Y^{(\ell)} \in W_0 = \{ Y \in \bbR^{n\times K} : \max_i
    \norm{D_i^{\frac{1}{2}}Y_i}_2 < R \}$, we have
     $Y^{(\ell+1)} \in W_0$.
\end{lemma}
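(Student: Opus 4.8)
The plan is to prove the claim by induction on $\ell$, reducing the vector statement to a bound on a single updated row. The key observations are: (i) the full-scheme update \eqref{eq:iterYfull} modifies only the rows $Y_i$ with $i\in\calB$, leaving $Y_{\calB^c}$ (hence $\max_{i\in\calB^c}\norm{D_i^{1/2}Y_i}_2$) untouched; and (ii) as observed in the remark that the full and neighbor gradients coincide with the exact gradient of $f_2$ restricted to $\calB$, for $i\in\calB$ the row $(\nabla_\calB f_2(Y))_i$ equals the exact gradient row $(\nabla_Y f_2(Y))_i$ evaluated at the current iterate $Y^{(\ell)}$. Thus it suffices to fix one $i\in\calB$ and, writing $z_i := D_i^{1/2}Y_i^{(\ell)}$ and $G := (Y^{(\ell)})^\top D Y^{(\ell)}$, show that the updated row $z_i^{+} := D_i^{1/2}Y_i^{(\ell+1)} = z_i - \alpha\, D_i^{1/2}(\nabla_Y f_2(Y^{(\ell)}))_i$ satisfies $\norm{z_i^+}_2 < R$, using the inductive hypothesis $\norm{z_j}_2 < R$ for every $j$.

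First I would change variables so that the quartic structure becomes transparent. A direct computation gives $D_i^{1/2}(\nabla_Y f_2)_i = -\tfrac4n W_{i,i}z_i - \tfrac4n\sum_{j\ne i}W_{i,j}D_i^{1/2}D_j^{-1/2}z_j + \tfrac{4D_i}{n^3}z_i G$, and I expand $\norm{z_i^+}_2^2 = \norm{z_i}_2^2 - 2\alpha\langle z_i,\,D_i^{1/2}(\nabla_Y f_2)_i\rangle + \alpha^2\norm{D_i^{1/2}(\nabla_Y f_2)_i}_2^2$. The decisive structural fact is that the quartic term supplies a nonnegative ``restoring'' contribution: since $G = \sum_k z_k^\top z_k \succeq 0$, one has $z_i G z_i^\top = \sum_k\langle z_i,z_k\rangle^2 \ge \norm{z_i}_2^4$. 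The off-diagonal linear piece is then controlled by Cauchy--Schwarz so as to tie it to the same quantity, $\bigl|\sum_{j\ne i}W_{i,j}D_i^{1/2}D_j^{-1/2}\langle z_i,z_j\rangle\bigr| \le D_i^{1/2}\norm{W_{i,i^c}D_{i^c}^{-1/2}}_2\,\sqrt{z_iGz_i^\top}$, while the diagonal piece contributes at most $\tfrac4n W_{i,i}\norm{z_i}_2^2$. Completing the square in $\sqrt{z_iGz_i^\top}$ and collecting terms yields a lower bound on $\langle z_i,\,D_i^{1/2}(\nabla_Y f_2)_i\rangle$ whose sign changes exactly at the root of the quadratic defining $M_1$; the hypothesis $R \ge 2\sqrt{M_1}$ is precisely what forces this first-order inner product to be nonnegative once $\norm{z_i}_2$ is near the boundary, so that the first-order part of the update points inward there, while $M_2$ caps the largest adverse first-order contribution in the regime where $\norm{z_i}_2$ is small.

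Next I would bound the second-order term. Using the inductive hypothesis $\norm{z_j}_2<R$ for all $j$ (so that $\mathrm{tr}(G)=\sum_k\norm{z_k}_2^2 < nR^2$ and $\norm{z_iG}_2$ is controlled), together with the elementary inequality $(\sum_{t=1}^3 a_t)^2 \le 3\sum_{t=1}^3 a_t^2$ applied to the three summands of $D_i^{1/2}(\nabla_Y f_2)_i$, I obtain $\norm{D_i^{1/2}(\nabla_Y f_2)_i}_2^2 \le M(R)$, whose three terms correspond exactly to the diagonal, quartic, and off-diagonal summands. Substituting the first- and second-order estimates into the expansion of $\norm{z_i^+}_2^2$ reduces the goal $\norm{z_i^+}_2^2 < R^2$ to a quadratic inequality in the stepsize of the form $4M(R)\alpha^2 + 2M_2\alpha < \tfrac{3}{16}R^2$ (after using $\norm{z_i}_2^2<R^2$), whose positive root is exactly the first term in the stated minimum for $\alpha$; the auxiliary bound $\alpha < 1/(16M(R))$ handles the regime where $\norm{z_i}_2$ is bounded away from $R$ and the first-order term may be mildly expanding. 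Combining the near-boundary and interior cases closes the induction.

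The main obstacle I anticipate is the bookkeeping that makes the first- and second-order contributions cooperate uniformly over the whole ball. The favorable (inward) first-order term is only available when $\norm{z_i}_2$ is close to $R$ --- captured by $M_1$ --- whereas the $\alpha^2\norm{D_i^{1/2}(\nabla_Y f_2)_i}_2^2$ term, bounded by $M(R)$, is always present and is precisely what could push a near-boundary row back outside; reconciling these through a single quadratic-in-$\alpha$ condition (with the interior regime, where the first-order term may increase the norm but the slack $R^2-\norm{z_i}_2^2$ is large, handled by the $M_2$ bound and small $\alpha$) is the delicate part. A secondary subtlety is that the iterate map changes from step to step because a different batch is updated each iteration, so the mapping is not fixed; this is dissolved by the row-wise reduction above, since the bound on each updated row depends only on the current iterate lying in $W_0$, which holds by the inductive hypothesis.
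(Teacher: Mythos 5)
Your plan reproduces the paper's proof in all its essential mechanics: the row-wise reduction to a single updated row in the batch, the rescaling $z_i = D_i^{1/2}Y_i$, the expansion of $\norm{z_i^+}_2^2$ into a linear-in-$\alpha$ and a quadratic-in-$\alpha$ part, the factor-3 (Cauchy--Schwarz) bound giving $\norm{D_i^{1/2}(\nabla_Y f_2)_i}_2^2 \le M(R)$, the completing-the-square estimates that produce $M_2$ as a cap on the adverse linear term and make $R \ge 2\sqrt{M_1}$ force an inward-pointing linear term near the boundary, and a two-case (interior versus near-boundary) analysis. So the approach is the same one; the problem is that your final assembly pairs the two stepsize bounds with the wrong regimes, and one step fails as literally written.

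Concretely: the quadratic inequality $4M(R)\alpha^2 + 2M_2\alpha < \tfrac{3}{16}R^2$ cannot be obtained ``after using $\norm{z_i}_2^2 < R^2$'' --- under that hypothesis alone the available slack $R^2 - \norm{z_i}_2^2$ can be arbitrarily small, while your left-hand side is a fixed positive quantity. That inequality (equivalently $16M(R)\alpha^2 + 8M_2\alpha < \tfrac34 R^2$, whose positive root is indeed the first term of the stated minimum) is exactly what closes the \emph{interior} case $\norm{z_i}_2 \le R/2$, where the slack is at least $\tfrac34 R^2$ and the adverse linear term is capped via $M_2$. Conversely, the bound $\alpha < 1/(16M(R))$ is not what handles the interior regime --- by itself it only gives a change bounded by $\alpha(1+8M_2)$, and forcing that below $\tfrac34 R^2$ would require extra relations among $M_1$, $M_2$, $M(R)$ that you do not establish. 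It is what the \emph{near-boundary} case $R/2 < \norm{z_i}_2 < R$ requires: there $R \ge 2\sqrt{M_1}$ makes the linear coefficient less than $-\tfrac18$, so the change in $\norm{z_i}_2^2$ is at most $-\alpha + 16\alpha^2 M(R)$, and $\alpha < 1/(16M(R))$ makes this negative, so the norm does not increase and in particular stays below $R$. Swapping your two assignments --- quadratic-root bound for $\norm{z_i}_2 \le R/2$, the $1/(16M(R))$ bound for $R/2 < \norm{z_i}_2 < R$ --- is precisely what the paper does, and with that correction your argument goes through; note also that since each case needs its own bound and neither bound implies the other, taking the minimum of the two in the hypothesis is unavoidable.
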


\begin{lemma}
    \label{lem:Lip}
    For any $1\leq i_1,i_2\leq n$, $1\leq k_1,k_2\leq K$ and $Y\in W_0$
    with $R \geq 2 \sqrt{M_1}$, we have 
    \[
        \abs{\frac{\partial^2 f_2}{\partial Y_{i_1,k_1}
        \partial Y_{i_2,k_2}}} \leq 4\max_{i,j}W_{i,j} + 4(n+K)R^2\max_{i}D_i.
    \]
\end{lemma}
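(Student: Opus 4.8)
The plan is to compute the entries of the Hessian of $f_2$ directly and bound them termwise over the ball $W_0$. Since $f_2$ is a fourth-order polynomial (see \eqref{eq:uopt}), I would split it into its quadratic part, coming from $Y^\top W Y$, and its quartic part, coming from $Y^\top D Y\,Y^\top D Y$, and differentiate each twice with respect to the scalar entries $Y_{i_1,k_1}$ and $Y_{i_2,k_2}$; differentiating the gradient \eqref{eq:gradY} once more is an equivalent route. The quadratic part has a constant Hessian: its mixed second derivative is (up to the normalization of this section) $-4W_{i_1,i_2}\delta_{k_1,k_2}$, whose absolute value is at most $4\max_{i,j}W_{i,j}$, using $W_{i,j}\ge 0$. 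This contribution is independent of $Y$ and accounts for the first summand in the claimed bound.

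The quartic part is the crux. Writing $A := Y^\top D Y \in \bbR^{K\times K}$, which is symmetric and positive semidefinite, the first derivative equals $4(DYA)_{i_1,k_1}$, and differentiating once more in $Y_{i_2,k_2}$ produces three groups of terms:
\[
4D_{i_1}\Big(D_{i_2}\,\delta_{k_1,k_2}\,\langle Y_{i_1},Y_{i_2}\rangle
+ D_{i_2}\,Y_{i_1,k_2}Y_{i_2,k_1}
+ \delta_{i_1,i_2}\,A_{k_1,k_2}\Big),
\]
where $Y_i$ denotes the $i$-th row. Each group is quadratic in $Y$ and must be controlled using the defining constraint of $W_0$, namely $D_i\norm{Y_i}_2^2 < R^2$ for every $i$.

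To bound these, I would apply Cauchy--Schwarz rowwise: $\abs{\langle Y_{i_1},Y_{i_2}\rangle}\le \norm{Y_{i_1}}_2\norm{Y_{i_2}}_2 < R^2/\sqrt{D_{i_1}D_{i_2}}$ and likewise $\abs{Y_{i_1,k_2}Y_{i_2,k_1}} < R^2/\sqrt{D_{i_1}D_{i_2}}$, so each of the first two groups contributes at most $4\sqrt{D_{i_1}D_{i_2}}\,R^2 \le 4R^2\max_i D_i$. For the third group, which is nonzero only when $i_1=i_2$, I would use that $A\succeq 0$ gives $\abs{A_{k_1,k_2}}\le \mathrm{tr}(A)=\sum_i D_i\norm{Y_i}_2^2 < nR^2$, yielding at most $4nR^2\max_i D_i$. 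Summing the three contributions gives at most $4(n+2)R^2\max_i D_i \le 4(n+K)R^2\max_i D_i$ for $K\ge 2$, and adding the quadratic part gives the stated estimate. The hypothesis $R\ge 2\sqrt{M_1}$ is inherited from the definition of $W_0$ in Lemma~\ref{lem:bound} and is not otherwise needed for this bound.

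The main obstacle is organizing the second differentiation of the quartic trace cleanly, keeping track of which index each Kronecker delta couples and to which factor the ball constraint should be applied, so that the three sub-terms are correctly separated; in particular the factor $n$ must be traced to $\mathrm{tr}(A)$ in the diagonal-in-$i$ term, while the remaining off-diagonal-in-$i$ terms contribute only $O(1)$ multiples of $R^2\max_i D_i$. Once this three-term decomposition is in hand, the estimates are routine applications of Cauchy--Schwarz and the rowwise norm constraint defining $W_0$.
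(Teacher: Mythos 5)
Your proposal is correct and takes essentially the same route as the paper's proof: compute the second partial derivatives entrywise (your three quartic groups are exactly the paper's terms $4D_{i_1}\delta_{i_1i_2}(Y^\top DY)_{k_1,k_2}$, $4D_{i_1}D_{i_2}Y_{i_1,k_2}Y_{i_2,k_1}$, and $4D_{i_1}D_{i_2}\delta_{k_1k_2}\langle Y_{i_1},Y_{i_2}\rangle$) and bound each term on $W_0$ via the rowwise constraint $\norm{D_i^{1/2}Y_i}_2<R$. The only differences are in bookkeeping: the paper bounds the $\delta_{k_1k_2}\langle Y_{i_1},Y_{i_2}\rangle$ term entrywise over its $K$ summands (yielding $4\max_i D_i KR^2$) where you use Cauchy--Schwarz (yielding $4\max_i D_i R^2$), and your explicit step $4(n+2)R^2\max_i D_i\le 4(n+K)R^2\max_i D_i$ for $K\ge 2$ honestly exposes a small constant-accounting slack that the paper's own display silently absorbs, so if anything your version is the more careful rendering of the same argument.
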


We define the upper bound in Lemma~\ref{lem:Lip} as 
\begin{equation}
    L:=4\max_{i,j}W_{i,j} + 4(n+K)R^2\max_{i}D_i,
\end{equation}
which is a Lipschitz constant of $\nabla f_2$ in the coordinate sense.

We denote the iterative mapping as
$Y^{(\ell+1)} = g_p(Y^{(\ell)})$, which is the block coordinate update at
the $\ell$-th iteration on batch $S_p$. Our iterative scheme then applies
$g_{1}, \dots, g_{b}$ in a cyclic way. When contiguous $b$ iterations of
our iterative scheme are applied, we could view it as a composed iterative
mapping as,
\begin{equation}
    g = g_{b} \circ g_{b-1}\circ\cdots \circ g_1,
\end{equation}
and the corresponding iteration is
\begin{equation}
    Y^{((i+1)b)} = g(Y^{(ib)}),\quad i=0,1,2,\dots.
\end{equation}
Though mapping $g$ is not explicitly shown in the statements of
Lemma~\ref{lem:grad} and Theorem~\ref{thm:convergence}, their proofs
rely on the detailed analysis of $g$.

\begin{lemma}
    \label{lem:grad}
    Suppose $\alpha$ is sufficiently small such that $\alpha <
    \frac{1}{L}$. Then the iteration converges to first-order points, \ie,
    \[
        \lim_{\ell\to\infty}\norm{\nabla f_2(Y^{(\ell)})}=0.
    \]
\end{lemma}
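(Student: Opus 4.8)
The plan is to recognize the iteration \eqref{eq:iterYfull} as cyclic block coordinate gradient descent for $f_2$: by the remark following \eqref{eq:gradYneighbor}, on each batch $S_p$ the full evaluation scheme \eqref{eq:gradYfull} returns the exact partial gradient $\nabla_{S_p} f_2(Y^{(\ell)})$, so $g_p$ is a genuine gradient step in the rows indexed by $S_p$. I would then follow the classical three-step template — sufficient decrease, summability, and a cycle-aggregation argument — with Lemmas \ref{lem:bound} and \ref{lem:Lip} supplying exactly what is needed to tame the two stated difficulties. Assuming the stepsize also meets the condition of Lemma \ref{lem:bound} (so the orbit stays in $W_0$), every iterate lies in $W_0$, which is convex (an intersection of the per-row balls $\norm{D_i^{1/2} Y_i}_2 < R$) and bounded; hence $f_2$ is bounded below along the orbit and, by Lemma \ref{lem:Lip}, $\nabla f_2$ is Lipschitz there, which neutralizes the unboundedness of the Hessian.

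For the sufficient-decrease step I would Taylor-expand $f_2$ along the segment from $Y^{(\ell)}$ to $Y^{(\ell+1)} = Y^{(\ell)} - \alpha [\nabla f_2(Y^{(\ell)})]_{S_p}$, which lies in $W_0$ by convexity. The first-order term contributes $-\alpha \norm{\nabla_{S_p} f_2(Y^{(\ell)})}^2$, and the second-order remainder is a quadratic form in the update direction governed by the block Hessian, whose entries are bounded by the coordinate-sense constant $L$ of Lemma \ref{lem:Lip}. With the stepsize restricted as in the hypothesis $\alpha < 1/L$, controlling this quadratic form by the entrywise Hessian bound yields
\begin{equation*}
    f_2(Y^{(\ell+1)}) \le f_2(Y^{(\ell)}) - c\, \norm{\nabla_{S_p} f_2(Y^{(\ell)})}^2
\end{equation*}
for a constant $c > 0$. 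Telescoping over $\ell$ and using that $f_2$ is bounded below on $W_0$ gives $\sum_\ell \norm{\nabla_{S_{p(\ell)}} f_2(Y^{(\ell)})}^2 < \infty$, where $p(\ell)$ is the block updated at step $\ell$; hence the block gradients actually used along the orbit vanish, $\norm{\nabla_{S_{p(\ell)}} f_2(Y^{(\ell)})} \to 0$, and the increments $\norm{Y^{(\ell+1)} - Y^{(\ell)}}_2 = \alpha \norm{\nabla_{S_{p(\ell)}} f_2(Y^{(\ell)})}_2 \to 0$ as well.

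The crux — and the place where the second difficulty (the non-stationary cyclic map $g = g_b \circ \cdots \circ g_1$) really bites — is upgrading the vanishing of the per-step block gradients to the vanishing of the \emph{full} gradient $\nabla f_2(Y^{(\ell)})$ at a single iterate. The obstacle is that the $b$ block gradients making up $\nabla f_2(Y^{(\ell)})$ are each evaluated at a different point of the orbit. I would fix $\ell$ and, for each block $q$, locate the iteration $m_q$ within the surrounding length-$b$ window at which $S_q$ was updated, so $\abs{m_q - \ell} < b$, and then write
\begin{equation*}
    \norm{\nabla_{S_q} f_2(Y^{(\ell)})}_2 \le \norm{\nabla_{S_q} f_2(Y^{(m_q)})}_2 + \norm{\nabla_{S_q} f_2(Y^{(\ell)}) - \nabla_{S_q} f_2(Y^{(m_q)})}_2.
\end{equation*}
The first term is a block gradient at its own update step and tends to zero; the second is controlled by the Lipschitz constant of $\nabla f_2$ on $W_0$ times $\norm{Y^{(\ell)} - Y^{(m_q)}}_2$, which telescopes into at most $b$ consecutive increments, each shown above to vanish. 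Since there are only finitely many blocks $b$, summing the squares over $q$ gives $\norm{\nabla f_2(Y^{(\ell)})}_2 \to 0$, which is the claim. I expect the bookkeeping of this window argument — keeping the Lipschitz bridging uniform across the cycle while the iteration map changes from step to step — to be the main technical effort, whereas the descent and summability steps are routine once Lemmas \ref{lem:bound} and \ref{lem:Lip} are in hand.
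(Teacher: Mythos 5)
Your proposal is correct and follows essentially the same route as the paper's proof: a sufficient-decrease inequality from the coordinate-wise Lipschitz bound of Lemma~\ref{lem:Lip}, telescoping plus boundedness below of $f_2$ on $W_0$ to get summability (hence vanishing) of the per-step block gradients and increments, and then a Lipschitz bridging argument across each length-$b$ cycle to upgrade vanishing block gradients to vanishing of the full gradient. The paper carries out the last step with explicit $\epsilon$-bookkeeping on squared coordinate sums rather than your triangle-inequality formulation, but the idea is identical.
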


With all these lemmas available, we then show the global convergence of
our iterative scheme with full gradient evaluation scheme
\eqref{eq:iterYfull}, in Theorem~\ref{thm:convergence}. The proof is based
upon the stable manifold theorem~\cite{lee2019first}.

\begin{theorem}[Global Convergence]
    \label{thm:convergence}
    Let $R \geq 2\sqrt{M_1}$ be a constant and suppose the stepsize
    satisfies that
    \[
    \alpha < \min \left\{\frac{-2M_2+\sqrt{4M_2^2+3M(R)R^2}}{8M(R)},
    \frac{1}{16M(R)},\frac{1}{KL\max_{i\in [b]}\abs{S_i}} \right\}.
    \] 
    Then the  iteration \eqref{eq:iterYfull} converges to global minimizers of
    \eqref{eq:uopt} for all $Y^{(0)} \in W_0$ up to an initial point set of
    measure zero.
\end{theorem}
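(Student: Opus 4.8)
The plan is to realize the cyclic block-coordinate iteration \eqref{eq:iterYfull} as a smooth self-map $g = g_b \circ \cdots \circ g_1$ of the invariant ball $W_0$ and then invoke the stable manifold theorem of \cite{lee2019first}, which guarantees that a $C^1$ local diffeomorphism escapes each of its unstable fixed points from all but a measure-zero set of initializations. Concretely, I would organize the argument around three facts: (i) every accumulation point of the iteration is a critical point of $f_2$; (ii) every critical point of $f_2$ that is not a global minimizer is a strict saddle; and (iii) $g$ is a local diffeomorphism on $W_0$ whose strict-saddle fixed points are linearly unstable. Fact (i) and the compactness needed to extract limit points come essentially for free: Lemma~\ref{lem:bound} keeps the whole trajectory inside the bounded set $W_0$, and Lemma~\ref{lem:grad} gives $\nabla f_2(Y^{(\ell)}) \to 0$, so by continuity any accumulation point $Y^\star$ satisfies $\nabla f_2(Y^\star)=0$. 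Because the full-scheme gradient \eqref{eq:gradYfull} equals the exact gradient of $f_2$ restricted to $S_p$, each $g_p$ is a genuine block-gradient step $Y\mapsto Y-\alpha E_p\nabla f_2(Y)$, where $E_p$ zeroes out all rows outside $S_p$.

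The first genuine task is fact (ii), the strict-saddle property, for which I would reuse the critical-point analysis behind Theorem~\ref{thm:minimizers}: solving $\nabla f_2(Y)=0$ forces the columns of $Y$ to align (up to a rotation and scaling by $\Lambda^{\frac{1}{2}}$) with some $K$-subset of generalized eigenvectors of $(W,D)$, and Theorem~\ref{thm:minimizers} together with Corollary~\ref{cor:minimizers} identify the minimizers as exactly those selecting the top $K$ eigenvalues. For any other critical point I would exhibit an explicit perturbation that swaps a currently-used eigenvector for an unused one of strictly larger eigenvalue; evaluating the Hessian of the quartic $f_2$ along this direction yields a strictly negative value, so the Hessian has a negative eigenvalue and the point is a strict saddle. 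The nonzero eigengap assumed after \eqref{eq:evp} makes this negative curvature bounded away from zero, which is what renders the instability quantitative.

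The second task, fact (iii), is where the block-coordinate structure makes the argument delicate, and I expect it to be the main obstacle. Unlike vanilla gradient descent, the linearization of one sweep is a Gauss--Seidel-type product $Dg(Y^\star) = \prod_{p=b}^{1}\bigl(I - \alpha E_p \nabla^2 f_2\bigr)$, with the Hessian factors evaluated at successive intermediate points (all equal to $Y^\star$ at a fixed point). The third stepsize bound $\alpha < \frac{1}{KL\max_i\abs{S_i}}$ is tailored to this step: together with the coordinate-wise Lipschitz bound $L$ of Lemma~\ref{lem:Lip}, it forces $\norm{\alpha E_p \nabla^2 f_2} < 1$ for every block, so each factor $I - \alpha E_p \nabla^2 f_2$ is invertible, whence $\det Dg \neq 0$ on $W_0$ and $g$ is a local diffeomorphism. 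The harder half is to show that at a strict saddle this product has spectral radius strictly greater than one. I would argue this by relating the cyclic product to a single preconditioned gradient step and showing that the negative-curvature direction from fact (ii) is expanded by $Dg$; in the one-block ($\calB = X$) case this reduces to the standard fact that $I - \alpha\nabla^2 f_2$ has an eigenvalue of modulus exceeding one whenever $\nabla^2 f_2$ has a negative eigenvalue and $\alpha$ is small, and the genuinely multi-block product is then controlled by a continuity/perturbation argument in $\alpha$.

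Finally, with (i)--(iii) in hand, I would conclude as follows. By the stable manifold theorem, for each strict-saddle fixed point the set of initializations in $W_0$ whose iterates converge to it lies in a properly embedded $C^1$ submanifold of codimension at least one, hence of Lebesgue measure zero; since $\bbR^{n\times K}$ is second countable, the union of these local stable manifolds over all saddles admits a countable subcover and thus remains measure zero. Therefore for every $Y^{(0)}\in W_0$ outside this exceptional set the iteration cannot converge to a saddle, and combining this with facts (i) and (ii) forces convergence to the set of minimizers, which by Corollary~\ref{cor:minimizers} are global. The identical reasoning applies to \eqref{eq:iterYneighbor}, since the neighbor gradient coincides with the exact block gradient in the linear-algebra setting, so Lemma~\ref{lem:bound}, Lemma~\ref{lem:grad}, and the linearization above all transfer without change.
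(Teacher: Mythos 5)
Your overall route is the same as the paper's: invariance of $W_0$ under each block update (Lemma~\ref{lem:bound}), the coordinate Lipschitz bound of Lemma~\ref{lem:Lip} combined with the third stepsize condition to get $\det(Dg)\neq 0$, Lemma~\ref{lem:grad} to force accumulation points to be critical, the classification of critical points from Theorem~\ref{thm:minimizers} and Corollary~\ref{cor:minimizers} (strict saddles versus global minimizers), and the stable-manifold machinery of \cite{lee2019first} to discard a measure-zero set of initializations. Your estimate $\norm{\alpha E_p \nabla^2 f_2}<1$ is exactly the paper's bound $\max_{i\in[b]}\norm{\nabla^2 f_2(Y)_{S_i}}_2 \le KL\max_{i\in[b]}\abs{S_i}$, which is what it feeds into Proposition 6 of \cite{lee2019first}; the transfer to the neighbor scheme \eqref{eq:iterYneighbor} is also argued identically in both.

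The one genuine divergence is the step you yourself flag as the main obstacle: showing that at a strict saddle the cyclic Gauss--Seidel product $\prod_{p=b}^{1}\bigl(I-\alpha E_p\nabla^2 f_2(Y^\star)\bigr)$ has spectral radius exceeding one. The paper never proves this; it is contained in the block-coordinate-descent results it cites from \cite{lee2019first} (Proposition 6 together with Corollary 5, whose conclusion is precisely that the stable set of unstable stationary points has measure zero for this class of maps), so the paper's proof reduces to verifying hypotheses. Your proposed substitute --- expand the product as $I-\alpha\nabla^2 f_2 + O(\alpha^2)$ and perturb from the one-block case --- does give expansion along the negative-curvature direction, but only for $\alpha$ below an unspecified threshold determined by the Hessian and the block partition. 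Theorem~\ref{thm:convergence} asserts convergence for \emph{every} $\alpha$ below the explicit bound involving $\frac{1}{KL\max_{i\in[b]}\abs{S_i}}$, so as written your argument proves the weaker ``for all sufficiently small stepsize'' statement. To close this you would need to bound the $O(\alpha^2)$ remainder against $\alpha\abs{\lambda_{\min}}$ uniformly over the stated stepsize range (nontrivial, since the blocks interact), or simply invoke the BCD-specific instability result of \cite{lee2019first}, which is what the paper does. With that citation in place of your perturbation sketch, your proof coincides with the paper's.
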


Proofs of Lemma~\ref{lem:bound}, Lemma~\ref{lem:Lip}, Lemma~\ref{lem:grad}
and Theorem~\ref{thm:convergence} are provided in
Appendix~\ref{app:proofthm3}. We emphasize that the iterative scheme with
full gradient evaluation scheme and neighbor gradient evaluation scheme
are identical in the linear algebra sense. Hence the iterative scheme with
the neighbor gradient evaluation scheme, \eqref{eq:iterYneighbor}, also
admits the same global convergence property.

\section{Numerical Experiments}
\label{sec:numres}
We compare the performance of SpecNet2 with SpecNet1 through an ablation study: That is, all the setup of SpecNet1 is the same as SpecNet2 except that SpecNet1 has one additional orthogonalization layer appended to the output layer of SpecNet2. Details about the data generation, network architecture and parameters can be found in Appendix~\ref{detail:numres}. The code is available at
\url{https://github.com/ziyuchen7/SpecNet2}. 

\subsection{One moon data}\label{sec:onemoon}

The visualization of one moon data can be found in Figure~\ref{fig:moon}. Training data and testing data both consists of 2000 samples. Figure~\ref{fig:onemoon-net-all} demonstrates the performance of
both methods with all three gradient evaluation schemes on the one moon data. We also compare the computational efficiency of full and neighbor schemes in
Figure~\ref{fig:onemoon-net-cost}, and its detail can be found in Appendix~\ref{detail:onemoon}. We observe in Figure~\ref{fig:onemoon-net-all} that SpecNet1-full,
SpecNet2-full and SpecNet2-neighbor can provide good approximations to
the first two nontrivial eigenfunctions; SpecNet1-local, SpecNet2-local,
and SpecNet1-neighbor give poor approximations as their relative errors
are significantly larger. In Figure~\ref{fig:onemoon-net-cost}, we see
that the relative error for the first nontrivial eigenfunction by
SpecNet2-neighbor reaches the plateau earlier than
SpecNet2-full in terms of the computational cost, while they can achieve similar accuracy. We also show the embedding results provided by different methods in Figure \ref{fig:onemoon-embed} in the Appendix.

\begin{figure}[t]
    \includegraphics[width=0.245\textwidth]{./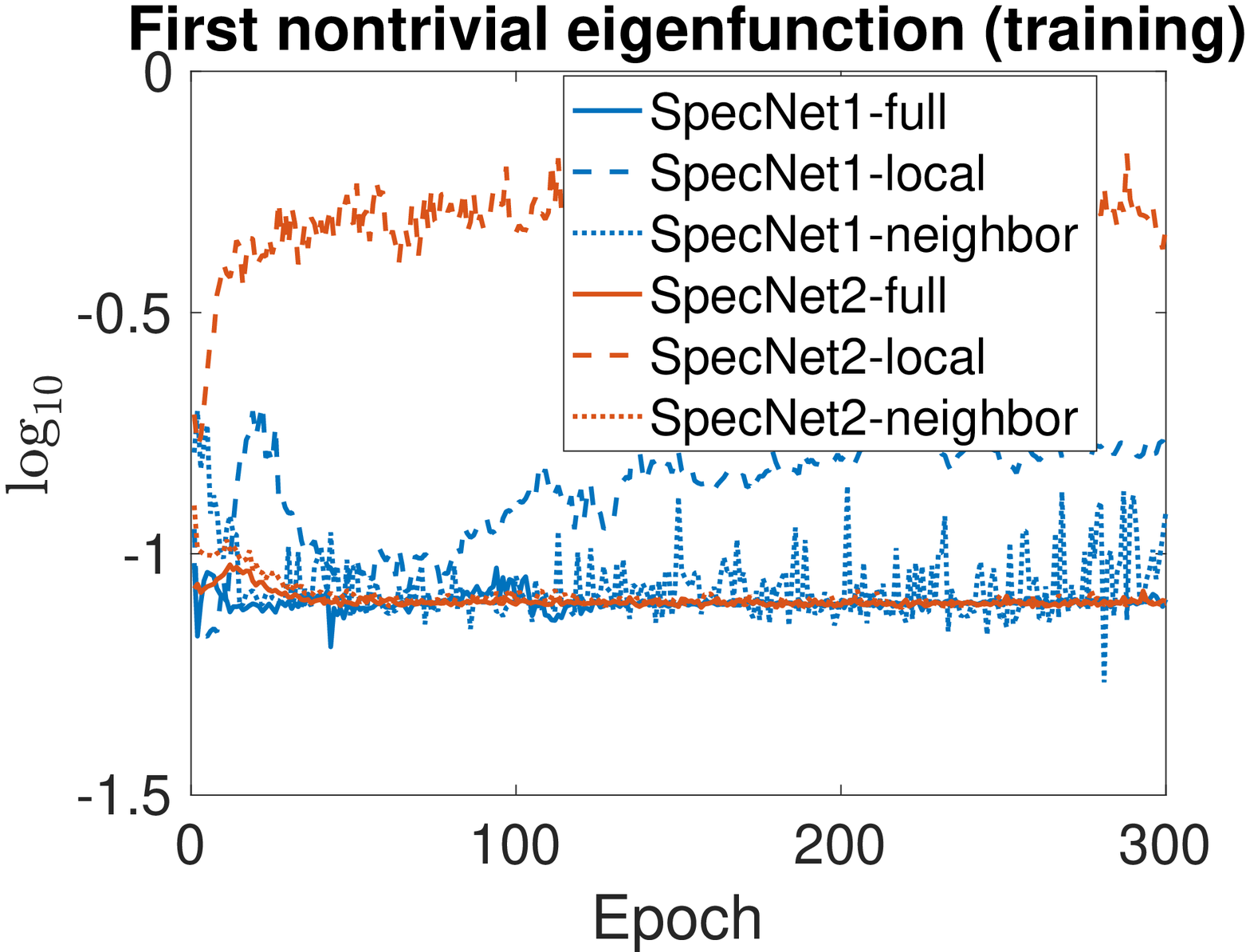}
    \includegraphics[width=0.245\textwidth]{./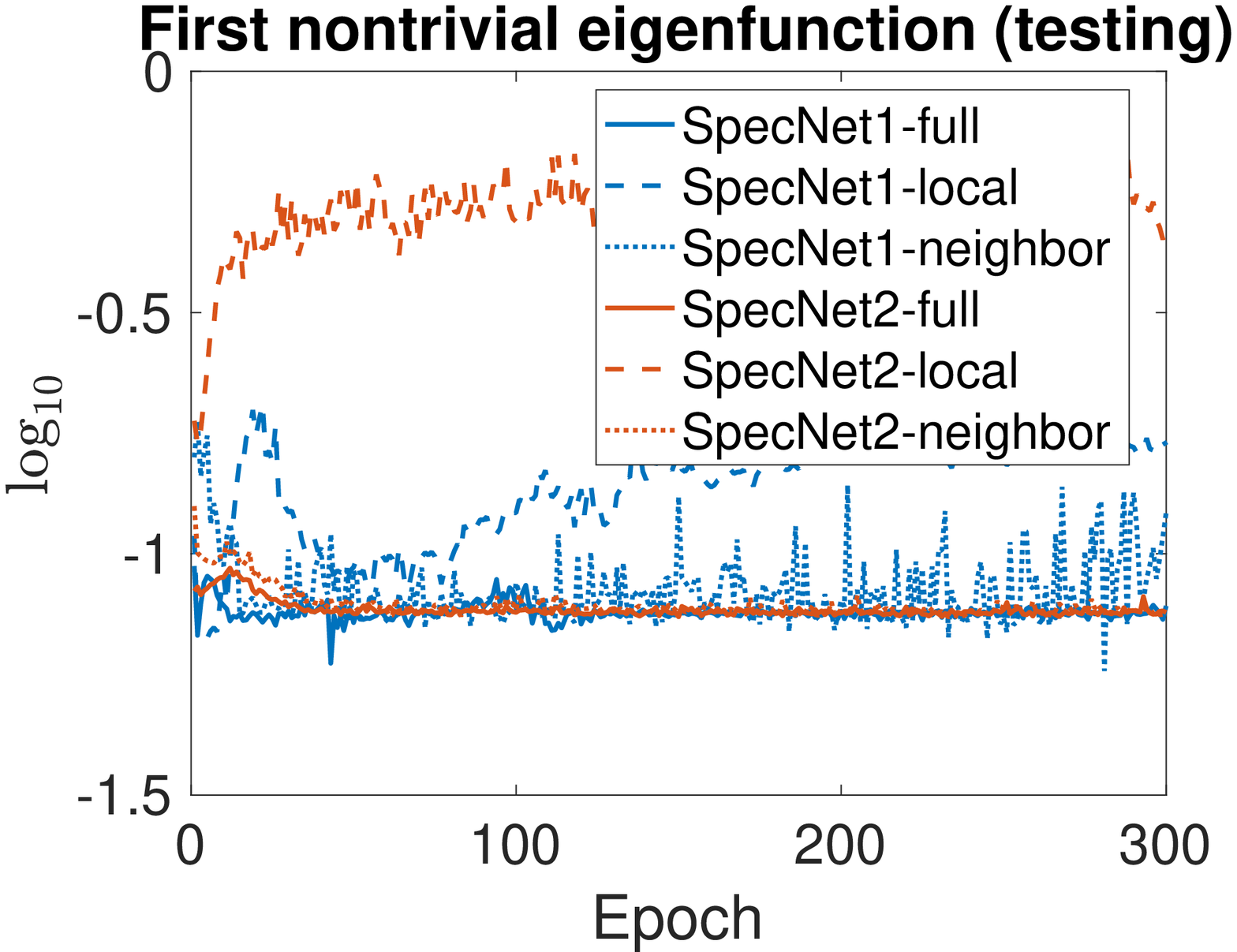}
    \includegraphics[width=0.245\textwidth]{./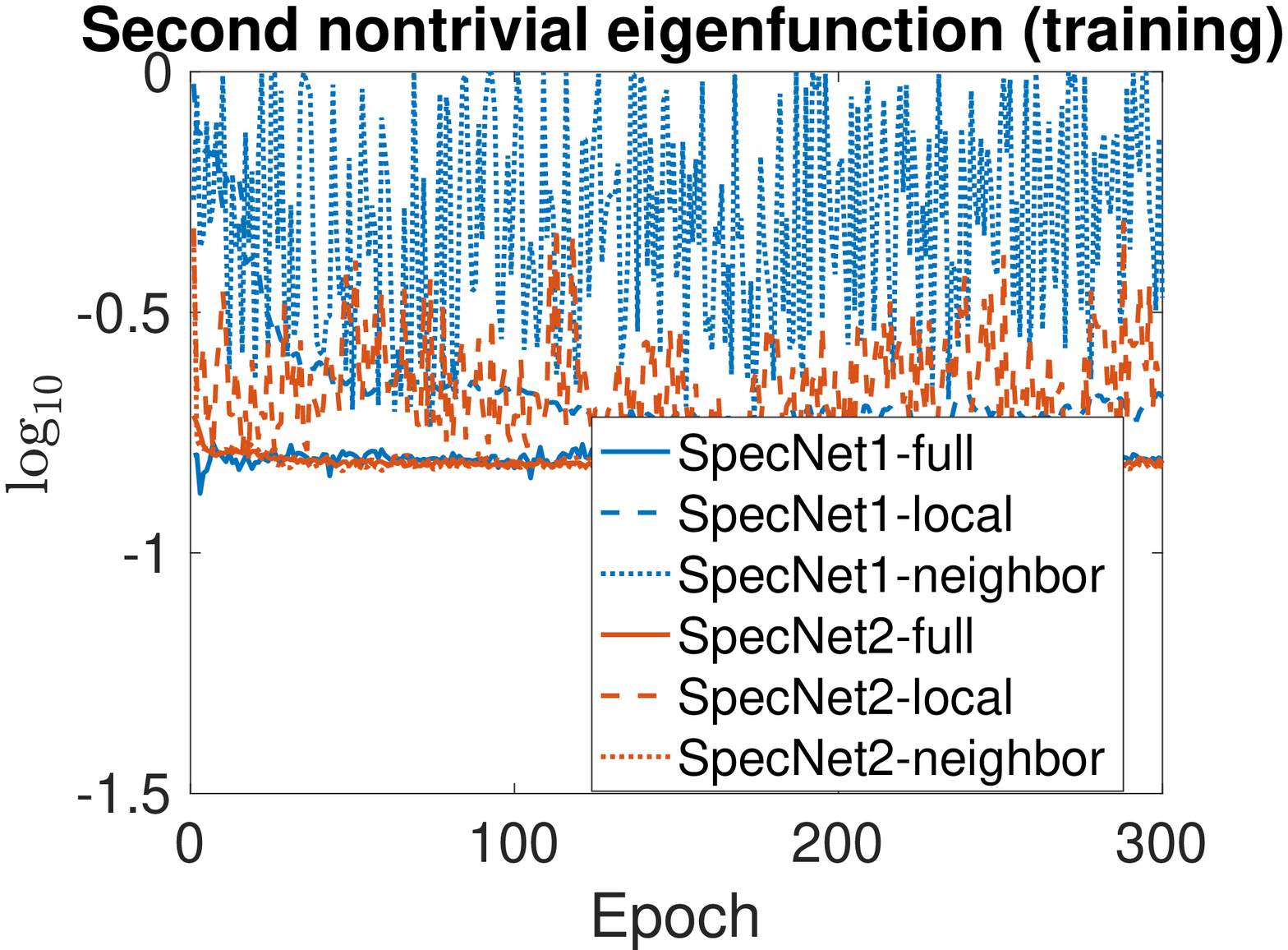}
    \includegraphics[width=0.245\textwidth]{./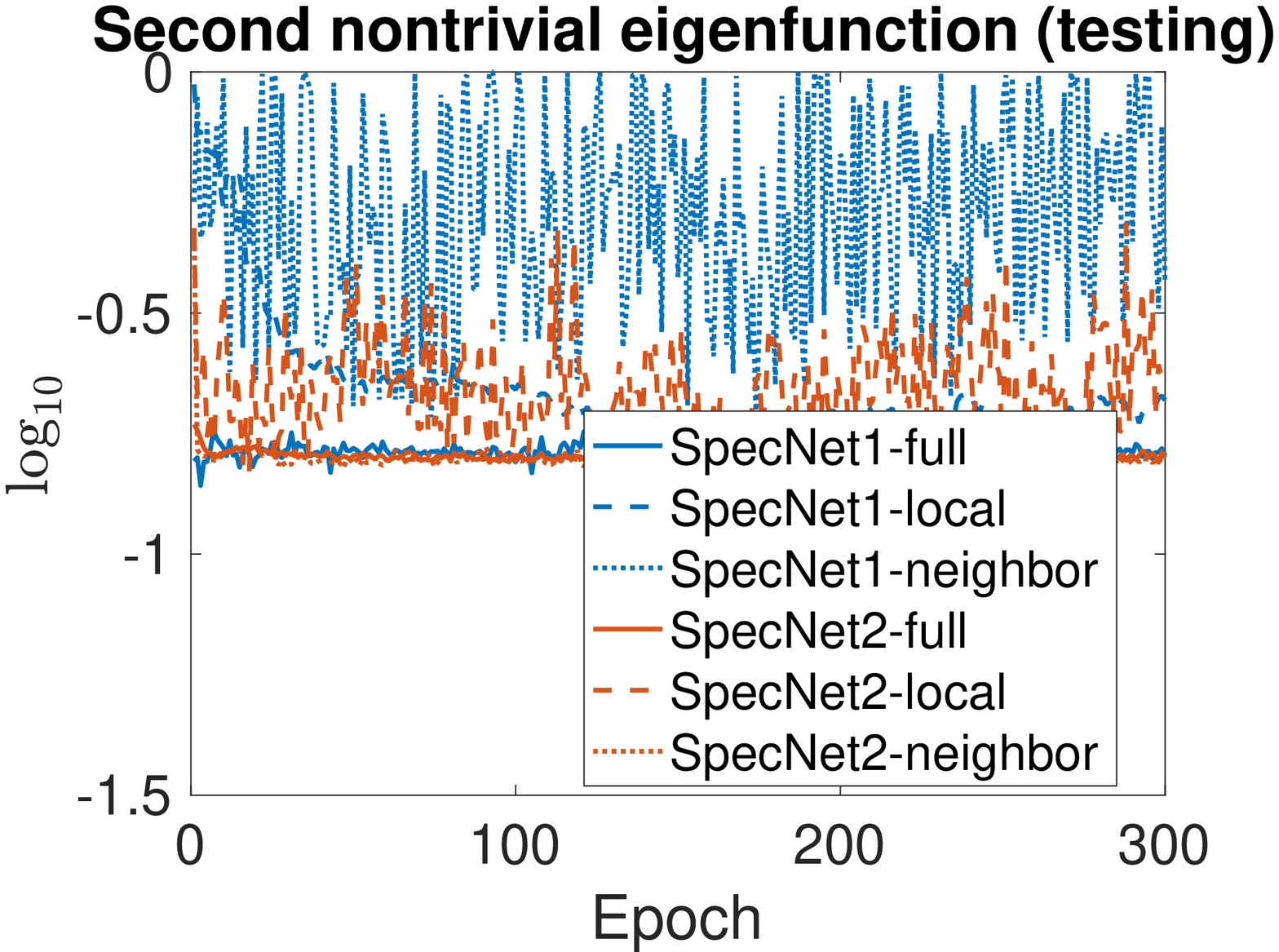}
    \caption{In the neural network setting of SpecNet1 and SpecNet2, relative errors of eigenfunction approximations by
    different evaluation schemes. SpecNet1-full, SpecNet1-local and  SpecNet1-neighbor are introduced in Section \ref{subsec:net1-training}; SpecNet2-full, SpecNet2-local and SpecNet2-neighbor are introduced in Section \ref{subsec:net2-training}. The relative error for training and testing is defined below~\eqref{eq:relativeerror}.
    }
    \label{fig:onemoon-net-all}
\end{figure}

\begin{figure}[t]
    \centering
    \includegraphics[width=0.25\textwidth]{./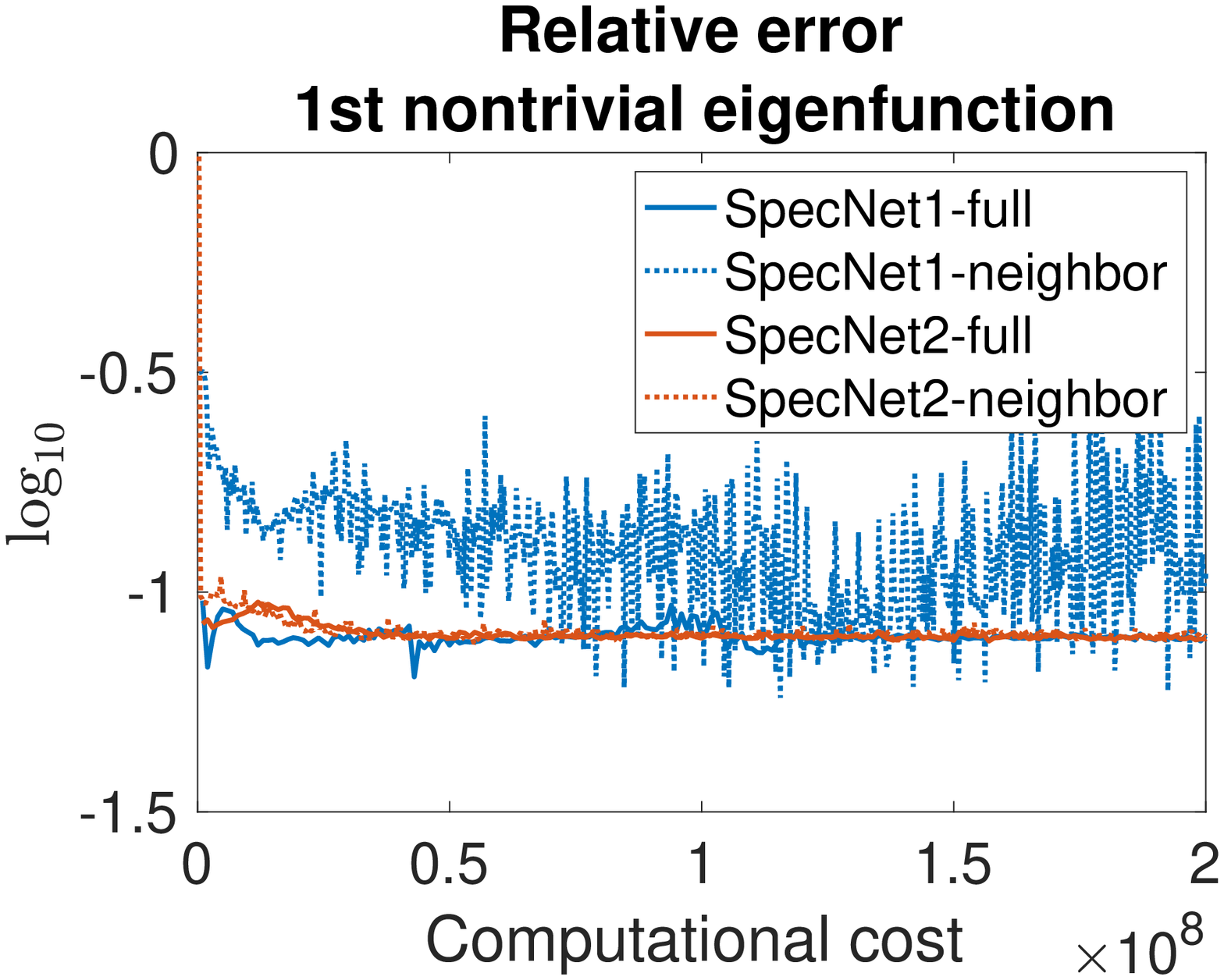}
    \includegraphics[width=0.25\textwidth]{./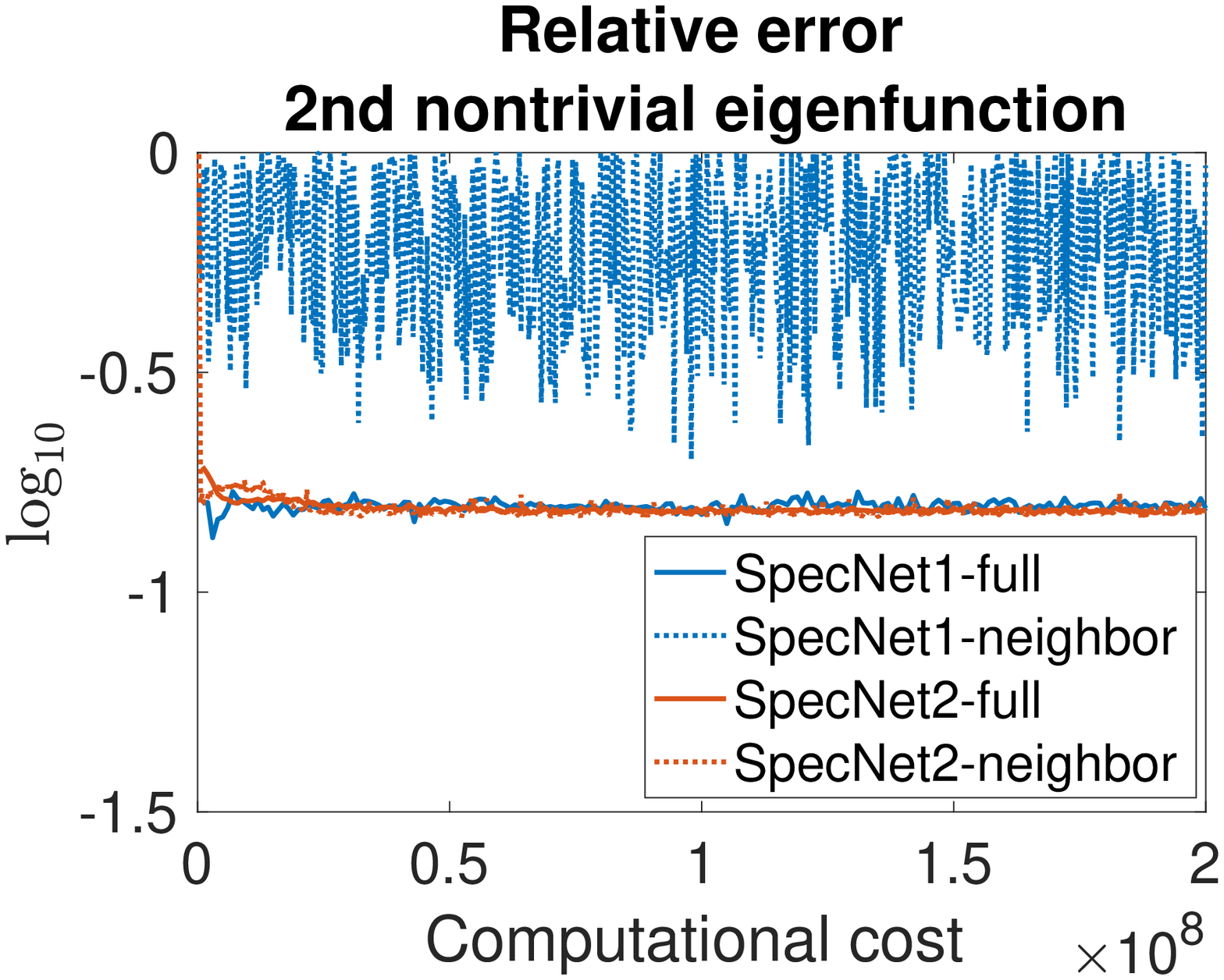}
    \caption{Results of full and neighbor schemes of SpecNet1 and SpecNet2 on the training data, where we rescale the $x$-axis to the computational cost.}
    \label{fig:onemoon-net-cost}
\end{figure}

\subsection{Two moons data}

We compare the performance and stability of SpecNet2 with SpecNet1 through an unsupervised
clustering task on a two moons dataset (visualized in Figure~\ref{fig:moon}) that contains 2000 training samples and 2000 testing samples. Due to the savings in memory and computational cost, we only compare SpecNet2-neighbor with
SpecNet1-neighbor in this example. Figure~\ref{fig:twomoons-net} shows the classification performance of
SpecNet1-neighbor and SpecNet2-neighbor over 10 different realizations of
the neural network. We observe that though the average curves provided by SpecNet1-neighbor
and SpecNet2-neighbor are close to each other, the variance of SpecNet1-neighbor is much
larger than that of SpecNet2-neighbor. Hence, we conclude that
SpecNet2-neighbor is able to achieve similar average classification accuracy as SpecNet1-neighbor
but with much higher reliability.

\begin{figure}[t]
    \centering
    \includegraphics[width=0.25\textwidth]{./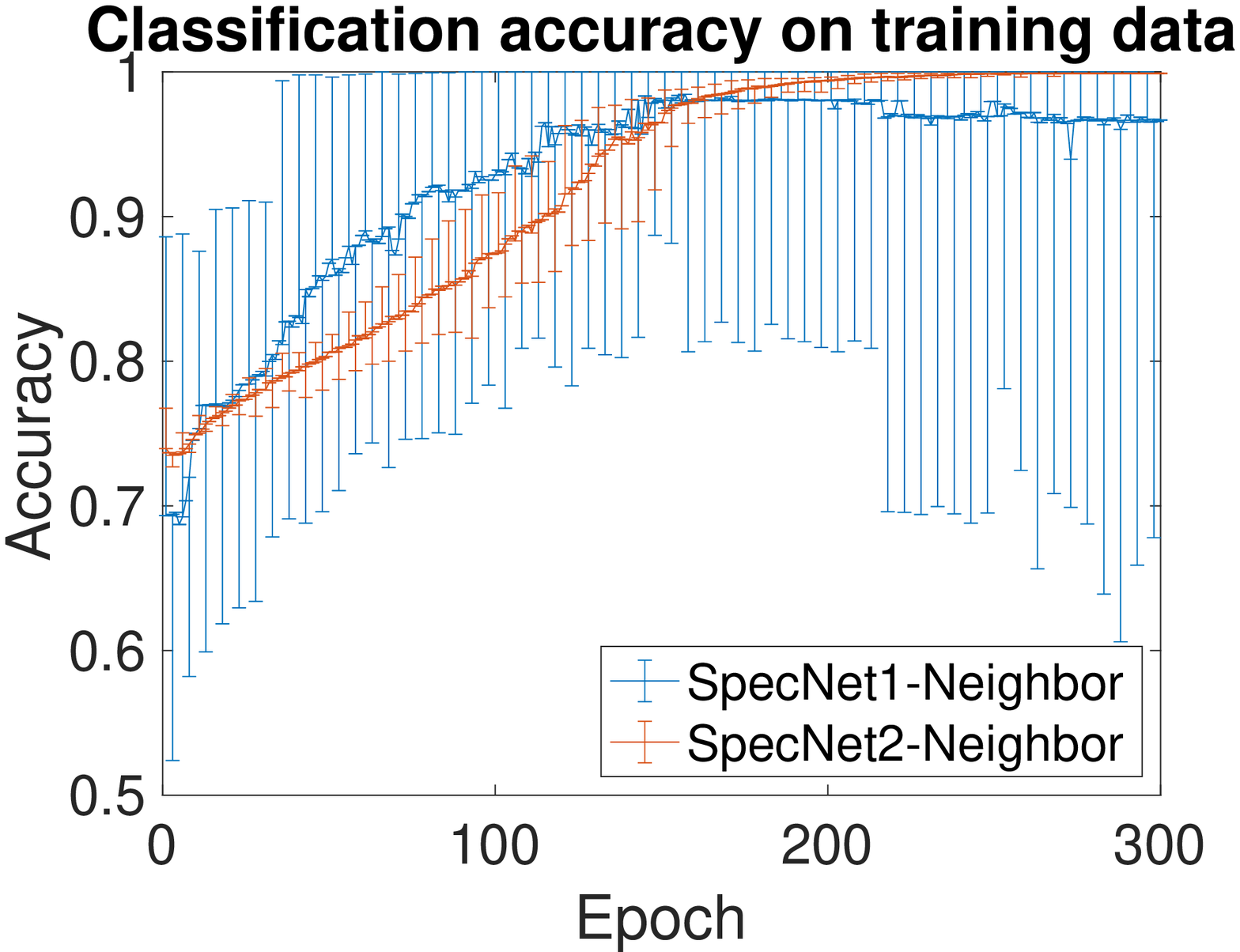}
    \includegraphics[width=0.25\textwidth]{./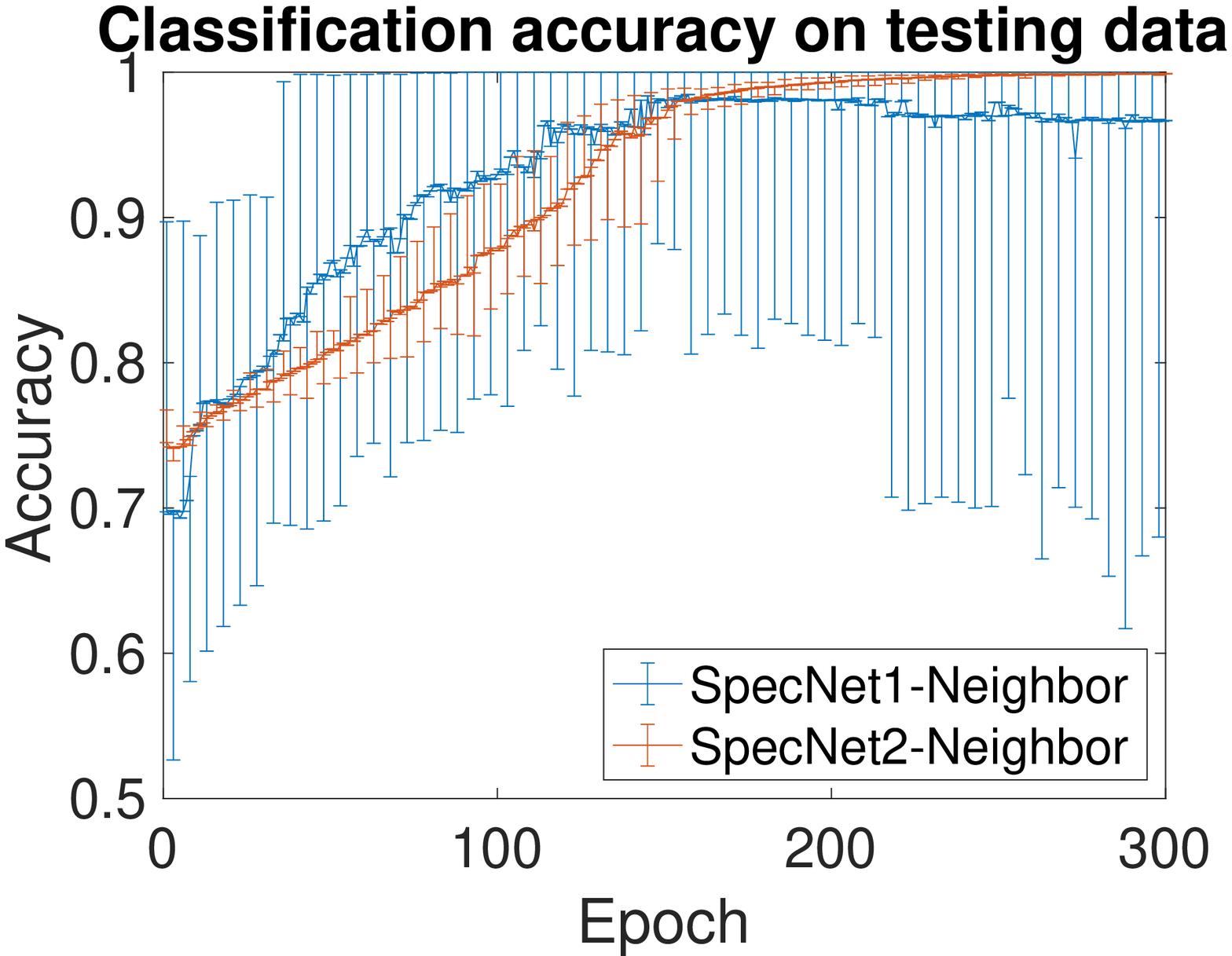}
    \caption{Classification results on two moons dataset. Average
    classification accuracy and errorbars are plotted over 10 different
    network initializations. At each epoch, the classification and its
    accuracy is computed in an unsupervised way.}
    \label{fig:twomoons-net}
\end{figure}

\subsection{MNIST data}

In this experiment, we use 20,000 samples of MNIST data (gray-scale images of hand-written digits which are of size $28 \times 28$) as the training set and 10,000 samples for testing. 
We construct the adjacency matrix $A$ of an kNN graph on the training set by setting $A_{i,j} = 1$ if the $j$-th training sample is within 
$k$ nearest neighbors of the $i$-th training sample and $A_{i,j} = 0$ otherwise, and we use $k = 16 $. The affinity matrix $W $ is obtained by setting $W = \frac{1}{2}(A + A^\top)$. We compare the performance of SpecNet1-local with SpecNet2-neighbor with different batch sizes. Specifically, the batch sizes for SpecNet2-neighbor are 2, 4 and 8 and those for SpecNet1-local are 45, 90, 180 (the average numbers of neighbors of a batch of size 2, 4 and 8 are about 45, 90 and 180 respectively).

Figure~\ref{fig:MNIST-loss} shows the losses $f_1$ and $f_2$ (defined in \eqref{eq:specnet1} and \eqref{eq:uopt2} respectively) over the training epochs. Since the minimum of $f_1$ and $f_2$ are not necessarily zero, we plot the quantities $\log_{10}(f_1(Y) - f_1^\star)$ and $\log_{10}(f_2(Y) - f_2^\star)$, where $f_1^\star = K - \sum_{i=1}^K\lambda_i$ and $f_2^\star = \sum_{i=2}^K\lambda_i^2$ are the global minimums (over matrix $Y$) of $f_1$ and $f_2$ respectively.
(In the definition of $f_1^\star$ and $f_2^\star$, $\lambda_1\geq\lambda_2\dots\geq\lambda_K$ are the $K$ largest eigenvalues of $D^{-1}W$, $D$ being the degree matrix of $W$.) 
The values of $(f_i(Y) - f_i^\star)$, $i=1,2$, in the plots are computed over 10 replicas of random initialization of the neural network. The solid curve shows the average over the replicas, and the shaded area around each curve reveals the standard deviation. 
We observe that though SpecNet2-neighbor has larger variance compared to SpecNet1-local, SpecNet2-neighbor achieves better performance in average when the batch size is small, e.g., comparing SpecNet2-neighbor with batch size 2 with SpecNet1-local with batch size 45.

\begin{figure*}[htbp]
    \centering
    \includegraphics[width=0.245\textwidth]{./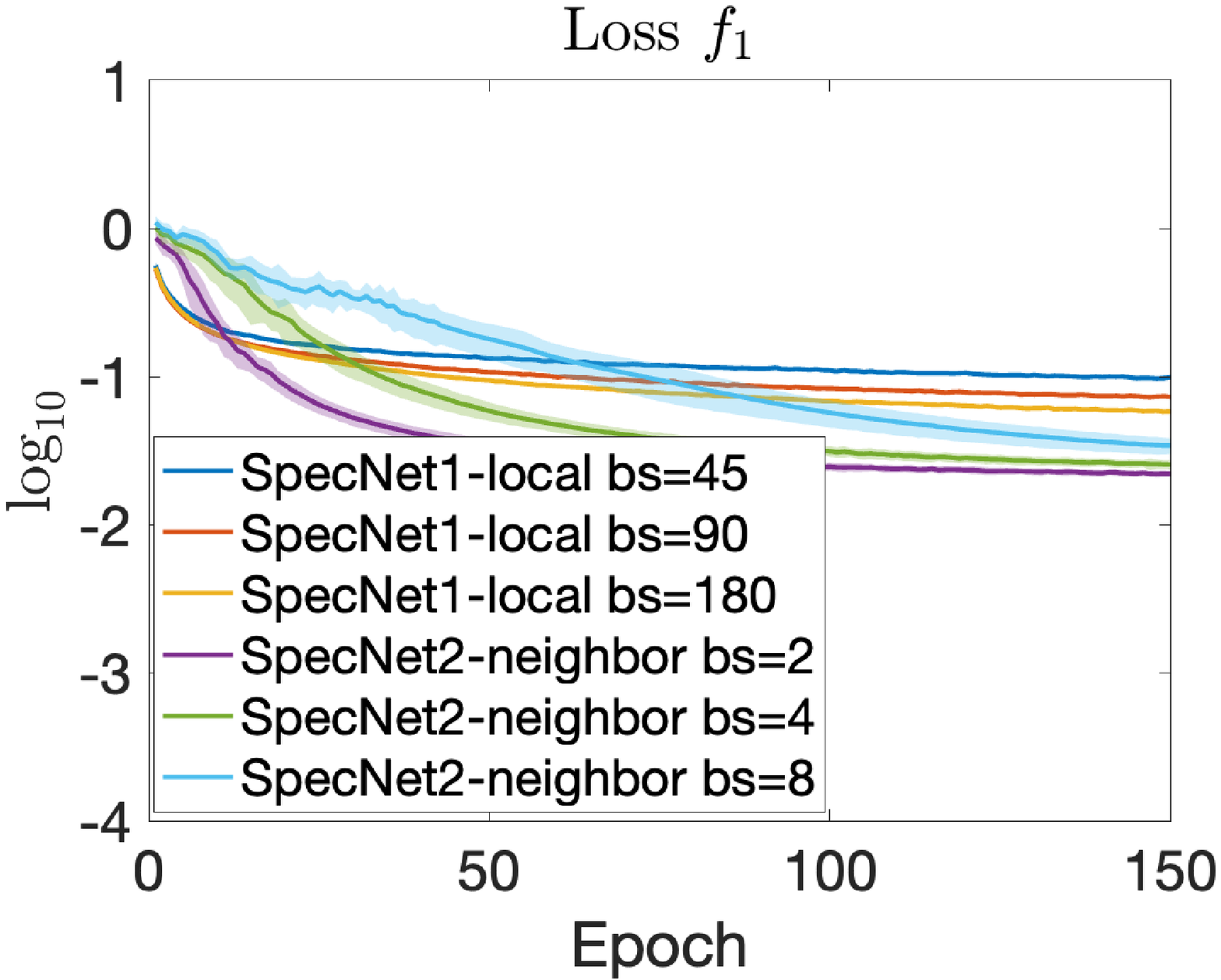}
    \includegraphics[width=0.245\textwidth]{./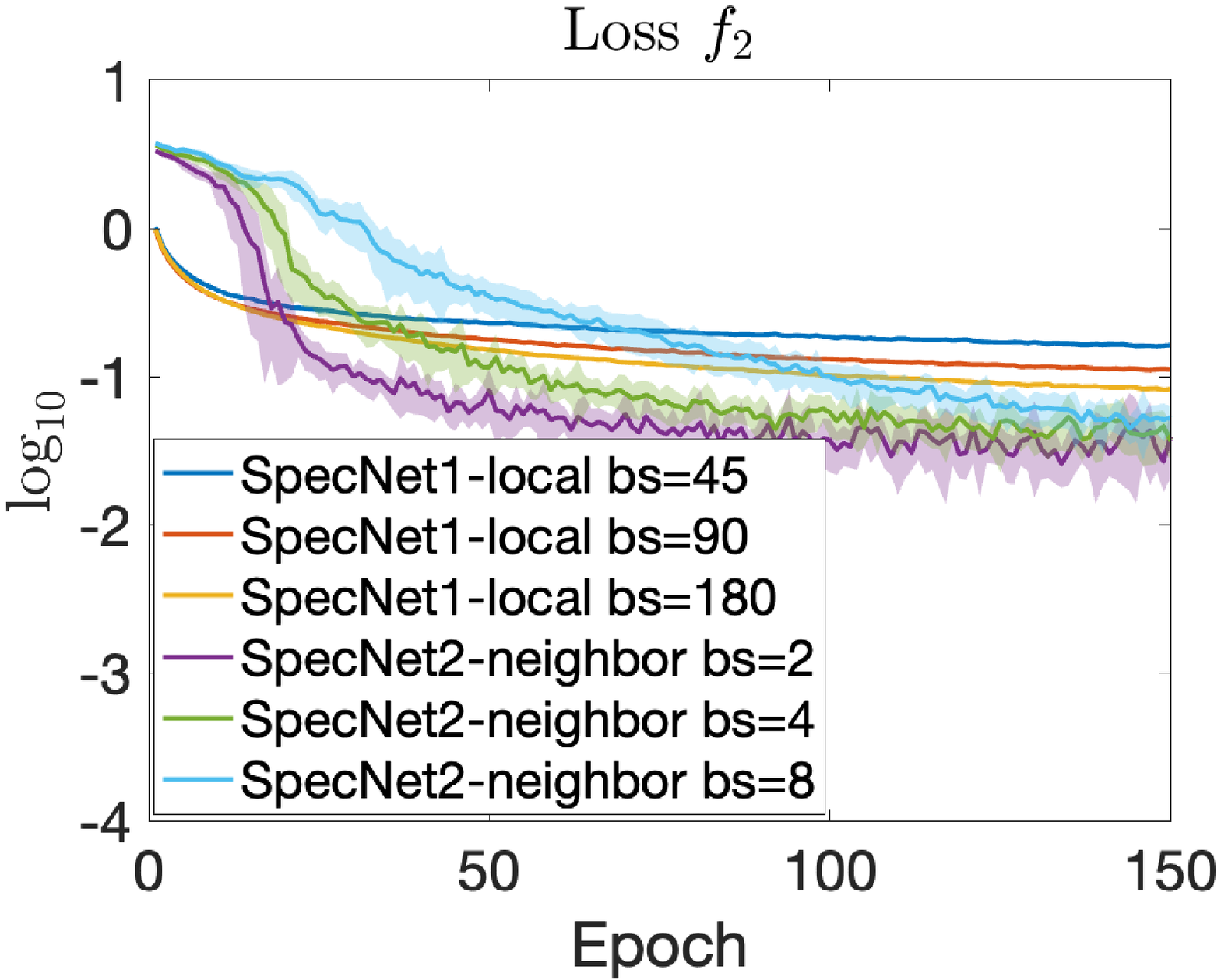}
    \includegraphics[width=0.245\textwidth]{./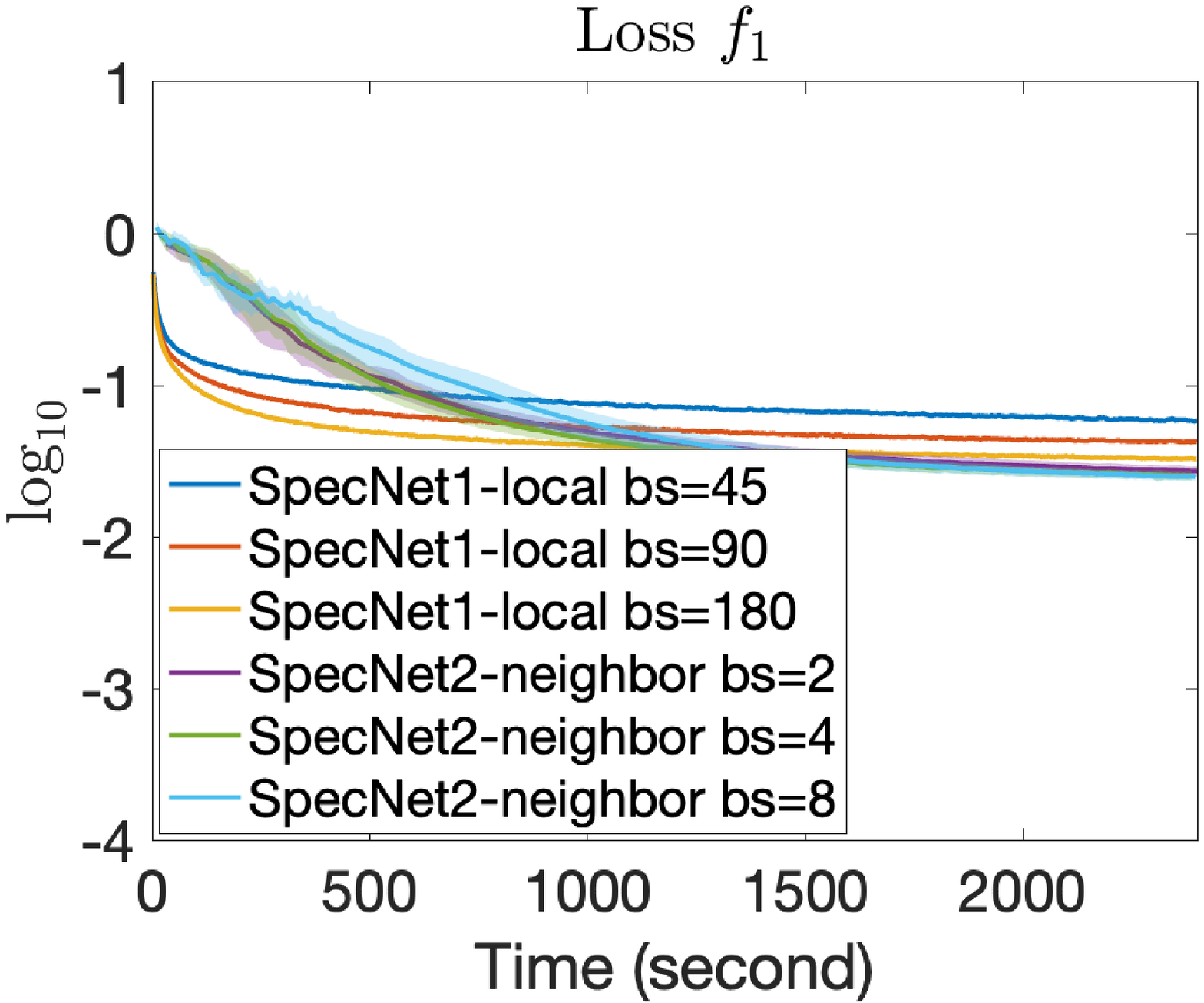}
    \includegraphics[width=0.245\textwidth]{./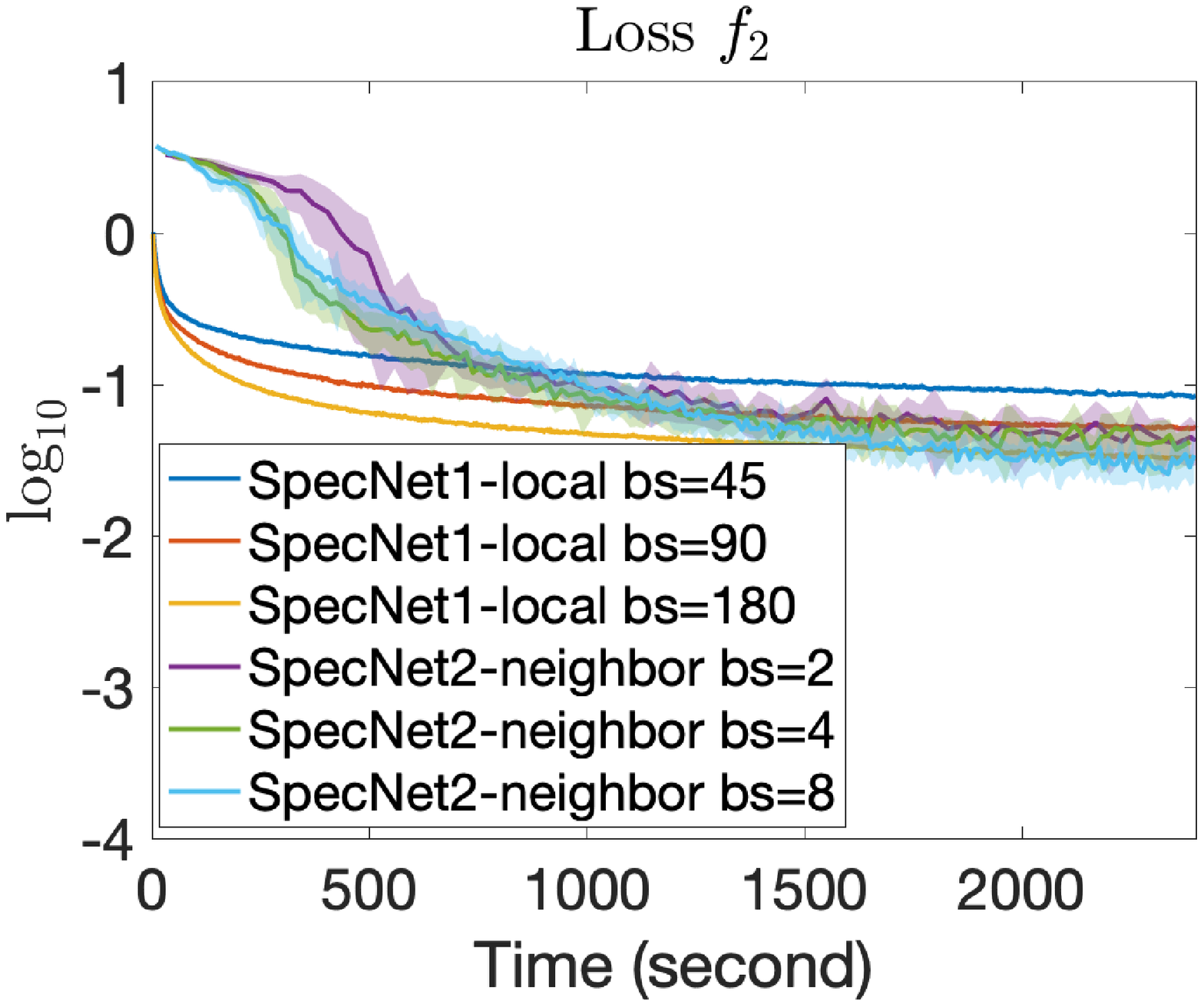}
    \caption{
    MNIST datset: plot of two different losses $\log_{10}(f_1(Y) - f_1^\star)$ and $\log_{10}(f_2(Y) - f_2^\star)$ over epochs (left two subfigures) and over time (right two subfigures), with $f_1$ and $f_2$ defined in \eqref{eq:specnet1} and \eqref{eq:uopt2}. Networks are trained on 20000 MNIST images on a 2021 14-inch Macbook Pro with an 8-core CPU. 
    }
    \label{fig:MNIST-loss}
\end{figure*}

Figure~\ref{fig:MNIST-embed} shows the embeddings (on both training and testing sets) at the 50-th training epoch, computed by SpecNet2-neighbor (with batch size 2) and SpecNet1-local (with batch size 45) respectively. By comparing to the true spectral embeddings (by linear algebra eigenvectors) plotted in the top panel, we can see that SpecNet2-neighbor gives a better result, and this is consistent with the lower value of losses of SpecNet2-neighbor in Figure \ref{fig:MNIST-loss}. As shown in Figure \ref{fig:MNIST-embed}, the embedding on test set is close to that on the training set, and this demonstrates the out-of-sample extension ability of SpecNet2.

\begin{figure*}[t]
    \begin{center}
    \includegraphics[width=0.245\textwidth]{./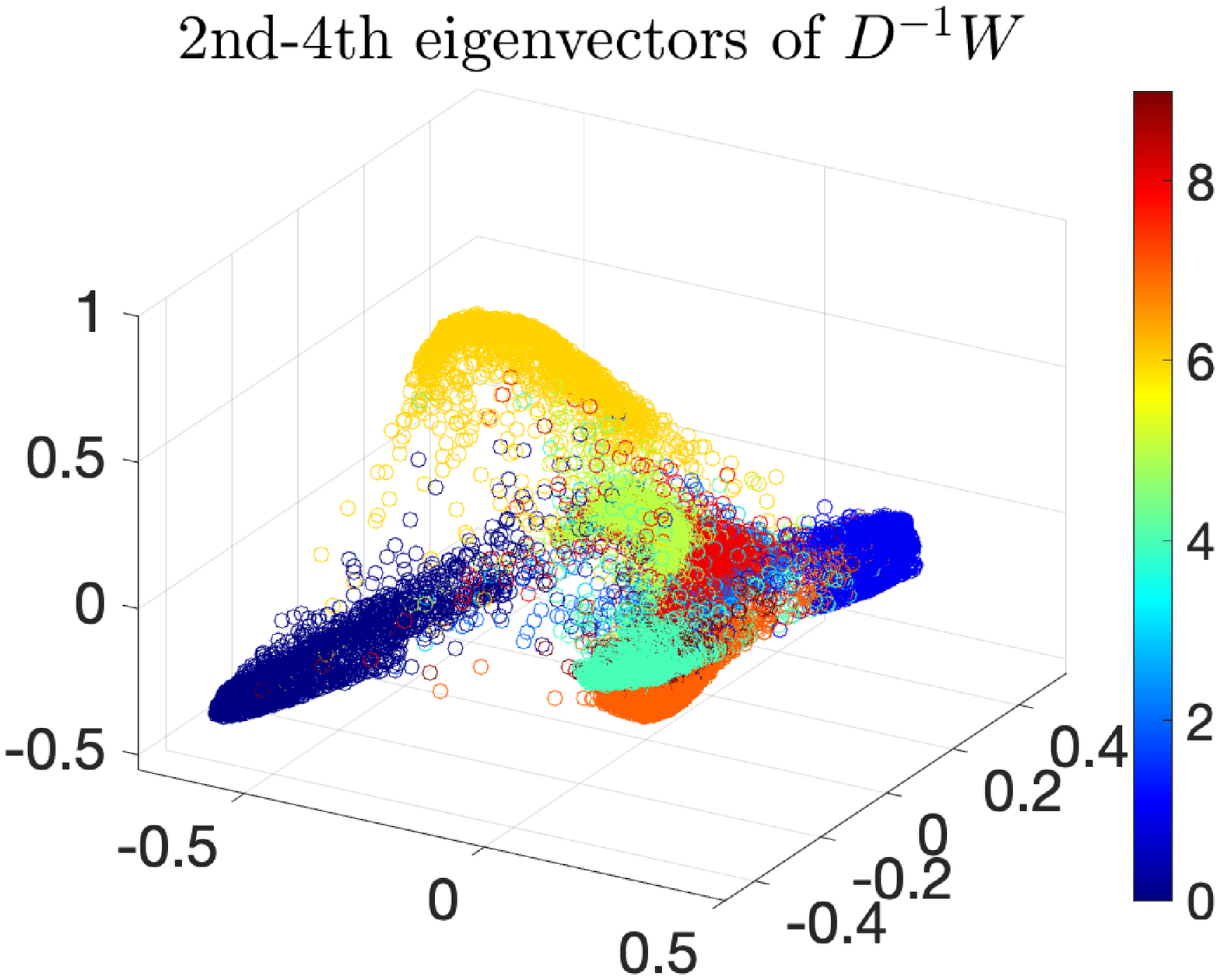}
    \includegraphics[width=0.245\textwidth]{./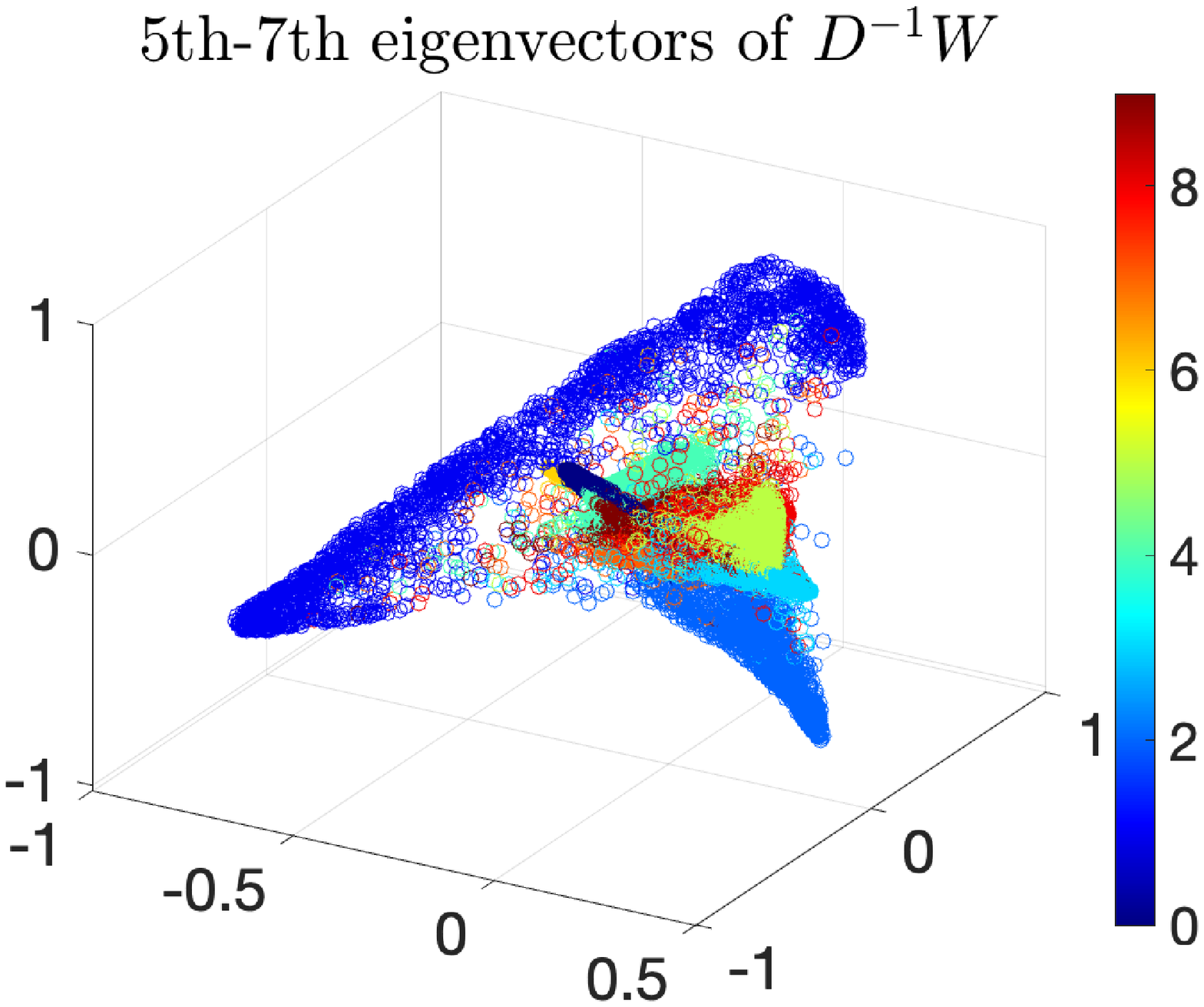}
    \end{center}
    \includegraphics[width=0.245\textwidth]{./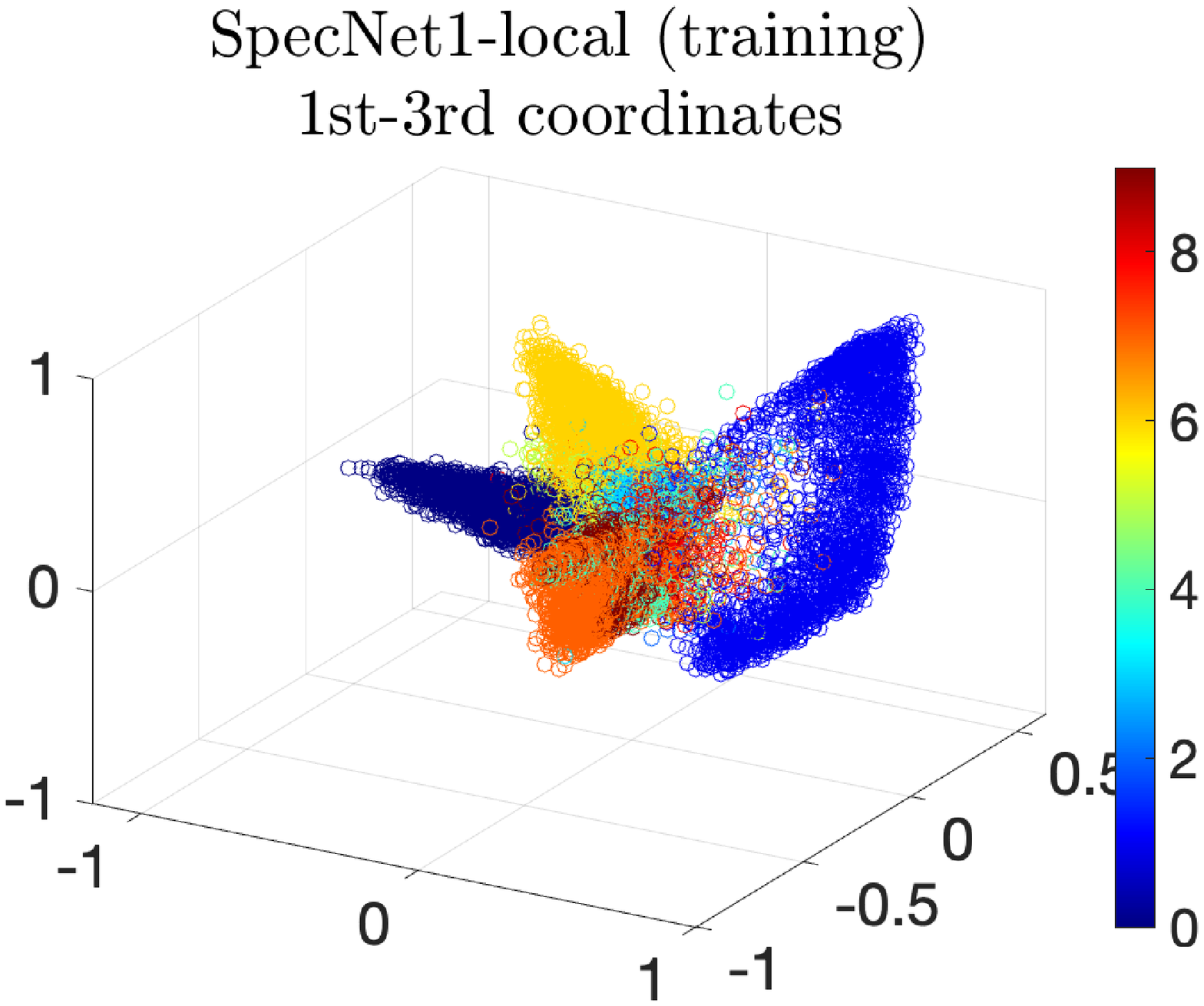}
    \includegraphics[width=0.245\textwidth]{./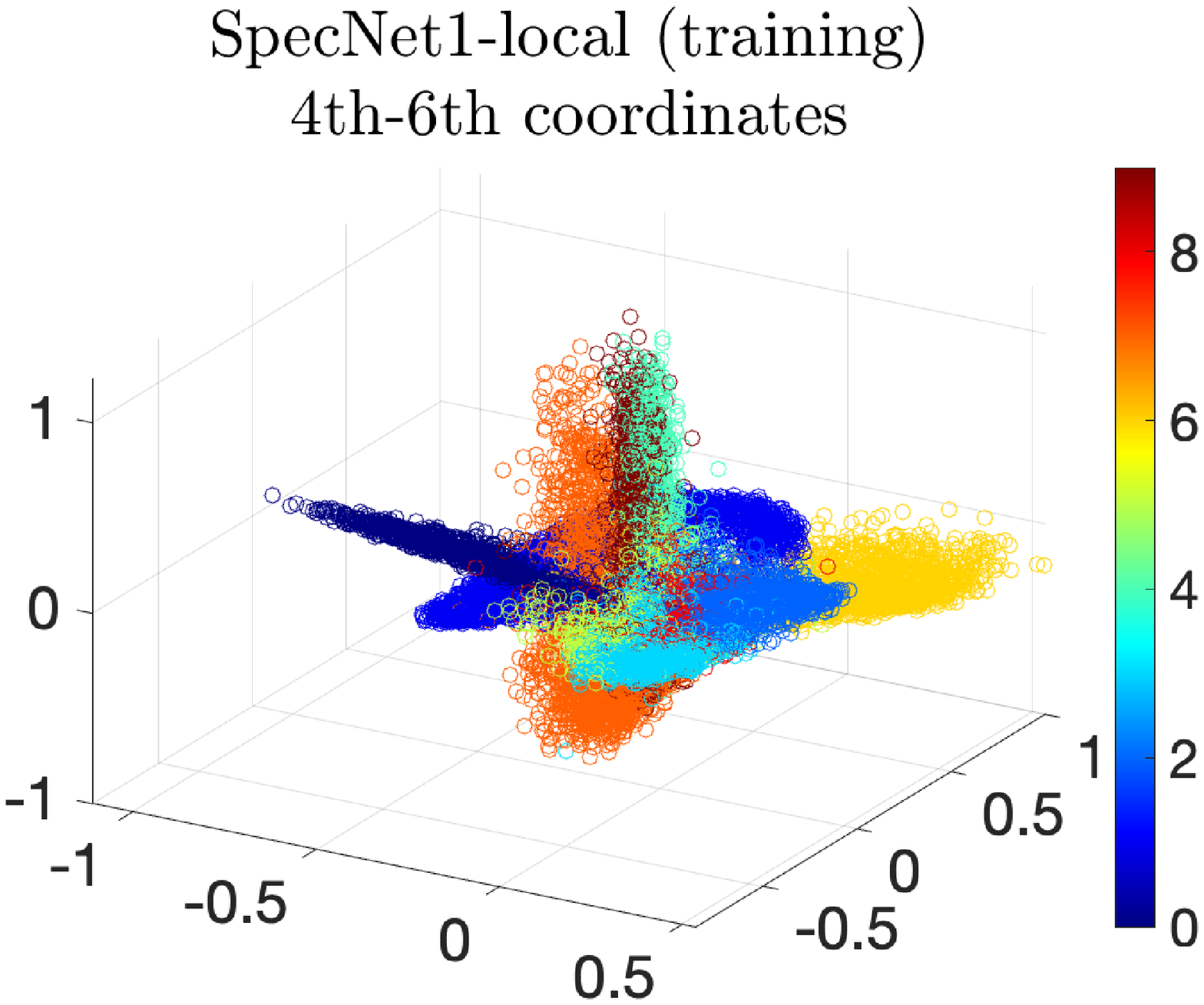}
    \includegraphics[width=0.245\textwidth]{./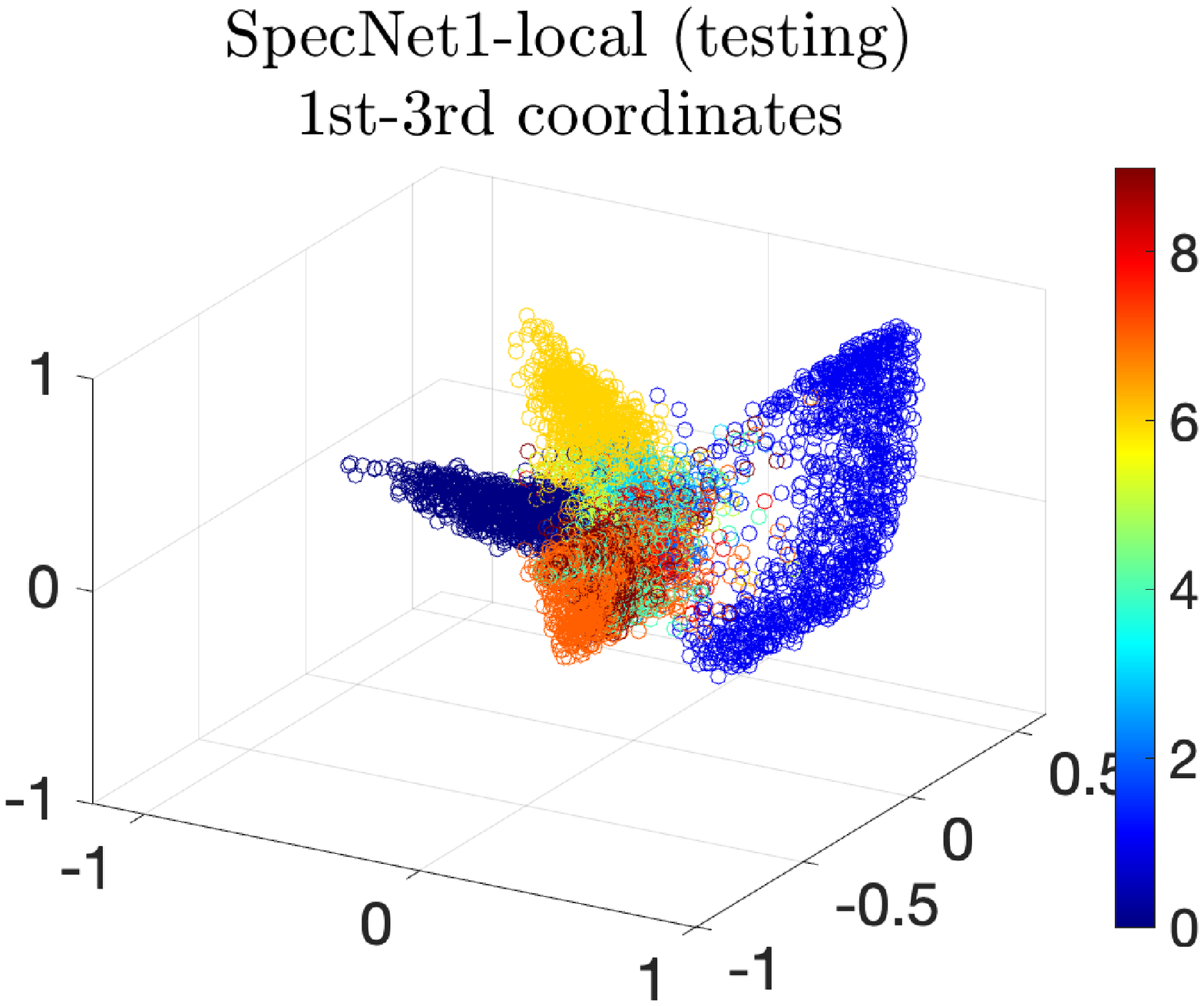}
    \includegraphics[width=0.245\textwidth]{./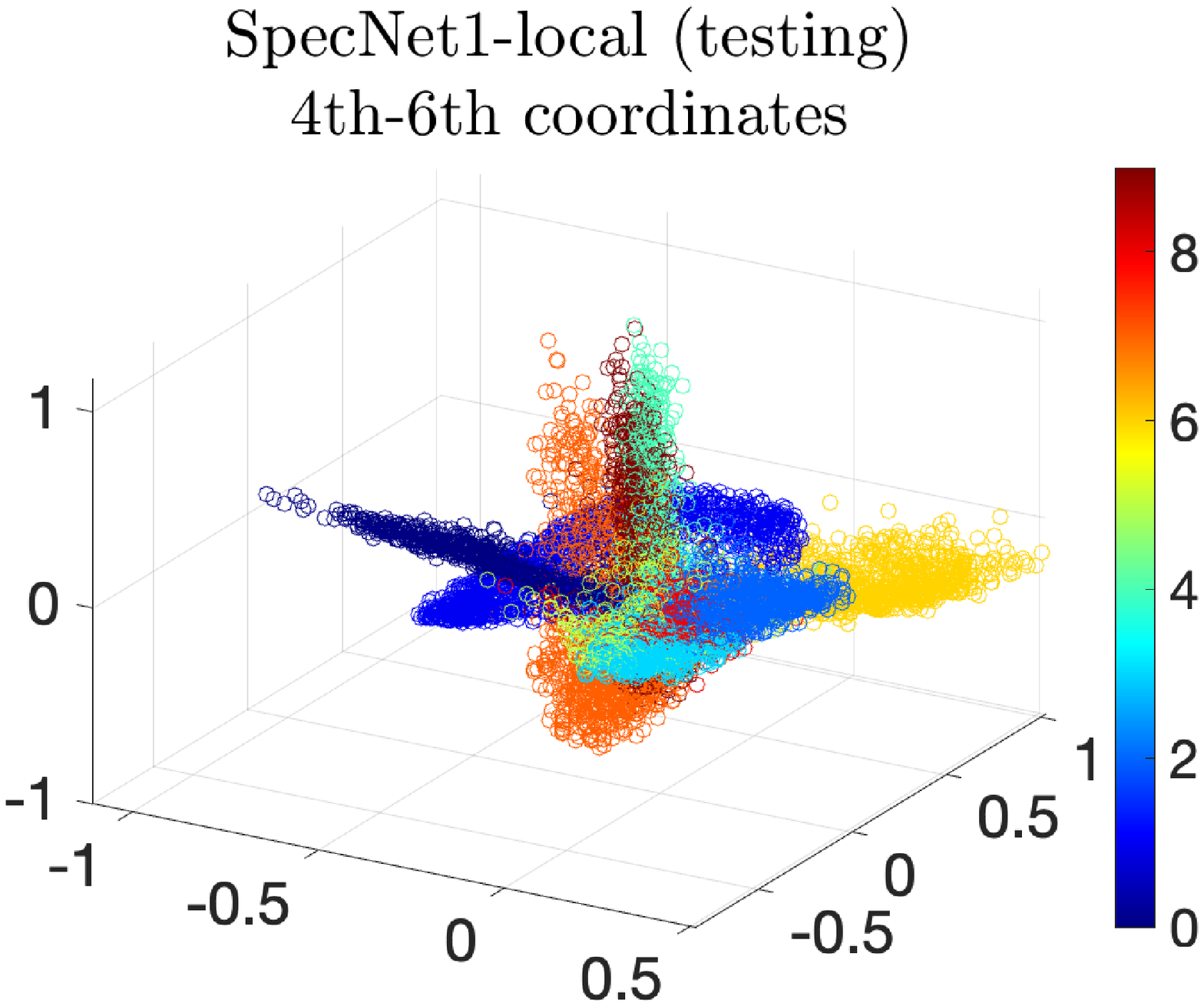}\\
    \includegraphics[width=0.245\textwidth]{./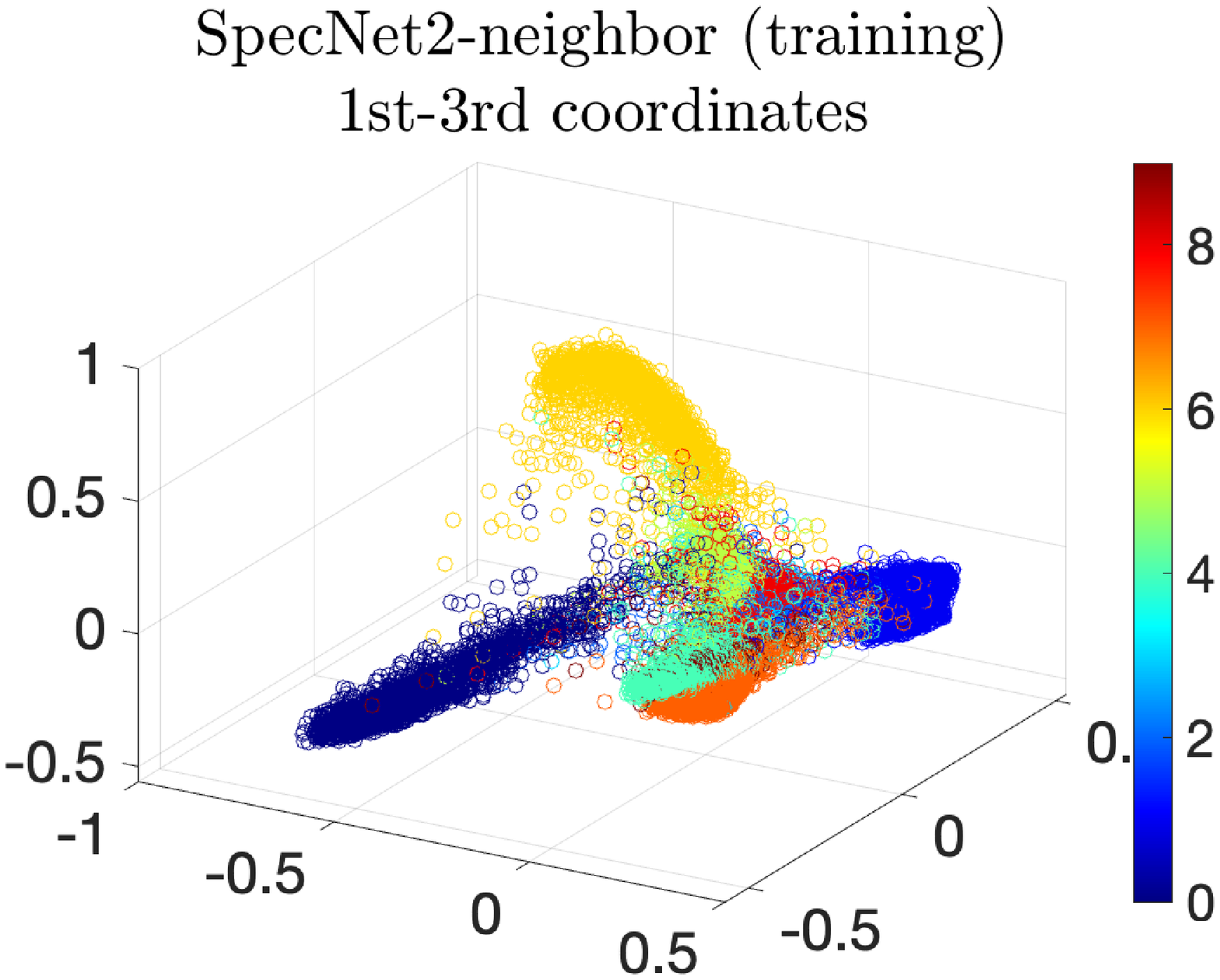}
    \includegraphics[width=0.245\textwidth]{./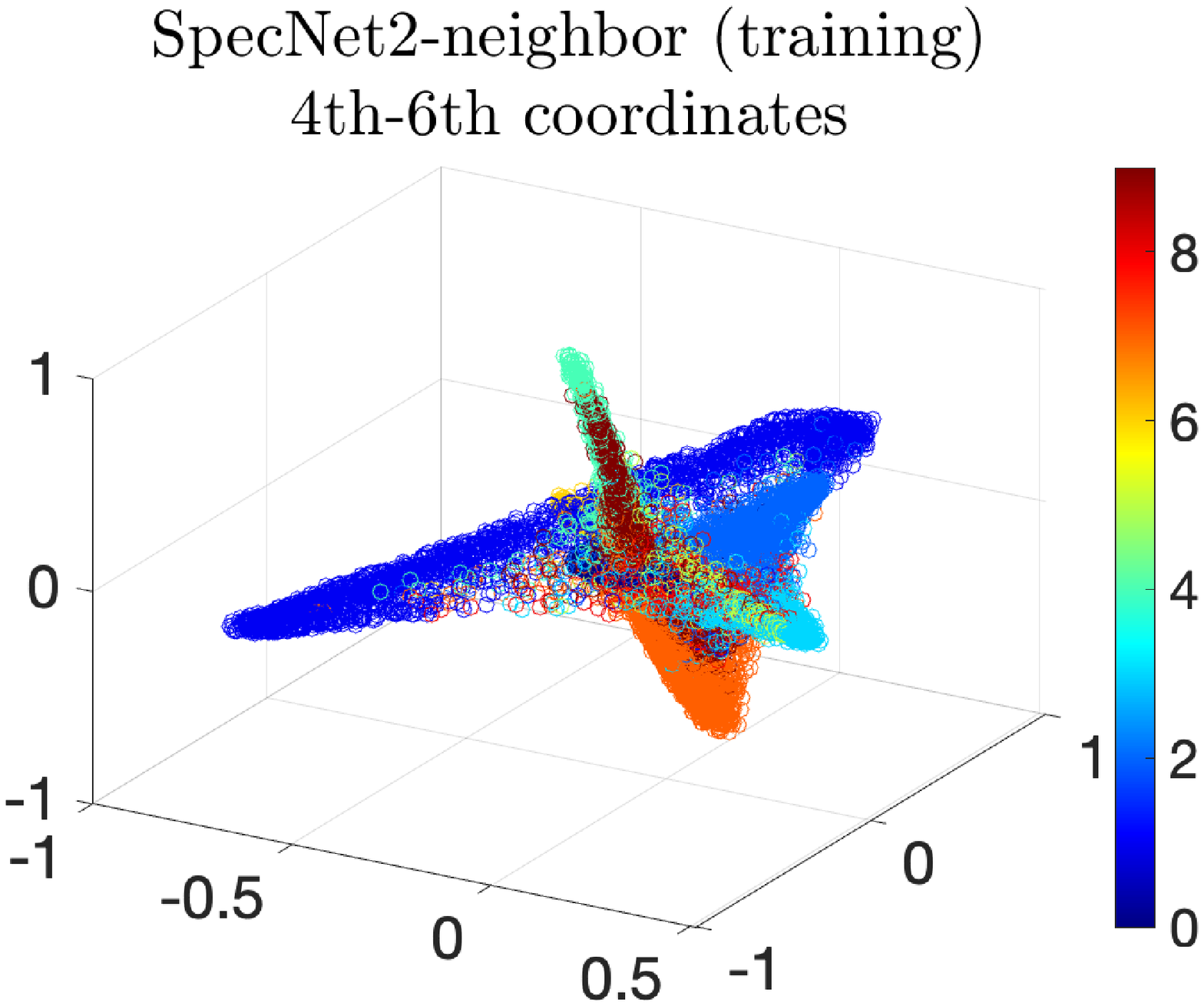}
    \includegraphics[width=0.245\textwidth]{./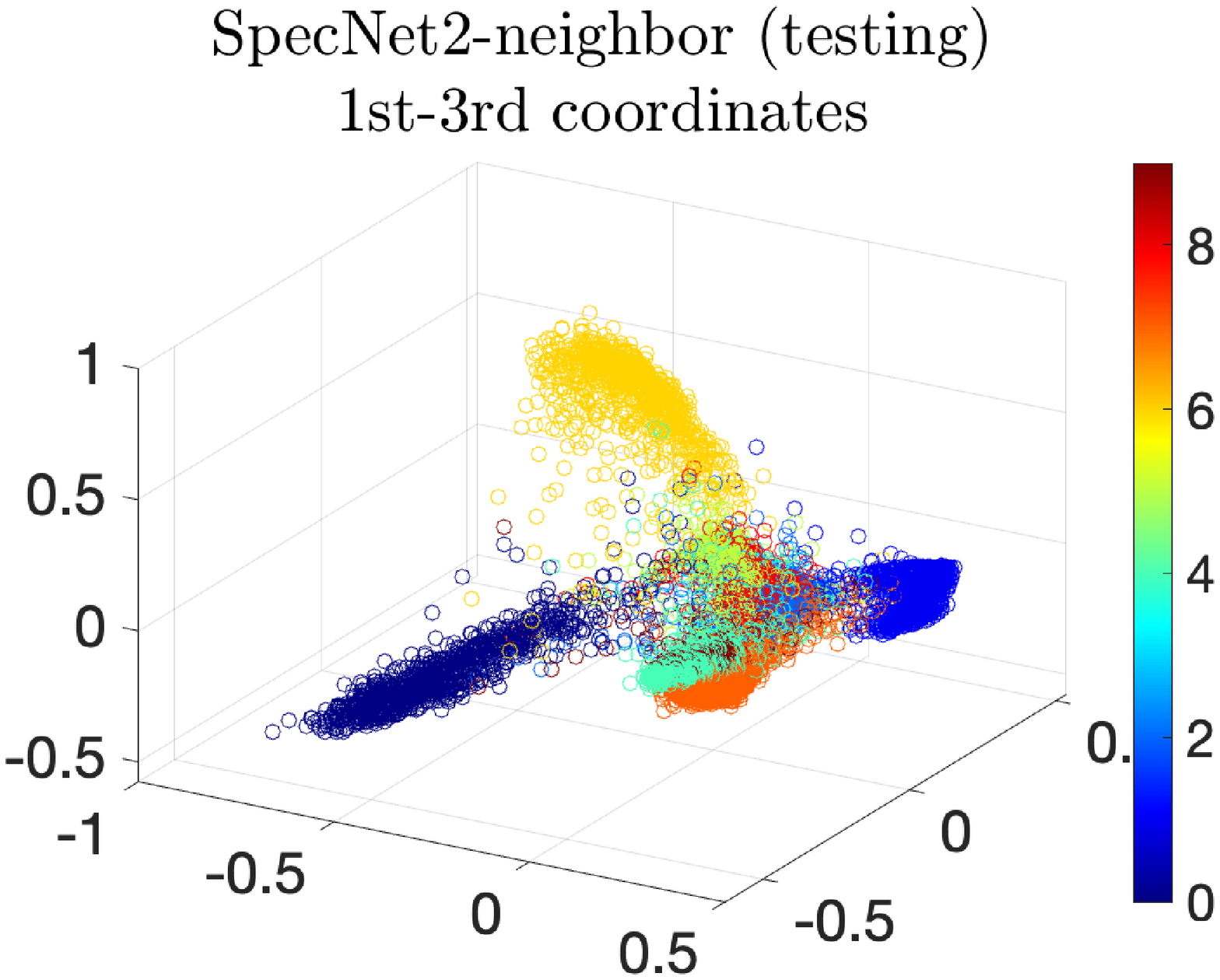}
    \includegraphics[width=0.245\textwidth]{./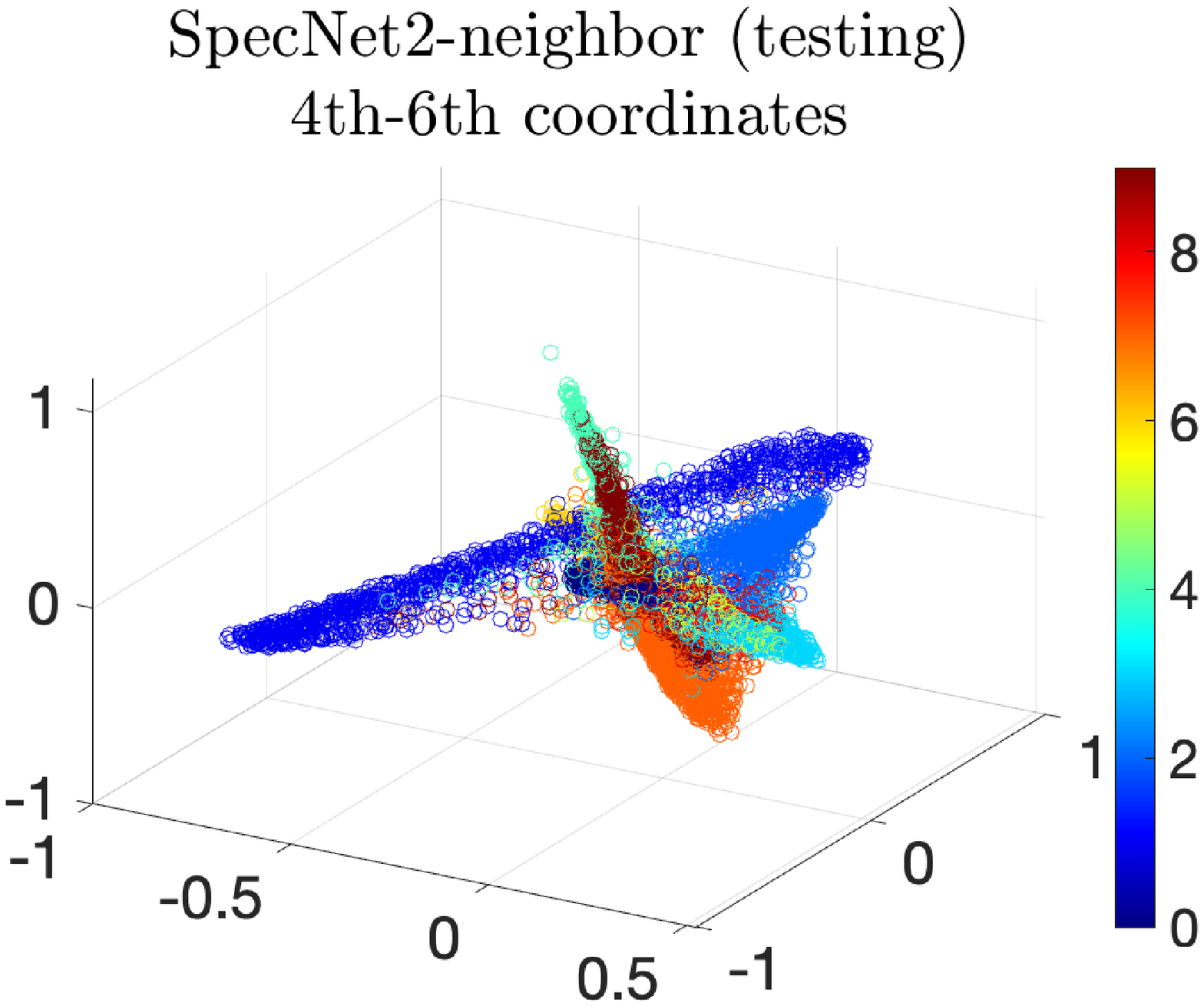}
    \caption{
    Embeddings of the MNIST datset.
    Top row: embeddings of the training set using the first six nontrivial eigenvectors of $D^{-1}W$; 
    Middle row: embeddings computed by SpecNet1-local with batch size 45 at the 50-th epoch; 
    Bottom row: embeddings computed by SpecNet2-neighbor with batch size 2 at the 50-th epoch. 
    }
    \label{fig:MNIST-embed}
\end{figure*}

\section{Discussion}
\label{sec:conclusion}

The current paper develops a new spectral network approach, which removes
the orthogonalization layer in the original
SpectralNet~\cite{shaham2018spectralnet}. We first propose an
unconstraint orthogonalization-free optimization problem to reveal the
leading $K$ eigenvectors of a given matrix pencil $(W,D)$. Iterative
algorithms with three different mini-batch gradient evaluation schemes,
namely local scheme, full scheme, and neighbor scheme, are proposed and
extended to the neural network training setting. The energy landscape of
the optimization problem is analyzed, and the global convergence to the
minimizer is guaranteed for all initial points up to a measure zero set.
Numerically, SpecNet2-neighbor achieves almost the same accuracy as
SpecNet1-full and SpecNet2-full while its computational cost is
significantly lower due to the neighborhood tracking trick.

There are several directions to extend the work. Theoretically, the
current analysis is in the sense of linear algebra. Further analysis is
needed to obtain optimization guarantee with the neural network
parametrization. Method-wise, the current approach assumes a graph
affinity matrix is provided, while in practice when only data samples are
provided one also needs to explore how to efficiently construct the graph
affinity, which can be used by the SpecNet2 neural network. Finally,
application to other real-world datasets could be explored, which would
potentially leads to more efficient implementations. 

\section{Proofs}

\subsection{Proof of Theorem~\ref{thm:minimizers}}
\label{app:thmproof}

We prove Theorem~\ref{thm:minimizers} in three steps. First we explicitly
give the expressions for all stationary points of \eqref{eq:uopt}. Then we
show that many of these stationary points are strict saddle points, \ie,
there exists decay direction at these points. Finally, we prove the rest
stationary points are of form as \eqref{eq:minimizerY} and are global
minimizers.

Recall the gradient of the objective function $f_2(Y)$ is of form
\eqref{eq:gradY}. We can also derive the Hessian of the objective function
and its bilinear form satisfies,
\begin{align*}
    S^\top \nabla^2 f_2(Y) S = & - 4 \trace{S^\top \frac{W}{n} S}
    + 4 \trace{S^\top \frac{D}{n} S Y^\top \frac{D}{n} Y} \\
    & + 4 \trace{S^\top \frac{D}{n} Y S^\top \frac{D}{n} Y}
    + 4 \trace{S^\top \frac{D}{n} Y Y^\top \frac{D}{n} S},
\end{align*}
where $S^\top \nabla^2 f_2(Y) S$ is a symbolic notation.

Stationary points of \eqref{eq:uopt} satisfy the first order condition, \ie,
\begin{equation} \label{eq:stationarycond}
    \nabla f_2(Y) = 0 \Leftrightarrow
    W Y = DY Y^\top \frac{D}{n} Y \Leftrightarrow
    \left( D^{-\frac{1}{2}} W D^{-\frac{1}{2}} \right)
    \left( D^{\frac{1}{2}} Y \right)
    = \left( D^{\frac{1}{2}} Y \right) Y^\top \frac{D}{n} Y.
\end{equation}
The right most equality in \eqref{eq:stationarycond} implies that
$D^{\frac{1}{2}} Y$ lies in an invariant subspace of $D^{-\frac{1}{2}} W
D^{-\frac{1}{2}}$, which is formed by eigenvectors of $D^{-\frac{1}{2}} W
D^{-\frac{1}{2}}$. Denote the invariant subspace by eigenvectors $V_r \in
\bbR^{n \times r}$, where $r \leq K$ is the dimension. The corresponding
eigenvalues are denoted by a diagonal matrix $\Lambda_r \in \bbR^{r \times
r}$. In connection to \eqref{eq:evp}, $\Lambda_r$ consists $r$ eigenvalues
of $(W,D)$ and $V_r$ consists of $r$ eigenvectors of $(W,D)$ transformed
by $D^{\frac{1}{2}}$. $D^{\frac{1}{2}} Y$ then can be written as
$D^{\frac{1}{2}} Y = V_r A$ for $A \in \bbR^{r \times K}$ being a full row
rank matrix. Substituting the expression back into
\eqref{eq:stationarycond}, we obtain,
\begin{equation}
    V_r \Lambda_r A = V_r A A^\top A \Leftrightarrow
    \Lambda_r A = A A^\top A \Leftrightarrow
    \Lambda_r = A A^\top,
\end{equation}
where the equivalences are due to the orthogonality of $V_r$ and full-rankness
of $A$. Therefore, $A$ admit the follow expression,
\begin{equation}
    A = \Lambda_r^{\frac{1}{2}} Q,
\end{equation}
where $Q \in \bbR^{r \times K}$ is a unitary matrix such that $QQ^\top = I$.
Putting the above analysis together, we conclude that the stationary points
of \eqref{eq:uopt} are of form,
\begin{equation} \label{eq:stationarypoints}
    Y = U \Lambda^{\frac{1}{2}} P Q,
\end{equation}
where $\Lambda$ and $U$ are the eigenvalue and the corresponding eigenvector
matrix of $(W,D)$, $P \in \bbR^{n \times r}$ is the first $r$ columns of an
arbitrary permutation matrix for $r\leq K$, and $Q \in \bbR^{r \times K}$
is an arbitrary orthogonal matrix.

Next we will show that many of these stationary points are saddle points.
Consider a stationary point $Y_0$ which does not include one of the leading
$K$ eigenvectors, \eg, $Y_0^\top D U_i = 0$ and $i$ is an index smaller than
$K$. If $r < K$, then we have a unitary vector $Q_\perp \in \bbR^{1 \times K}$
such that $Q_\perp Q^\top = 0$. Selecting a direction $S_0 = U_i Q_\perp$,
the Hessian at $Y_0$ evaluated at $S_0$ is,
\begin{equation}
    S_0^\top \nabla^2 f_2(Y_0) S_0 = - 4 \trace{S_0^\top \frac{W}{n} S_0}
    + 4 \trace{S_0^\top \frac{D}{n} S_0 Y_0^\top \frac{D}{n} Y_0}
    = - 4 \lambda_i < 0.
\end{equation}
If $r = K$, then there are $K$ eigenvectors selected by $P$ and one of them
must have index greater than $K$. Without loss of generality, we assume the
first column of $U \Lambda^{\frac{1}{2}} P$ is eigenvector with index $K+1$.
Then we choose a specific $S_0 = \begin{bmatrix} U_i & 0 & \cdots &
0\end{bmatrix} Q \in \bbR^{n \times K}$ and obtain,
\begin{equation}
    S_0^\top \nabla^2 f_2(Y_0) S_0 = - 4 \trace{S_0^\top \frac{W}{n} S_0}
    + 4 \trace{S_0^\top \frac{D}{n} S_0 Y_0^\top \frac{D}{n} Y_0}
    = -4 \lambda_i + 4 \lambda_{K+1} < 0,
\end{equation}
where the last inequality is due to the assumption on the nonzero
eigengap between the $K$-th and $(K+1)$-th eigenvalues. Therefore, we
conclude that when any of the leading $K$ eigenvectors is not selected in
\eqref{eq:stationarypoints}, the stationary point is a strict saddle point.
Besides these strict saddle points, the rest stationary points are of form,
\begin{equation} \label{eq:globalminimizers}
    Y = U_K \Lambda_K^{\frac{1}{2}} Q,
\end{equation}
where $\Lambda_K$ consists $K$ leading eigevalues and $U_K$ consists the
corresponding $K$ eigenvectors, $Q \in \bbR^{K \times K}$ is an arbitrary
orthogonal matrix.

\subsection{Proof of Corollary \ref{cor:minimizers}}\label{proof:cor5}

\begin{proof}
$f_2(Y)$ is a smooth function of $Y$ and note that the second term inside the trace of $f_2$ is $Y^\top DYY^\top DY$, which is a fourth-order term of $Y$, and $D$ is positive-definite, so $f_2(Y)\to+\infty$ as $\norm{Y}\to +\infty$ and $f_2(Y)$ is bounded from below. Hence
global minimizers of $f_2(Y)$ exist and are among local minimizers.
Substituting all local minimizers as shown in Theorem~\ref{thm:minimizers}
into $f_2(Y)$, we have,
\begin{equation}
    f_2(Y^\star) = - \sum_{i=1}^K \lambda_i^2,
\end{equation}
which means all local minimizers are of the same objective function value.
They are all global minimizers.
\end{proof}

\subsection{Proof of Theorem \ref{thm:convergence}}\label{app:proofthm3}

\begin{proof}[Proof of Lemma \ref{lem:bound}]

Let $Y^+ = g_p(Y)$ for any $p = 1, \dots, b$. By the definition of $W_0$,
it suffices to show $\norm{D_i^{\frac{1}{2}}Y_i^+}_2<R$ for $i \in S_p$ to
prove the lemma.

First, recall the iterative expression for $i$-th coordinate,
\begin{align*}
    Y_i^+ &= Y_i + 4\alpha(W_{i,:}Y-D_i Y_i(Y^\top D Y))\\
    &= Y_i + 4\alpha(W_{i,i}Y_i+W_{i,i^c}Y_{i^c}-D_i Y_i(Y_i^\top D_i Y_i)
    -D_i Y_i(Y_{i^c}^\top D_{i^c} Y_{i^c})).
\end{align*}
Left multiplying $D_i^{\frac{1}{2}}$ for rescaling purpose, we obtain,
\begin{align*}
    D_i^{\frac{1}{2}}Y_i^+
    = & D_i^{\frac{1}{2}}Y_i + 4 \alpha \left(
    W_{i,i}D_i^{\frac{1}{2}}Y_i + D_i^{\frac{1}{2}}
    W_{i,i^c} D_{i^c}^{-\frac{1}{2}}
    (D_{i^c}^{\frac{1}{2}} Y_{i^c}) \right. \\
    & \left. - D_i(D_i^{\frac{1}{2}}Y_i)(Y_i^\top D_i Y_i)
    - D_i (D_i^{\frac{1}{2}}Y_i)(Y_{i^c}^\top D_{i^c} Y_{i^c}) \right) \\
    =: & D_i^{\frac{1}{2}}Y_i + 4 \alpha T_i,
\end{align*}
where $T_i$ denotes all terms in the parentheses. Denote $X:=DY$,
$X_i:=D_i^{\frac{1}{2}}Y_i$, $X_{i^c}:=D_{i^c}^{\frac{1}{2}}Y_{i^c}$ and
$X_i^+:=D_i^{\frac{1}{2}}Y_i^+$. Then we have
\begin{align*}
    & \norm{X_i^+}_2^2\\
    = & \norm{X_i}_2^2 + 16 \alpha^2\norm{T_i}_2^2 + 8 \alpha \left(
    W_{i,i}\norm{X_i}_2^2 - D_i \norm{X_i}_2^4
    + D_i^{\frac{1}{2}} W_{i,i^c} D_{i^c}^{-\frac{1}{2}}X_{i^c} X_i^\top
    - D_i \norm{X_i X_{i^c}^\top}_2^2 \right)\\
    \leq & \norm{X_i}_2^2 + 16 \alpha^2\norm{T_i}_2^2 \\
    & + 8 \alpha \left(
    W_{i,i}\norm{X_i}_2^2 - D_i \norm{X_i}_2^4
    + D_i^{\frac{1}{2}} \norm{W_{i,i^c} D_{i^c}^{-\frac{1}{2}}}
    \norm{X_{i^c} X_i^\top} - D_i \norm{X_i X_{i^c}^\top}_2^2 \right).
\end{align*}

First, we bound $\norm{T_i}_2^2$ as,
\begin{align*}
    \norm{T_i}_2^2
    \leq & 3\left(\max_i W_{i,i}^2 R^2 + \norm{D_i X_i}_2^2
    \norm{X^\top X}_2^2 + \norm{D_i^{\frac{1}{2}}W_{i,i^c}
    D_{i^c}^{-\frac{1}{2}}}_2^2 \norm{X_{i^c}}_2^2\right)\\
    \leq & 3\left(\max_i W_{i,i}^2 R^2 + \max_i D_i^2\cdot n^2K^2R^6
    + \max_i D_i\norm{W_{i,i^c}D_{i^c}^{-\frac{1}{2}}}_2^2\cdot nR^2 \right)
    = M(R),
\end{align*}
where we adopts $\max_i \norm{X_i}_2<R$, $\norm{X^\top X}_2^2 \leq
\norm{X^\top X}_F^2 \leq K^2(nR^2)^2$, and $\norm{X_{i^c}}_2^2 < nR^2$.

Then, we estimate the coefficient of linear term in $\alpha$. By the
argument of second order polynomial, we have,
\begin{align*}
W_{i,i}\norm{X_i}_2^2 - D_i\norm{X_i}_2^4
\leq & \frac{W_{i,i}^2}{4D_i}, \\
D_i^{\frac{1}{2}} \norm{W_{i,i^c}D_{i^c}^{-\frac{1}{2}}}_2
\norm{X_i X_{i^c}^\top}_2 - D_i\norm{X_i X_{i^c}^\top}_2^2
\leq & \frac{\norm{W_{i,i^c}D_{i^c}^{-\frac{1}{2}}}_2^2}{4}.
\end{align*}

Next we discuss the inequality of $\norm{X^+_i}_2^2$ in two cases:
$\norm{X_i}_2 \leq \frac{R}{2}$ and $\frac{R}{2} < \norm{X_i}_2 < R$.

When $\norm{X_i}_2\leq\frac{R}{2}$, we have
\begin{align*}
    \norm{X_i^+}_2^2 \leq \frac{R^2}{4} + 16\alpha^2 M(R)
    + 8 \alpha M_2 < R^2,
\end{align*}
where the last inequality can be verified using $\alpha <
\frac{-2M_2+\sqrt{4M_2^2+3M(R)R^2}}{8M(R)}$.

When $\frac{R}{2} < \norm{X_i}_2 < R$, again by the argument of second
order polynomial, we have
\begin{equation*}
    W_{i,i} \norm{X_i}_2^2 - D_i\norm{X_i}_2^4
    + \frac{\norm{W_{i,i^c}D_{i^c}^{-\frac{1}{2}}}_2^2}{4}
    < -\frac{1}{8}
\end{equation*}
due to the fact that $R\geq 2\sqrt{M_1}$. Substituting into the inequality
of $\norm{X_i^+}_2^2$, we have
\begin{align*}
    \norm{X_i^+}_2^2 \leq \norm{X_i}_2^2 + 16 \alpha^2 M(R) - \alpha
    < \norm{X_i}_2^2 < R^2,
\end{align*}
where the second inequality can be verified using $\alpha <
\frac{1}{16M(R)}$.

\end{proof}

\begin{proof}[Proof of Lemma~\ref{lem:Lip}]

First, through a direct calculation, we have
\begin{align*}
    \frac{\partial f_2}{\partial Y_{i_1,k_1}}
    = -4\sum_{j=1}^n W_{i_1,j}Y_{j,k_1}
    + 4 D_{i_1} \sum_{k=1}^K Y_{i_1,k}
    \left(\sum_{\ell=1}^n Y_{\ell,k}D_\ell Y_{\ell,k_1} \right).
\end{align*}
And the second order partial derivative admits,
\begin{align*}
    \frac{\partial^2 f_2}{\partial Y_{i_1,k_1}\partial Y_{i_2,k_2}}
    = & -4 \delta_{k_1k_2} W_{i_1,i_2} + 4 D_{i_1} \delta_{i_1i_2}
    \left( \sum_{\ell=1}^n Y_{\ell,k_2}D_\ell Y_{\ell,k_1} \right)\\
    & + 4D_{i_1}Y_{i_1,k_2} D_{i_2}Y_{i_2,k_1}
    + 4D_{i_1}\sum_{k=1}^K Y_{i_1,k} Y_{i_2,k} D_{i_2} \delta_{k_1k_2},
\end{align*}
where $\delta_{ij}=1$ if $i=j$ and $\delta_{ij}=0$ otherwise. By
assumption that $\max_i \norm{D_i^{\frac{1}{2}}Y_i}_2 < R$, we have
$\max_{i,j} \abs{ D_i^{\frac{1}{2}}Y_{i,j}}<R$, and $\max_j
\norm{D^{\frac{1}{2}}Y_{:,j}}_2^2 < nR^2$. Therefore,
\begin{align*}
    \abs{\frac{\partial^2 f_2}{\partial Y_{i_1,k_1}\partial Y_{i_2,k_2}}}
    \leq & 4\max_{i,j}W_{i,j} + 4 D_{i_1}nR^2 + 4 \max_iD_{i} KR^2 \\
    \leq & 4\max_{i,j}W_{i,j} + 4 \max_iD_{i} (n+K)R^2.
\end{align*}

\end{proof}

\begin{proof}[Proof of Lemma \ref{lem:grad}]
Applying the updating expression, we have
\begin{align*}
    f_2(Y^{(\ell+1)}) \leq f_2(Y^{(\ell)})
    - \alpha \sum_{i\in S_\ell} \sum_{j=1}^K
    \left(\nabla_{i,j}f_2(Y^{(\ell)}) \right)^2
    + \alpha^2 L \sum_{i\in S_\ell}
    \sum_{j=1}^K \left(\nabla_{i,j}f_2(Y^{(\ell)}) \right)^2,
\end{align*}
where we abuse noation $S_\ell$ to denote the batch at $\ell$-th
iteration. Since $1-\frac{\alpha L}{2}>0$, we have
\[
    \sum_{i\in S_\ell} \sum_{j=1}^K \left(\nabla_{i,j}
    f_2(Y^{(\ell)})\right)^2
    \leq \frac{1}{\alpha(1-\alpha L)}
    \left(f_2(Y^{(\ell)})-f_2(Y^{(\ell+1)})\right).
\]
Summing over all $\ell$ from 0 to $T-1$, for $T = bP$ and any large
integer $P$, we have
\begin{align*}
    \sum_{p=0}^{P-1} \sum_{\ell=bp}^{b(p+1)-1}
    \left[ \sum_{i\in S_\ell}\sum_{j=1}^K
    \left(\nabla_{i,j}f_2(Y^{(\ell)}) \right)^2 \right]
    & \leq \frac{1}{\alpha(1-\alpha L)}
    \left(f_2(Y^{(0)})-f_2(Y^{(T)}) \right)\\
    &\leq \frac{1}{\alpha(1-\alpha L)}
    \left(f_2(Y^{(0)})- f_2^* \right),
\end{align*}
where $f_2^*$ denotes the minimum of $f_2$. Hence 
\[
    \lim_{ \ell\to\infty} \sum_{i\in S_\ell}
    \sum_{j=1}^K \left(\nabla_{i,j}f_2(Y^{(\ell)}) \right)^2 = 0.
\]
That is, for any $\epsilon>0$, there exists an integer $P_0>0$, such that
for any $p\geq P_0$, we have
\begin{equation*}
    \sum_{i\in S_\ell} \sum_{j=1}^K
    \left(\nabla_{i,j}f_2(Y^{(\ell)})\right)^2 \leq \epsilon^2,
    \quad \text{for } \ell = pb, \dots, (p+1)b-1.
\end{equation*}

For any two iterations, $\ell_1$ and $\ell_2$ such that $pb \leq \ell_1
\leq \ell_2 < (p+1)b$, and for any $i\in S_{\ell_1}$, $1\leq j \leq K$,
we have
\begin{align*}
    \abs{\nabla_{i,j}f_2(Y^{(\ell_1)})-\nabla_{i,j}f_2(Y^{(\ell_2)})}
    \leq & \sum_{\ell=\ell_1}^{\ell_2-1}
    \abs{\nabla_{i,j}f_2(Y^{(\ell)})-\nabla_{i,j}f_2(Y^{(\ell+1)})}\\
    \leq & L\sum_{\ell=\ell_1}^{\ell_2-1}\norm{Y^{(\ell)}-Y^{(\ell+1)}}_2\\
    \leq & L\sum_{\ell=\ell_1}^{\ell_2-1}
    \alpha \sqrt{\sum_{i\in S_\ell}\sum_{j=1}^K
    \left(\nabla_{i,j}f_2(Y^{(\ell)}) \right)^2}\\
    < & b\epsilon,
\end{align*}
where the last inequality is due to $\alpha L< 1$. 

Let $\ell_0$ be an iteration within $pb$ and $(p+1)b-1$, $p \geq P_0$.
Note that $\cup_{\ell=pb}^{(p+1)b-1} S_\ell = [n]$. Then we have
\begin{align*}
    \norm{\nabla f_2(Y^{(\ell_0)})}_2^2
    = & \sum_{\ell=pb}^{(p+1)b-1} \sum_{i\in S_\ell}\sum_{j=1}^K
    \left(\nabla_{i,j}f_2(Y^{(\ell_0)}) \right)^2\\
    = & \sum_{\ell=pb}^{(p+1)b-1}\sum_{i\in S_\ell}\sum_{j=1}^K
    \left(\nabla_{i,j}f_2(Y^{(\ell_0)}) - \nabla_{i,j}f_2(Y^{(\ell)})
    + \nabla_{i,j}f_2(Y^{(\ell)}) \right)^2\\
    \leq & \sum_{\ell=pb}^{(p+1)b-1} \sum_{i\in S_\ell}\sum_{j=1}^K
    \Big[ \left(\nabla_{i,j}f_2(Y^{(\ell)}) \right)^2
    + 2\epsilon\abs{\nabla_{i,j}f_2(Y^{(\ell_0)})
    - \nabla_{i,j}f_2(Y^{(\ell)})}\\
    & + \abs{\nabla_{i,j}f_2(Y^{(\ell_0)})
    - \nabla_{i,j}f_2(Y^{(\ell)})}^2\Big]\\
    < & (b+2nKb+nKb^2)\epsilon^2.
\end{align*}
Since $\epsilon$ can be arbitrarily small, we proved the lemma.

\end{proof}

\begin{proof}[Proof of Theorem~\ref{thm:convergence}]

Lemma~\ref{lem:bound} states that for any $Y\in W_0$ and $1\leq i\leq b$,
we have $g_i(Y)\in W_0$. Hence we have for any $Y\in W_0$, $g(Y)\in W_0$.
Lemma \ref{lem:Lip} states that $f_2$ has bounded Lipschitz coordinate
gradient in $W_0$, and the stepsize $\alpha$ satisfies $\alpha
<\frac{1}{KL\max_{i\in[b]}\abs{S_i}}$. Note that
$\max_{i\in[b]}\norm{\nabla^2
f_2(Y)_{S_i}}_2\leq\max_{i\in[b]}\norm{\nabla^2
f_2(Y)_{S_i}}_F\leq\sqrt{(K\cdot \max_{i\in[b]}\abs{S_i})^2 L^2}
=KL\max_{i\in[b]}\abs{S_i}$, Proposition 6 in \cite{lee2019first} shows
that under these conditions, we have $\det(Dg(x)) \neq 0$. Corollary 5 in
\cite{lee2019first} tells us that
$\mu(\{Y^{(0)}:\lim_{j\to\infty}g^j(Y^{(0)}) \in \chi^s\})=0$ for $\chi^s$
being the set of unstable stationary points and local maximizers.
Combining with the conclusion of Lemma~\ref{lem:grad}, we obtain the
conclusion of Theorem~\ref{thm:convergence}.

\end{proof}

\section*{Acknowledgement}
The work is supported by NSF DMS-2031849.
Ziyu Chen is supported by Simons Foundation Award and Simons Foundation - Math+X Investigators;
Xiuyuan Cheng is partially supported by NSF DMS-2007040, NIH and the Alfred P. Sloan Foundation.

\bibliographystyle{plain}
\bibliography{spec}

\appendix

\section{Gradient evaluation schemes for SpecNet1}

\subsection{Gradient evaluation schemes for $f_1$}
\label{subsec:f1scheme}
The gradient descent of the formulation in \eqref{eq:specnet1}
can be written as $Y = Y -2\alpha(I-D^{-\frac{1}{2}}WD^{-\frac{1}{2}})Y$, where $\alpha$ is the stepsize. We multiply both sides by $D^{-\frac{1}{2}}$ on the left and we have $D^{-\frac{1}{2}}Y = D^{-\frac{1}{2}}Y - 2\alpha (I-D^{-1}W)D^{-\frac{1}{2}}Y$. So instead of updating $Y$, we update $\tilde{Y}:=D^{-\frac{1}{2}}Y$ at each iteration, i.e., $\tilde{Y} = \tilde{Y} - 2\alpha (I-D^{-1}W)\tilde{Y}$. And the constraint will be $\tilde{Y}^\top D\tilde{Y} = n^2 I$. To keep the consistency of the notation, we will abuse the notion of gradient and still call that $2(I-D^{-1}W)\tilde{Y}$ as the gradient of $f_1$ for the rest of the paper, while keeping in mind that we are updating $D^{-\frac{1}{2}}Y$ in \eqref{eq:specnet1}. Then the gradient evaluation schemes for $f_1$ with orthogonalization constraint works as follows
\begin{itemize}
    \item \textbf{\emph{Local evaluation scheme:}} One can evaluate the gradient on each
mini-batch as
\begin{equation} \label{eq:gradYlocal1}
    \nabla_\calB \tilde{f}_{1}(Y_{\calB}) = 2(I-\tilde{D}_{\calB}^{-1}W_{\calB,\calB})Y_{\calB}.
\end{equation}
The iterative algorithm then conducts the update as,
\begin{equation}\label{eq:iterYlocal1}
    Y_{\calB} = Y_{\calB} - \alpha \nabla_\calB \tilde{f}_{1}(Y_{\calB}),
\end{equation}
where $\alpha>0$ is the stepsize, which is followed by an orthogonalization step $Y = bYR^{-1}$, where $\tilde{D}_{\calB}^{\frac{1}{2}}Y_{\calB} = QR$ is the QR decomposition of $\tilde{D}_{\calB}^{\frac{1}{2}}Y_{\calB}$.

\item \textbf{\emph{Full evaluation scheme:}} We evaluate the gradient on batch
$\calB$ as
\begin{equation} \label{eq:gradYfull1}
    \nabla_\calB f_1(Y) = 2(I-D_{\calB}^{-1}W_{\calB,X})Y,
\end{equation}
and the update is then conducted as
\begin{equation} \label{eq:iterYfull1}
    Y_{\calB} = Y_{\calB} - \alpha \nabla_{\calB} f_{1}(Y),
\end{equation}
where $\alpha>0$ is the stepsize. It follows by an orthogonalization step $Y = nYR^{-1}$, where $D^{\frac{1}{2}}Y = QR$ is the QR decomposition of $D^{\frac{1}{2}}Y$. The full scheme of $f_1$ is equivalent to the power method with mini-batch and dynamic shift.

\item \textbf{\emph{Neighbor evaluation scheme:}} The gradient of batch $\calB$ is
evaluated as
\begin{equation} \label{eq:gradYneighbor1}
    \nabla_{\calB} \bar{f}_{1}(Y_\calN) = 2( Y_\calB - D_\calB^{-1} W_{\calB,\calN}Y_\calN).
\end{equation}
The iterative algorithm then conduct the update as,
\begin{equation} \label{eq:iterYneighbor1}
    Y_{\calB} = Y_{\calB} - \alpha \nabla_{\calB} \bar{f}_{1}(Y_\calN)
\end{equation}
for $\alpha$ being the stepsize.

It follows by an orthogonalization step such that $Y = nY(L^{-1})^\top$, where $Y^\top DY = LL^\top$ is the Cholesky decomposition of $Y^\top DY$, and as in \eqref{eq:gradYneighbor}, we only update $Y^\top DY$ on $\calB$ at each iteration.
\end{itemize}

\subsection{Network Training for SpecNet1}\label{subsec:net1-training}
Different from SpecNet2, we have one additional orthogonalization layer, denoted by $R\in \mathbb{R}^{K\times K}$, appended to $G_\theta$ for SpecNet1. Therefore, the mapping given by SpecNet1 is $x\mapsto G_\theta(x)\cdot R$ for any $x\in\mathbb{R}^m$. We also introduce the training of SpecNet1 that incorporates those gradient evaluation schemes in section \ref{subsec:f1scheme}.

\textbf{\emph{Local evaluation scheme:}} At each batch step, we
compute $Y_\calB =
G_{\theta}(\calB)$. The orthogonalization layer is computed as in the QR factorization $\tilde{D}_{\calB}^{\frac{1}{2}}Y_{\calB} = QR$, and the output after that is then $\tilde{Y}_\calB = b Y_\calB R^{-1}$. So we can obtain $\nabla_\calB
\tilde{f}_{1}(\tilde{Y}_\calB)$ by plugging $\tilde{Y}_\calB$ into
\eqref{eq:gradYlocal1}. Then we minimize
$\trace{\tilde{Y}_\calB(\theta)^\top \nabla_\calB \tilde{f}_{1}(\tilde{Y}_\calB))}$ and
update $\theta$ using the gradient of $\trace{\tilde{Y}_\calB(\theta)^\top
\nabla_\calB \tilde{f}_{2}(Y_\calB)}$ with respect to $\theta$ through
the chain rule, where inside the trace we write the first term $\tilde{Y}_\calB$
as $\tilde{Y}_\calB(\theta)$ to emphasize it is a function of $\theta$; and the
second term $\nabla_\calB \tilde{f}_{2}(\tilde{Y}_\calB)$ is detached and viewed
as constant. We shall mention that SpecNet1 with local
evaluation scheme is the method in the original SpecNet1
paper~\cite{shaham2018spectralnet}, except that here we also update weights of the orthogonalization layer using the gradient by back-propagation, which turns out to improve the performance of the original SpecNet1 significantly. 

\textbf{\emph{Full evaluation scheme:}} At each batch step, we
compute $Y =
G_{\theta}(X)$. The orthogonalization layer is computed as in the QR factorization $\tilde{D}^{\frac{1}{2}}Y = QR$, and the output after that is then $\tilde{Y} = n Y R^{-1}$. So we can obtain $\nabla_\calB
f_{1}(\tilde{Y})$ by plugging $\tilde{Y}$ into
\eqref{eq:gradYfull1}. Then we want to minimize
$\trace{\tilde{Y}_\calB(\theta)^\top \nabla_\calB f_{1}(\tilde{Y}))}$ and update
$\theta$ using the gradient of $\trace{\tilde{Y}_\calB(\theta)^\top \nabla_\calB f_{1}(\tilde{Y})}$ through the chain rule. And similarly, inside the trace
we only view $\tilde{Y}_\calB(\theta)$ as a function of $\theta$ but $\nabla_\calB f_{1}(\tilde{Y})$ as constant when computing the gradient.

\textbf{\emph{Neighbor evaluation scheme:}} We keep a record of two
matrices $(YDY)_{\star}$ and $Y_{0}$ throughout the training, where they
are initialized at the first iteration: $(YDY)_{\star} = Y^\top DY$ and
$Y_{0} = Y$, and detach both of them. At each batch step, we compute
$Y_\calN = G_{\theta}(\calN)$. Then we update $(YDY)_{\star} =
(YDY)_{\star} - Y_{0}(\calN)^\top D_{\calN}Y_{0}(\calN) + Y_\calN^\top
D_{\calN}Y_\calN$ followed by an update of $Y_{0}$ on $\calN$ as
$Y_{0}(\calN) = Y_\calN$. Both matrices are again detached. The orthogonalization layer is computed as in the Cholesky factorization $(YDY)_{\star} = LL^\top$, and the output after that is then $\tilde{Y}_\calN = n Y_\calN (L^{-1})^\top$. So we can obtain $\nabla_{\calB} \bar{f}_{1}(\tilde{Y}_\calN)$ by plugging $\tilde{Y}_\calN$, which includes $\tilde{Y}_\calB$,  into
\eqref{eq:gradYneighbor1}.
Then we minimize $\trace{\tilde{Y}_\calB(\theta)^\top \nabla_{\calB} \bar{f}_{1}(\tilde{Y}_\calN)}$ and update $\theta$ by computing the gradient of
$\trace{\tilde{Y}_\calB(\theta)^\top \nabla_{\calB} \bar{f}_{1}(\tilde{Y}_\calN)}$ by the
chain rule. Similarly, inside the trace we only view $\tilde{Y}_\calB(\theta)$ as
a function of $\theta$ but $\nabla_{\calB} \bar{f}_{1}(\tilde{Y}_\calN)$ as constant
when computing the gradient.

\section{Details of the numerical examples in Section \ref{sec:numres}}\label{detail:numres}

\begin{figure}[t]
    \centering
    \includegraphics[width=0.46 \textwidth]{./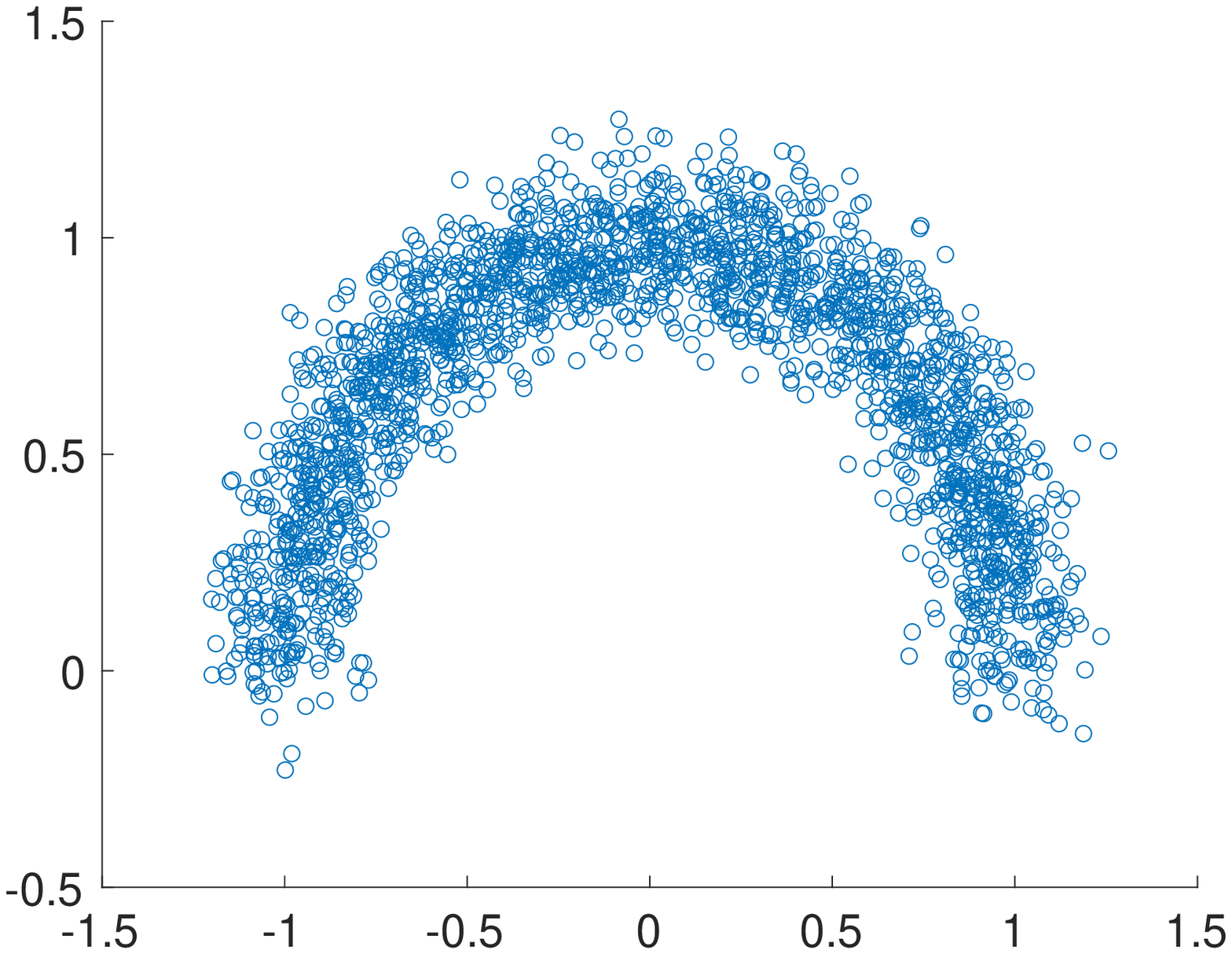}
    \includegraphics[width=0.46 \textwidth]{./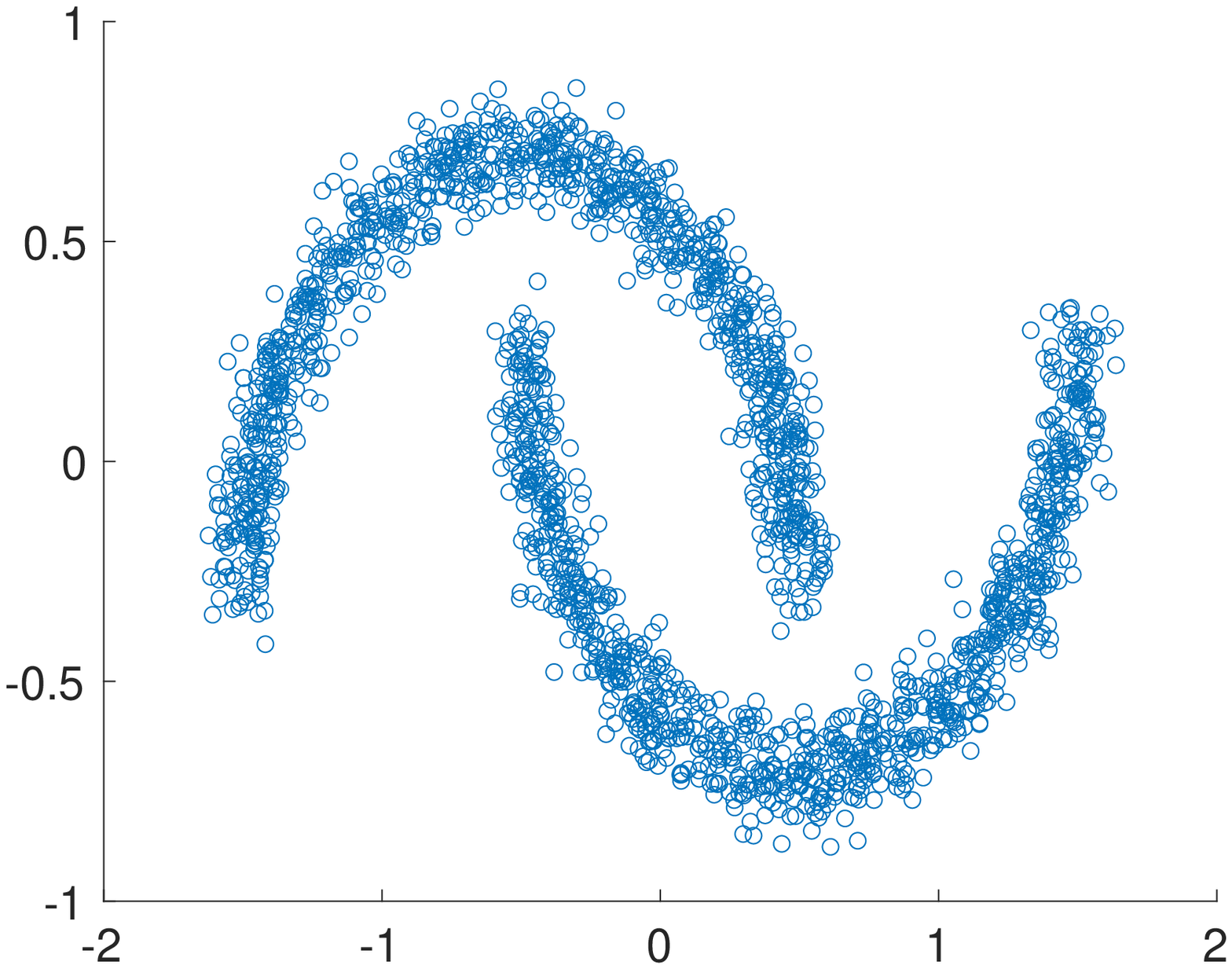}
    \caption{
    Visualizations of the one moon and two moons training dataset. 
    For each example, the testing set are i.i.d. sampled from the same distribution as the training set.
    }
    \label{fig:moon}
\end{figure}

\subsection{One moon data}\label{detail:onemoon}
\textbf{Data generation:}
The training set consists of $n = 2000$ points in
$\mathbb{R}^2$, and is generated by $x_i = (\cos
\eta_i, \sin \eta_i) + \xi_i$, $i = 1,\dots,2000$, where $\eta_i$ are i.i.d. uniformly sampled on $[0,\pi]$ and
$\xi_i$ are i.i.d. Gaussian random variables of dimension two drawn from
$\calN(0, 0.01 I_2)$. The testing set consists of 2000 points and is generated in the same way as the training set with a different realization. The sparse affinity matrix associated with the training set is generated via Gaussian
kernel with bandwidth $\sigma=0.1$, and truncated at threshold 0.6.

\noindent\textbf{Network training:}
We use a fully-connected feedforward neural network with a single 128-unit hidden layer:

SpecNet1: $2 \xrightarrow[]{\text{fc}} 128 - \text{ReLU} \xrightarrow[]{\text{linear}} 3 \xrightarrow[]{\text{orthogonal}} 3$;

SpecNet2: $2 \xrightarrow[]{\text{fc}} 128 - \text{ReLU} \xrightarrow[]{\text{linear}} 2$,

where ``fc'' stands for fully-connected layers. The batch size is 4, and we use Adam as the optimizer with learning rate $10^{-3}$ for SpecNet2 and $10^{-4}$ for SpecNet1.

\noindent\textbf{Error evaluation:} We
evaluate the network approximation of the first two nontrivial
eigenfunctions by computing the relative errors of the output functions of
the trained network with the underlying true eigenfunctions. The true
eigenfunctions are constructed via a fine grid discretization of the
continuous operator. We introduce
how the relative error is calculated. Suppose $\psi\in\mathbb{R}^n$ is the
limiting eigenfunction evaluated at $\{x_i\}$, and
$\tilde{\psi}\in\mathbb{R}^n$ is the network output function, which
approximates $\psi$, evaluated at $\{x_i\}$. The relative error
$\tau(\tilde{\psi},\psi)$ of $\tilde{\psi}$ with respect to $\psi$ is
defined as
\begin{equation}\label{eq:relativeerror}
    \tau(\tilde{\psi},\psi):= \frac{\norm{\psi-\beta\tilde{\psi}}_2}
    {\norm{\psi}_2},
\end{equation}
where $\beta = \frac{\psi^\top \tilde{\psi}}{\norm{\tilde{\psi}}_2^2}$ is
the number that minimizes $\norm{\psi-\beta\tilde{\psi}}_2$ serving the
role of aligning two eigenfunctions. To evaluate the relative error on the training set, $\psi$ will be the limiting eigenfunction evaluated at training samples and $\tilde{\psi}$ is the corresponding network output function evaluated at training samples; the relative error on the testing set can be defined similarly on testing samples.

To further
compare the computational efficiency of gradient evaluation schemes, we
plot the relative errors against the leading computational cost in
Figure~\ref{fig:onemoon-net-cost}, where the leading computational costs
are estimated as: $\frac{n^2}{|\mathcal{B}|} \cdot \text{epoch}$ for the
full gradient evaluation scheme; $\frac{n|\mathcal{N}|}{|\mathcal{B}|}
\cdot \text{epoch}$ for the neighbor gradient evaluation scheme, where the
averaged number of neighbors of a batch of size 4 is about 620. 
We also show the embedding results provided by different methods in Figure \ref{fig:onemoon-embed}.

Moreover, we seek to solve the generalized eigenvalue problem $(W,D)$, corresponding to the random walk Laplacian $D^{-1}W$ in both the SpecNet1 and SpecNet2 implementation here. We can also approximate the eigenvalue problem of $D^{-\frac{1}{2}}WD^{-\frac{1}{2}}$, the symmetrically normalized Laplacian, in our implementation: that is, setting $W = D^{-\frac{1}{2}}WD^{-\frac{1}{2}}$ and $D= I$ in \eqref{eq:uopt} for SpecNet2 and multiply the output by $D^{-\frac{1}{2}}$ on the left to get back to the generalized eigenvalue problem $(W,D)$, which approximates eigenfunctions of a continuous limiting operator. We show the result in Figure~\ref{fig:onemoon-compare} for the full scheme for the relative errors of approximations of first two nontrivial eigenfunctions on the training set. We see that the performance is similar if we switch from using $(W,D)$ to $D^{-\frac{1}{2}}WD^{-\frac{1}{2}}$ for the training objective (we may need to change the learning rate after switch, but we did not tune it here).

\begin{figure*}[htbp]
    \includegraphics[width=0.245\textwidth]{./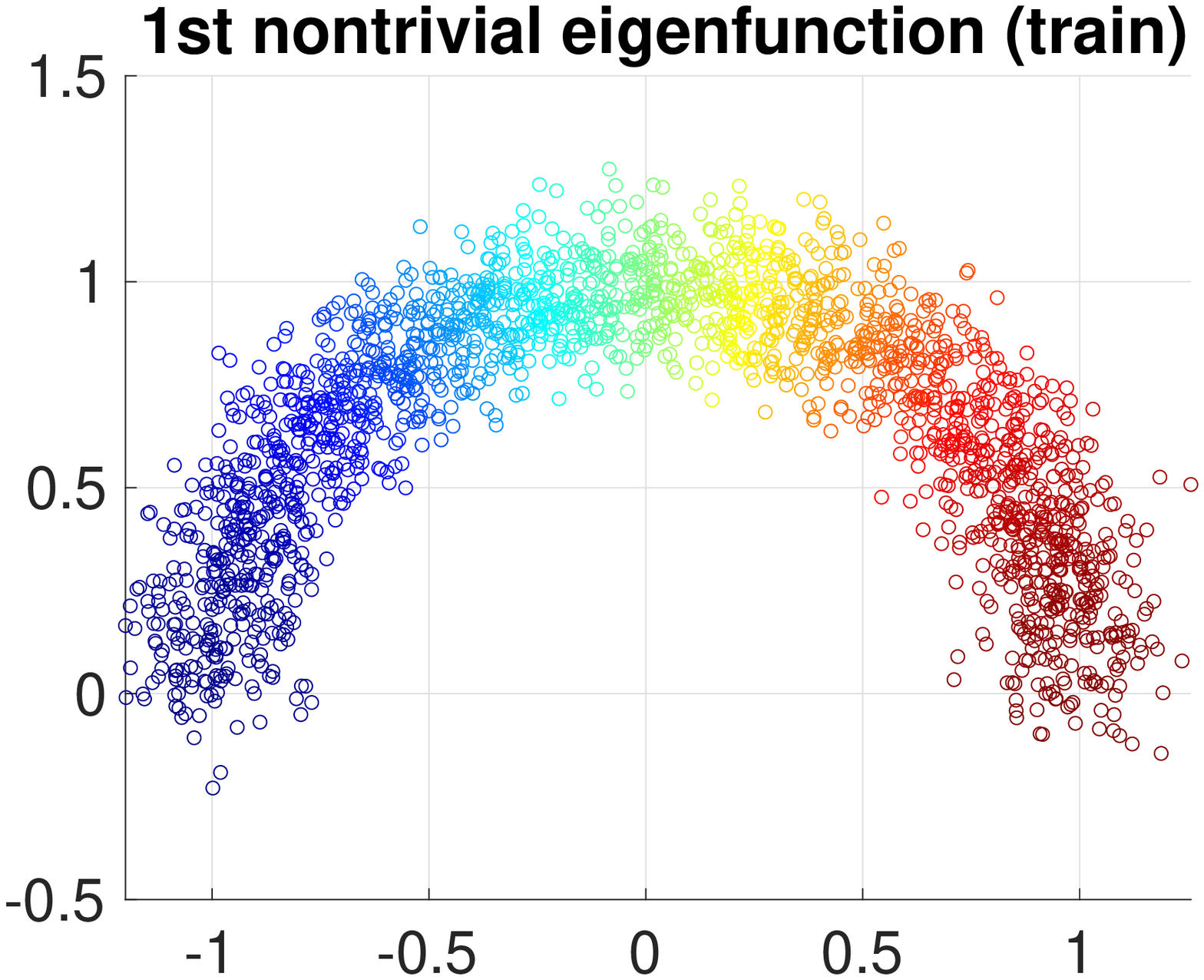}
    \includegraphics[width=0.245\textwidth]{./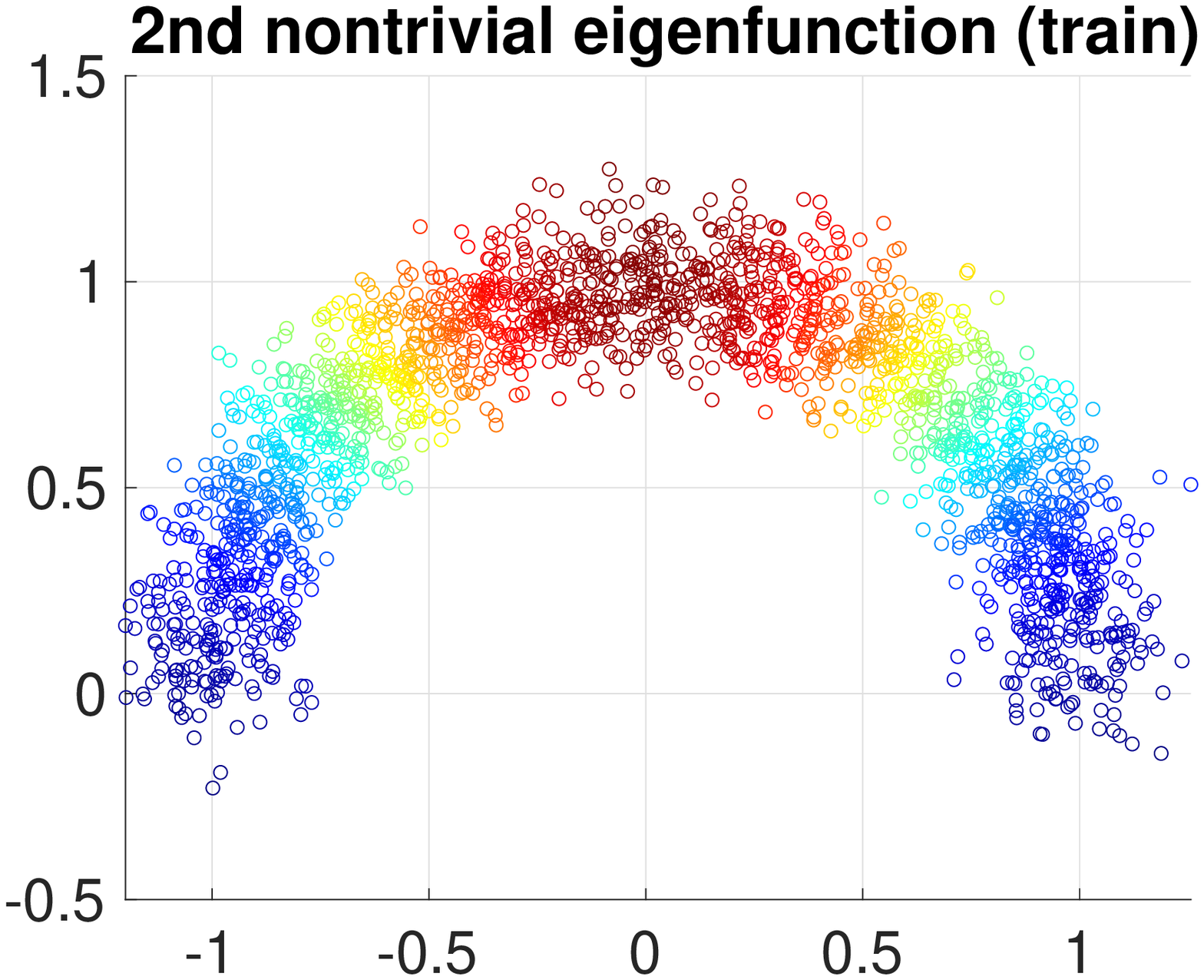}
    \includegraphics[width=0.245\textwidth]{./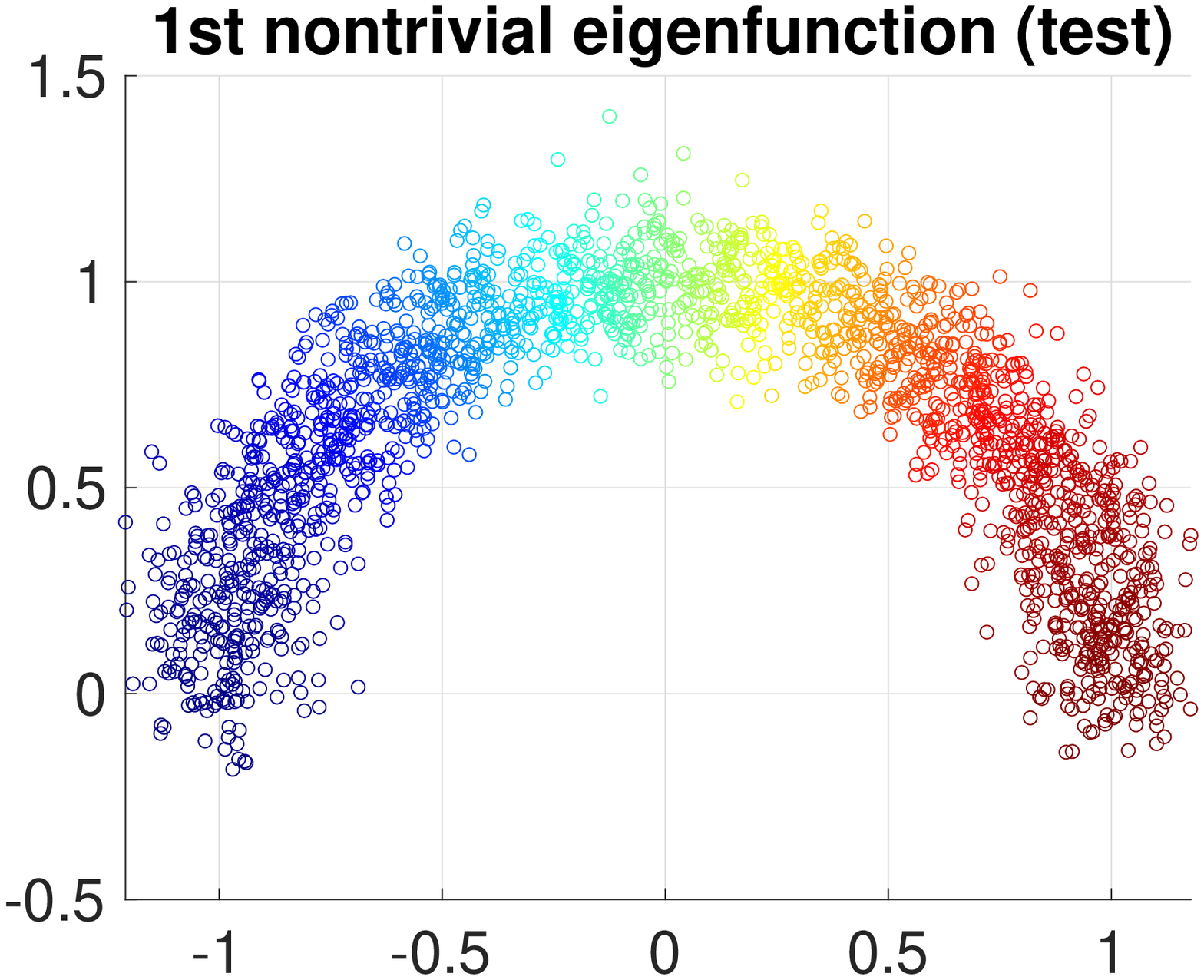}
    \includegraphics[width=0.245\textwidth]{./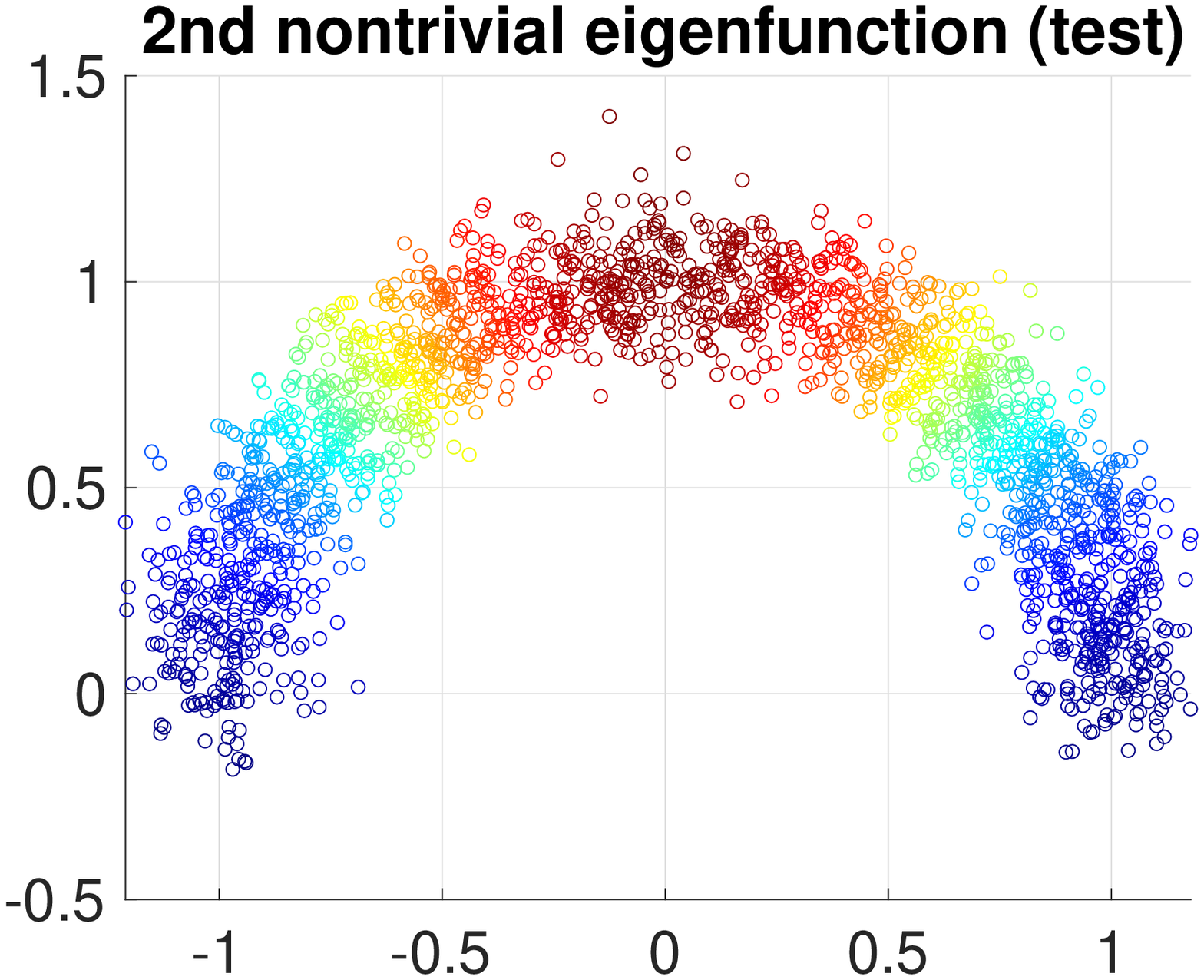}\\
    \includegraphics[width=0.245\textwidth]{./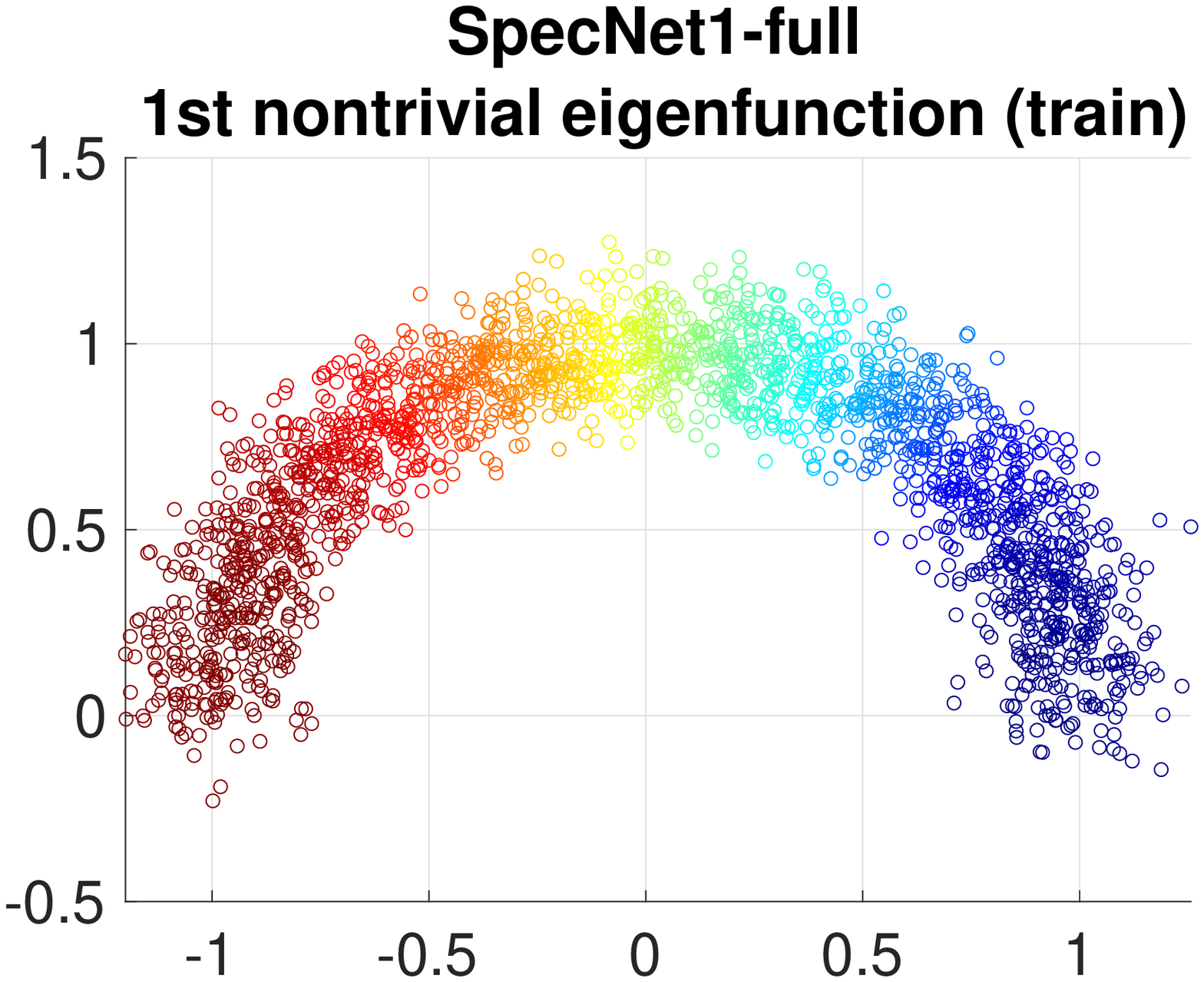}
    \includegraphics[width=0.245\textwidth]{./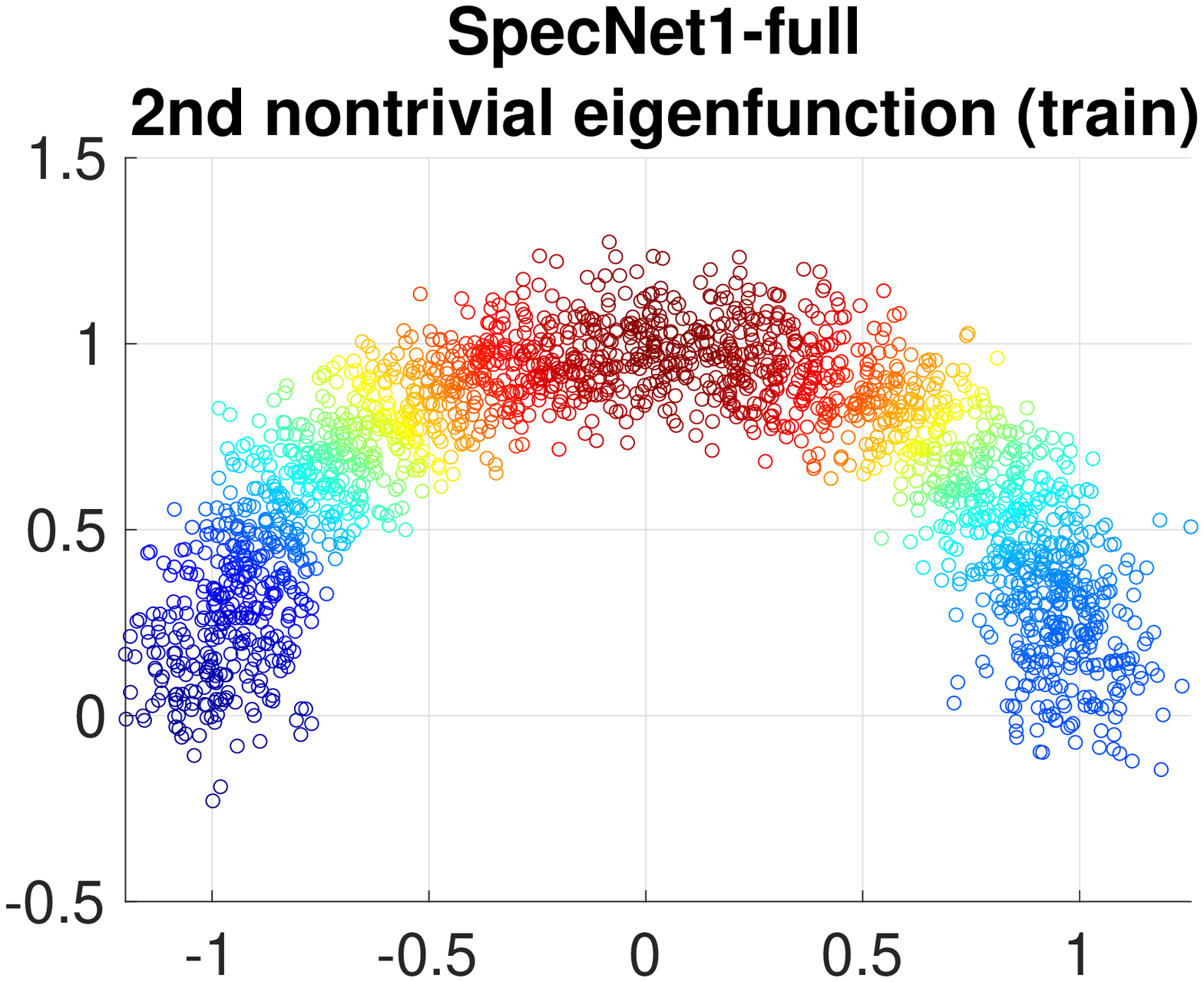}
    \includegraphics[width=0.245\textwidth]{./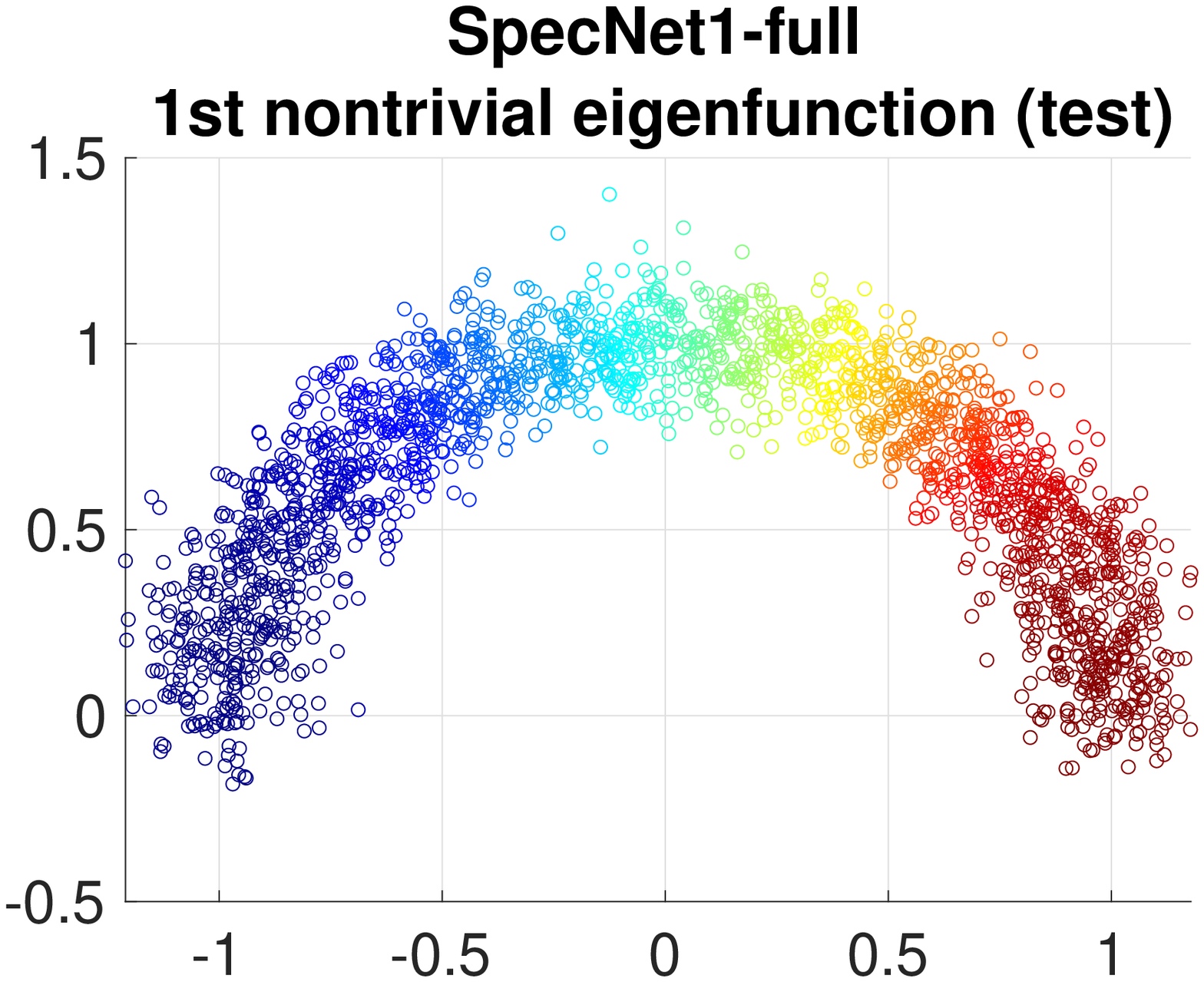}
    \includegraphics[width=0.245\textwidth]{./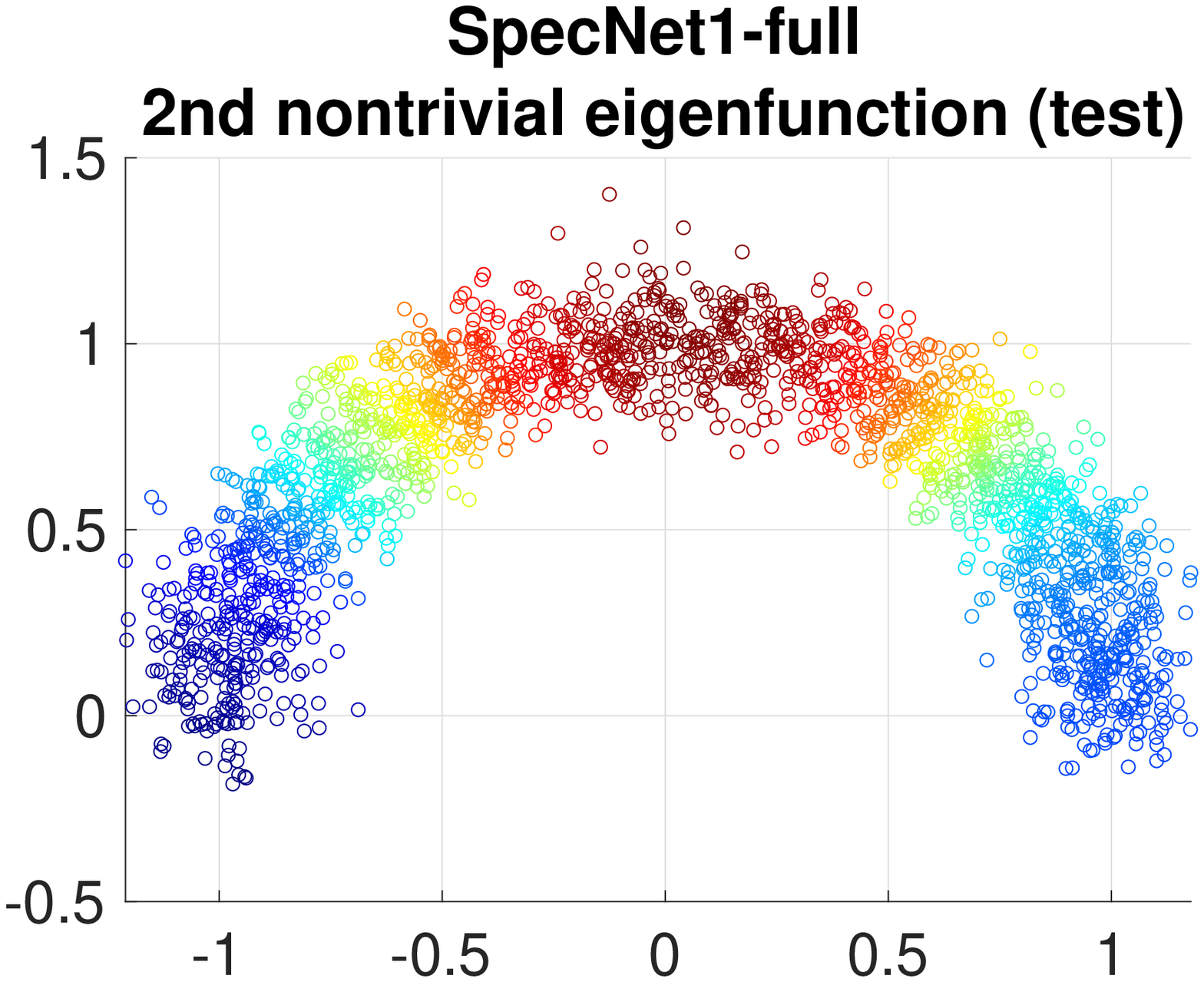}\\
    \includegraphics[width=0.245\textwidth]{./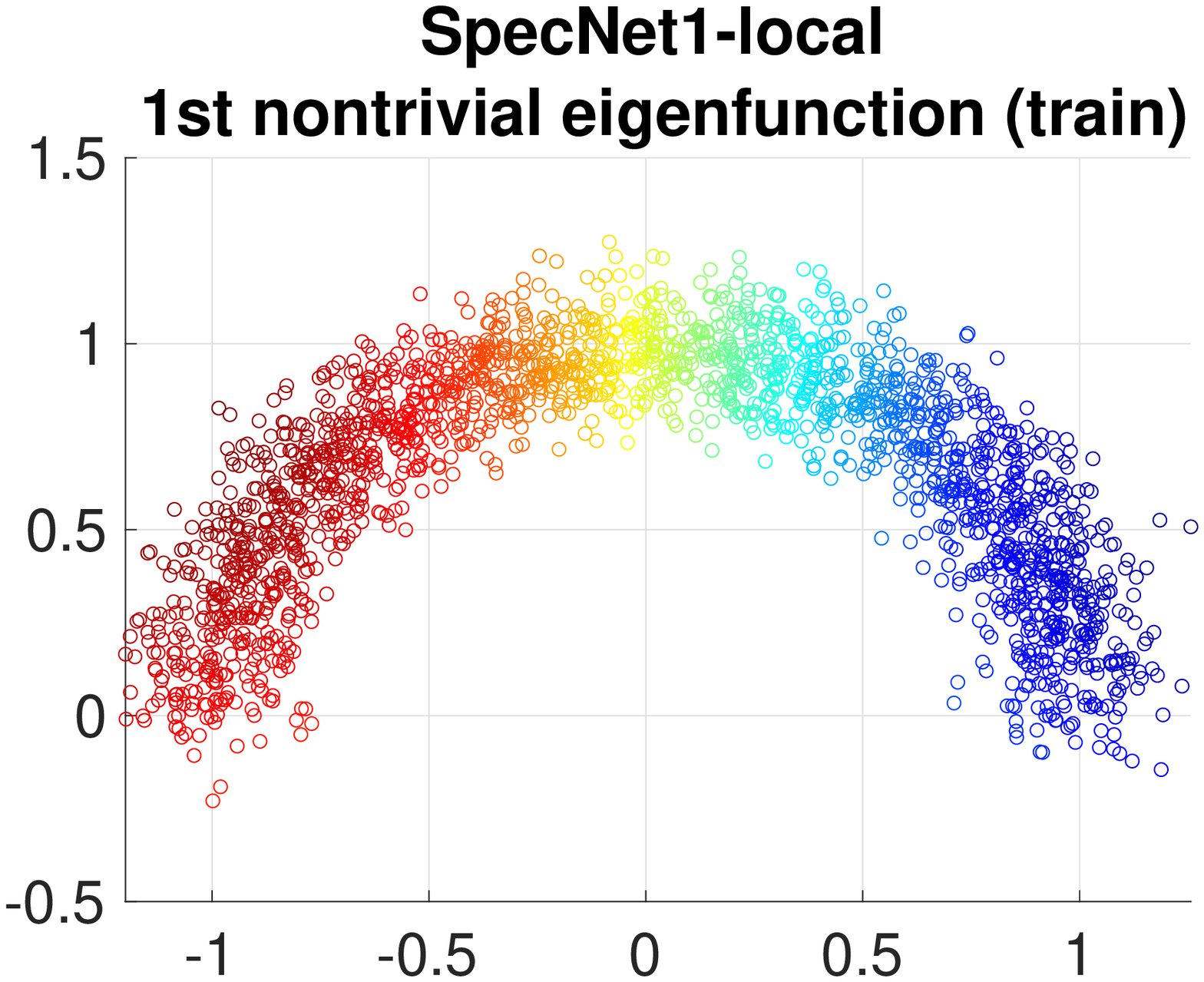}
    \includegraphics[width=0.245\textwidth]{./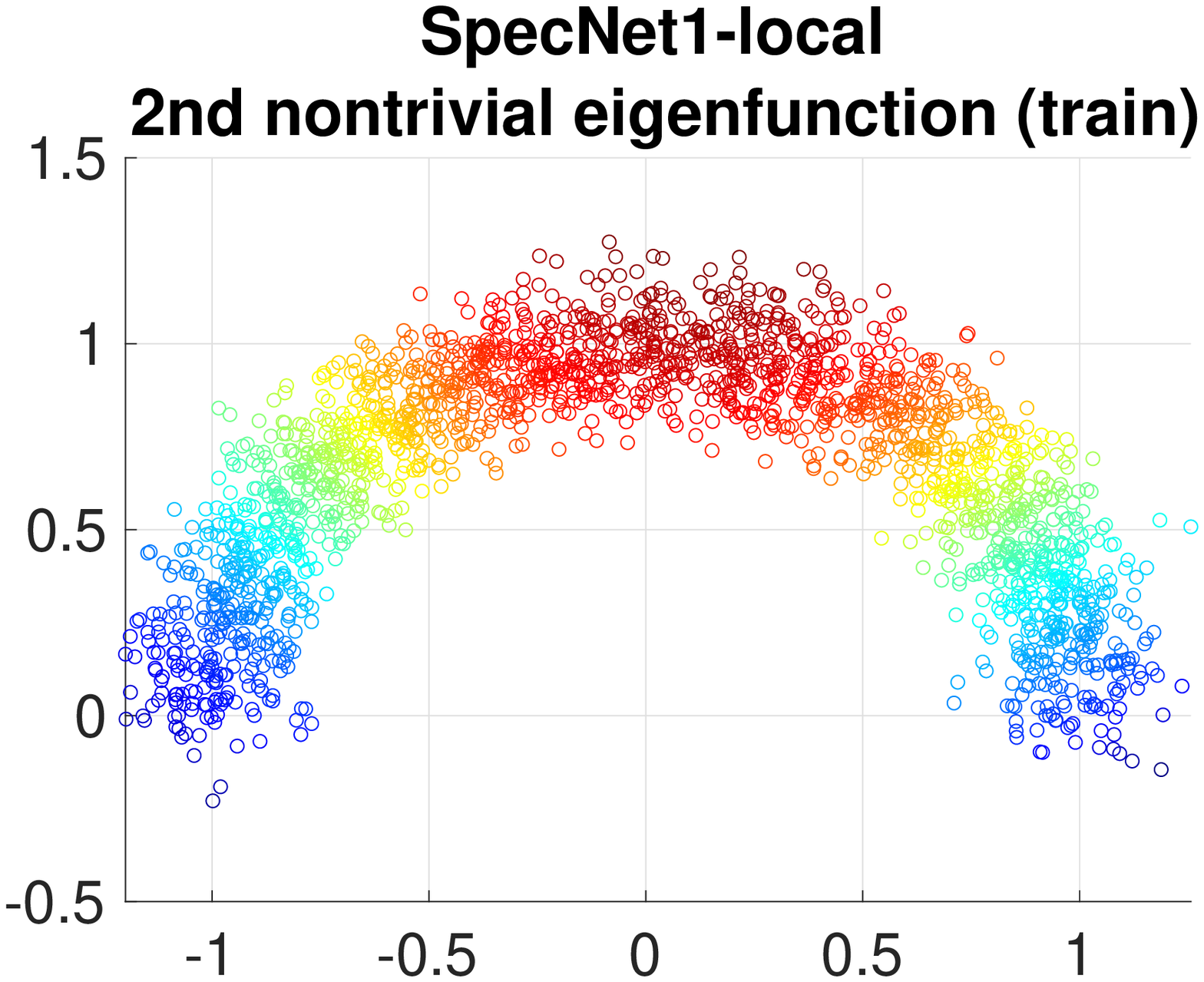}
    \includegraphics[width=0.245\textwidth]{./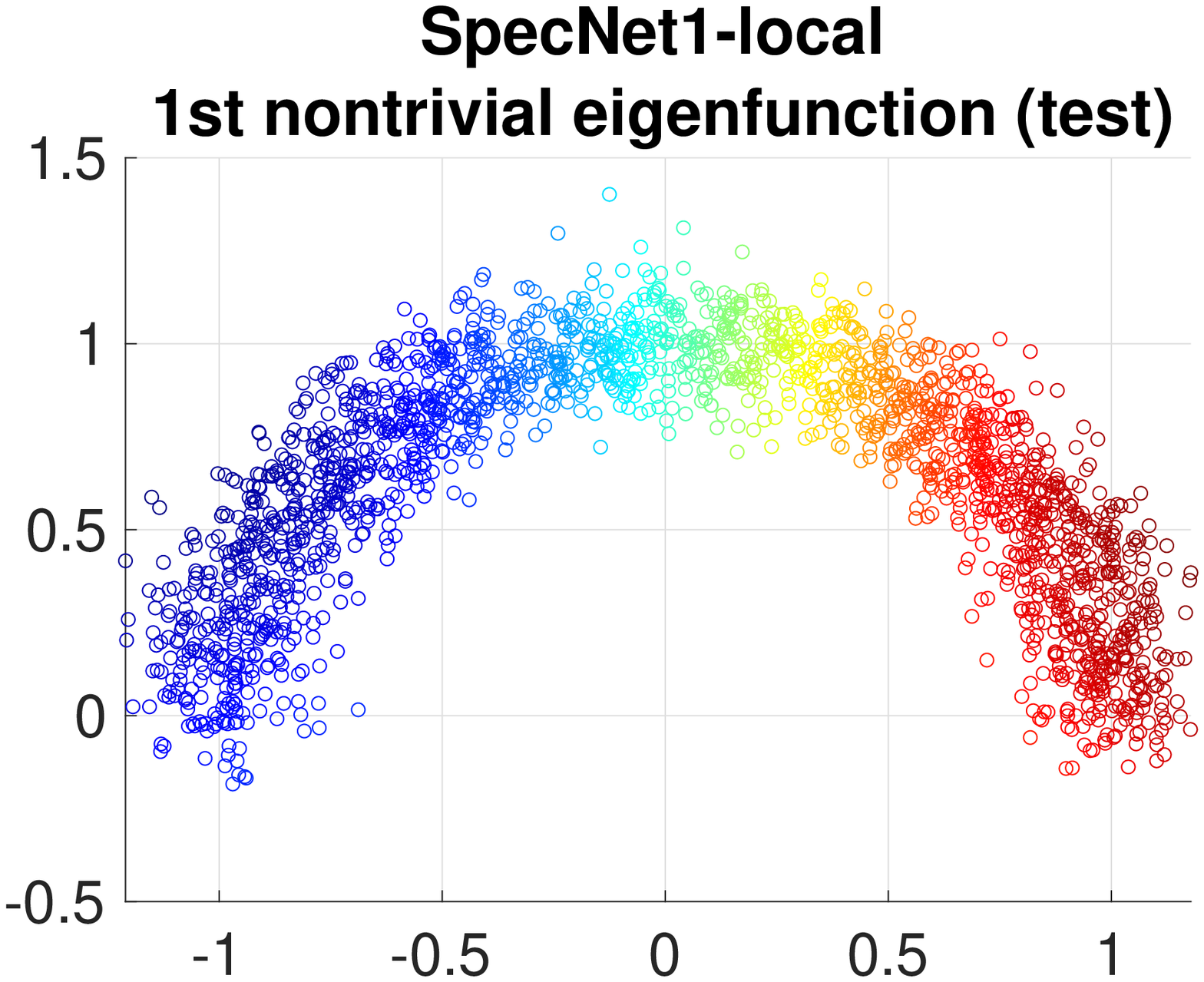}
    \includegraphics[width=0.245\textwidth]{./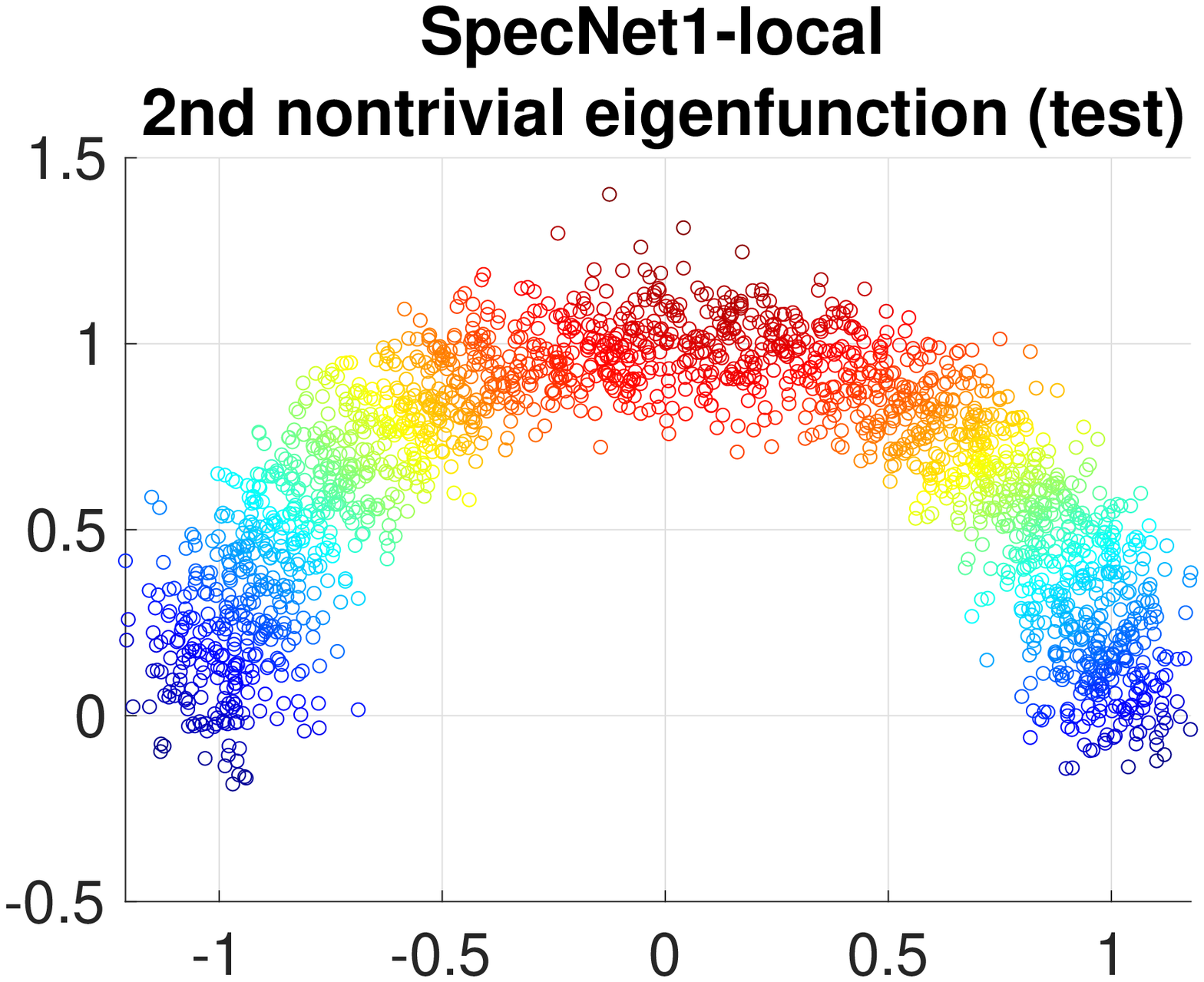}\\
    \includegraphics[width=0.245\textwidth]{./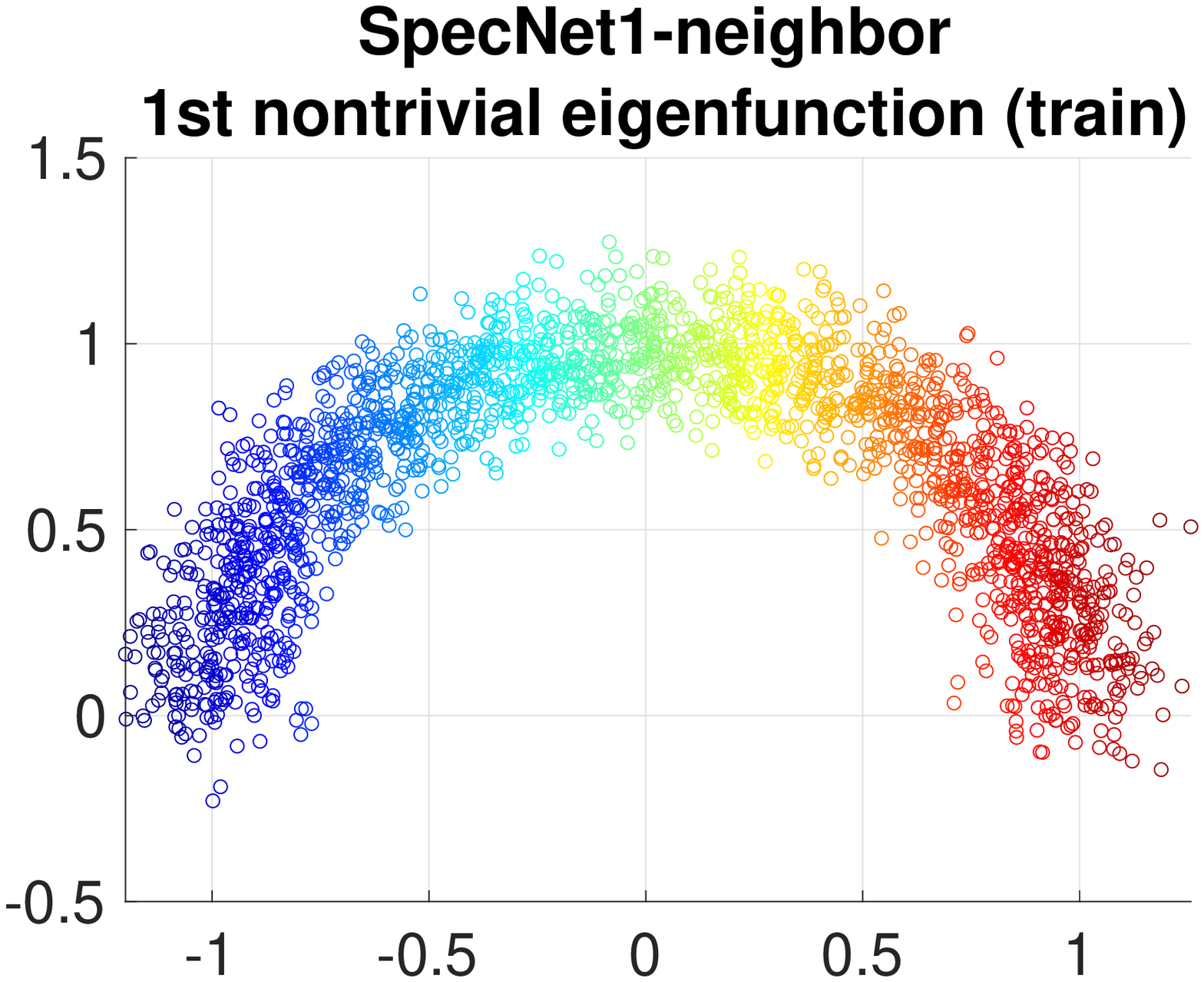}
    \includegraphics[width=0.245\textwidth]{./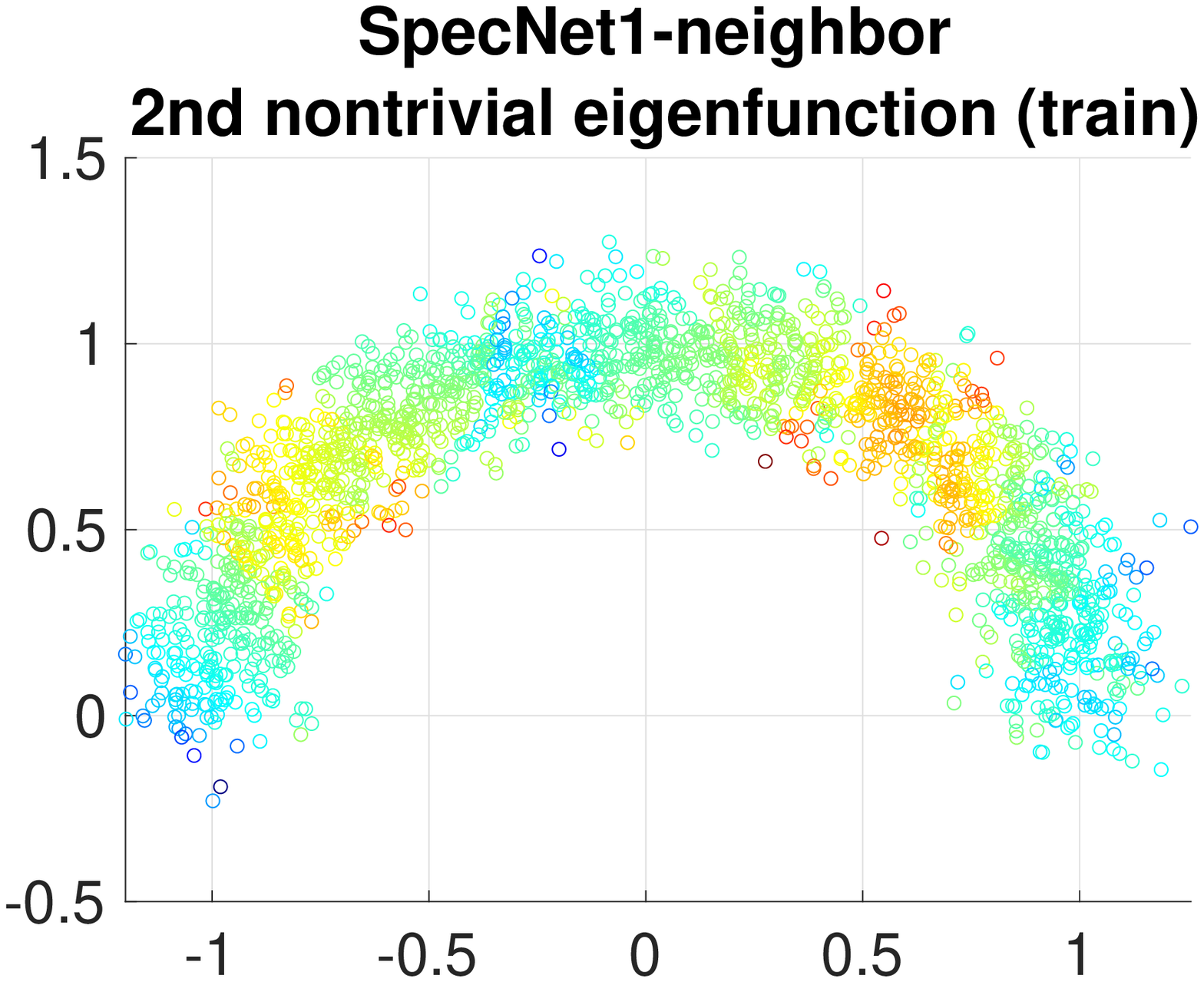}
    \includegraphics[width=0.245\textwidth]{./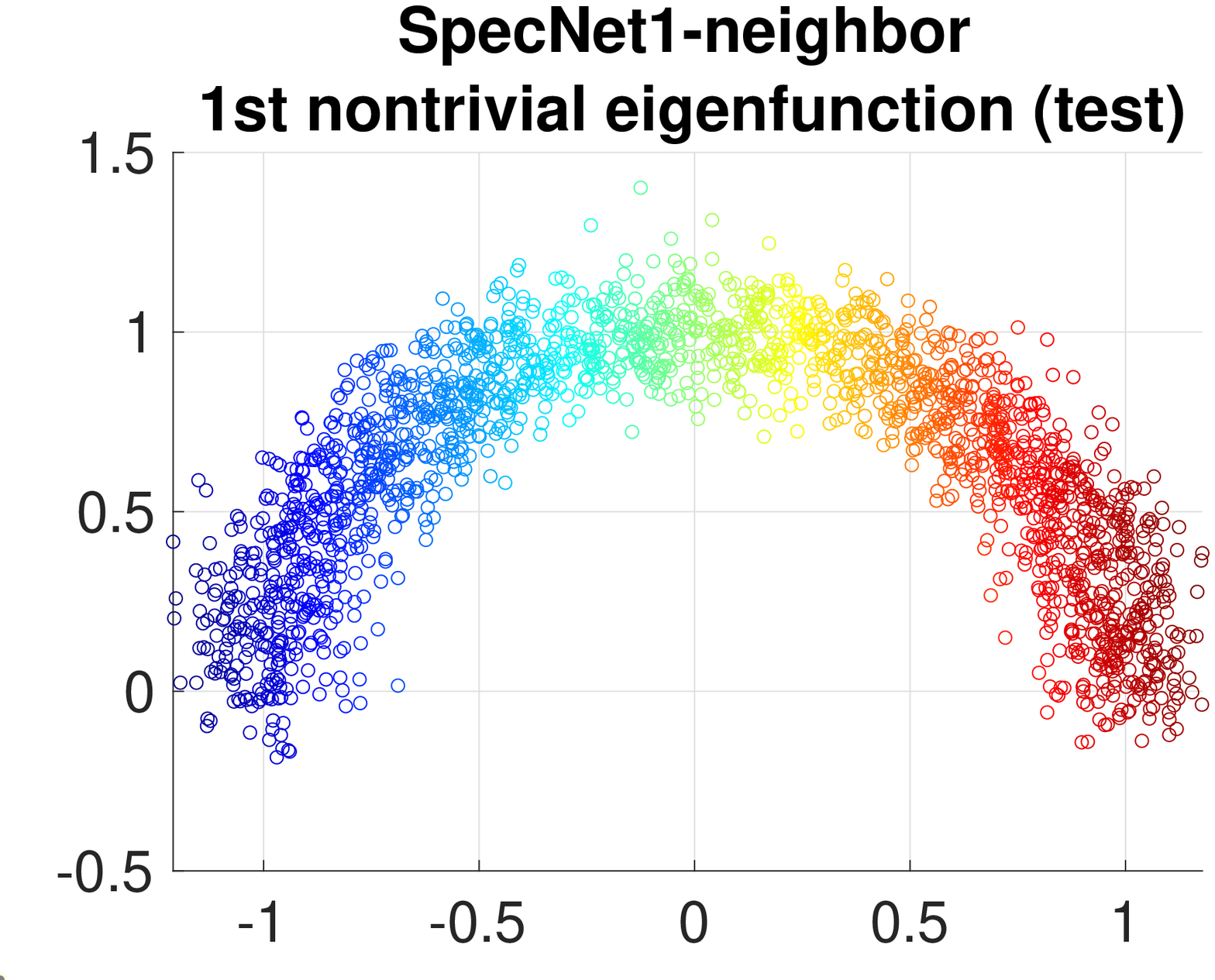}
    \includegraphics[width=0.245\textwidth]{./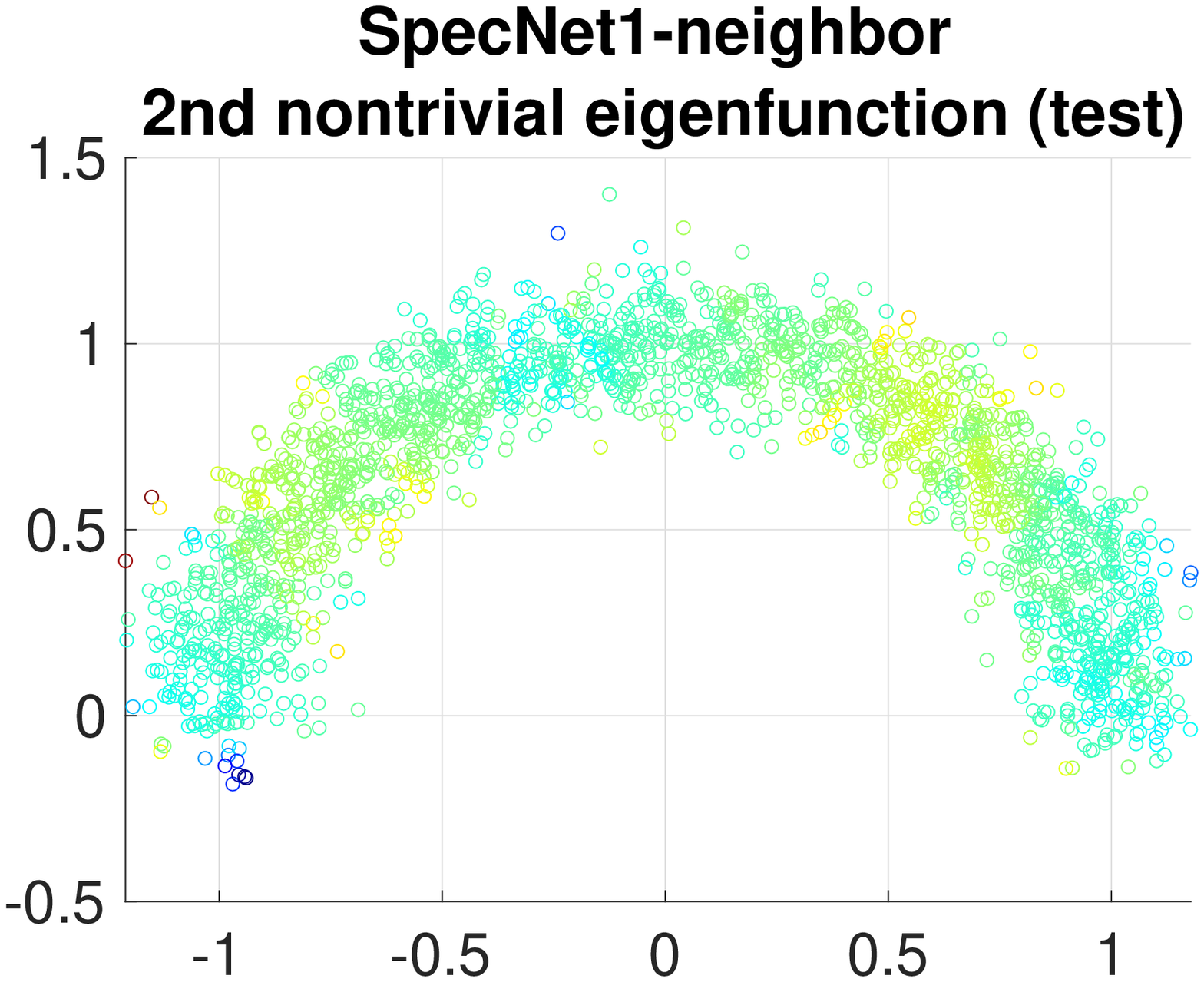}\\
    \includegraphics[width=0.245\textwidth]{./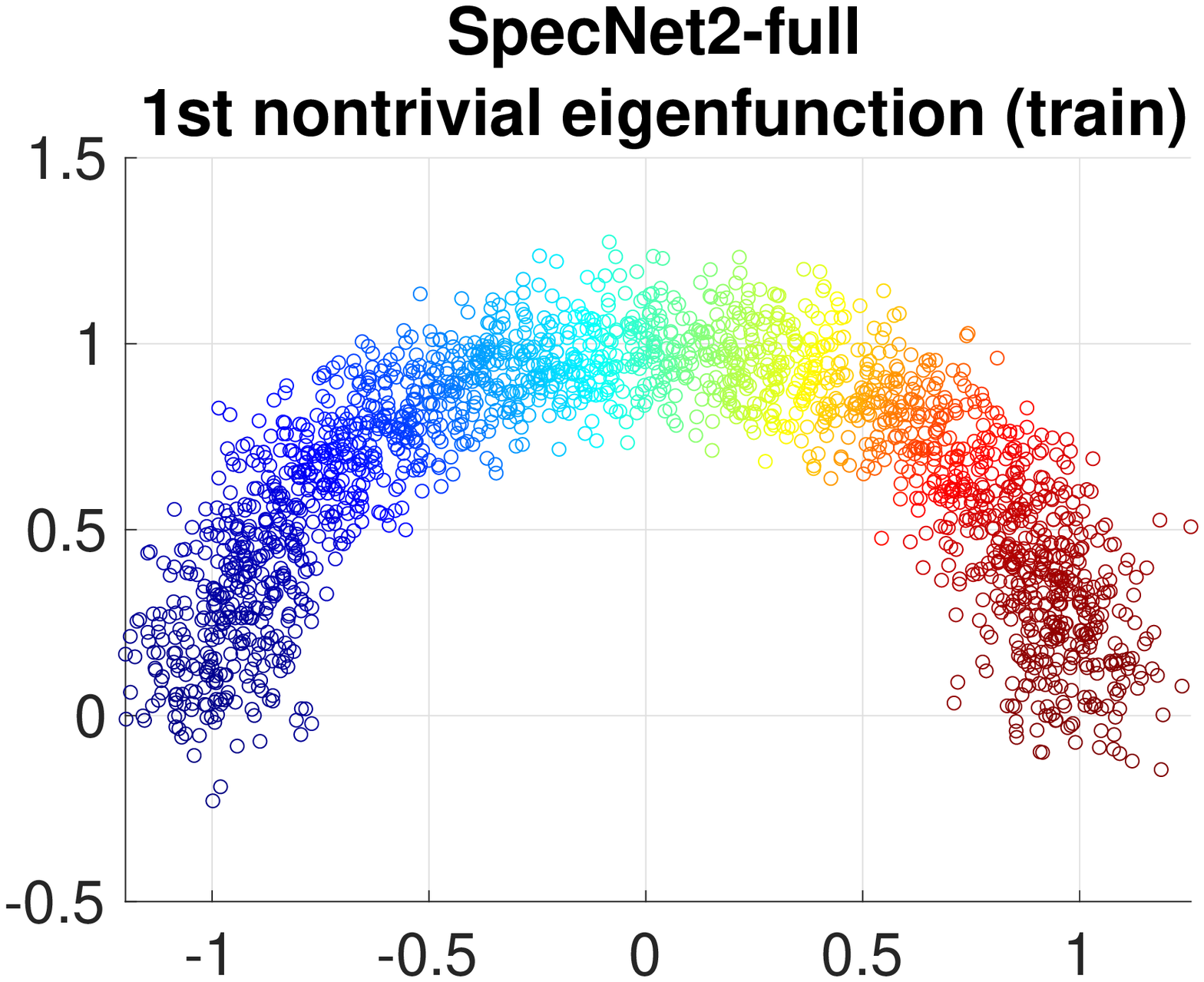}
    \includegraphics[width=0.245\textwidth]{./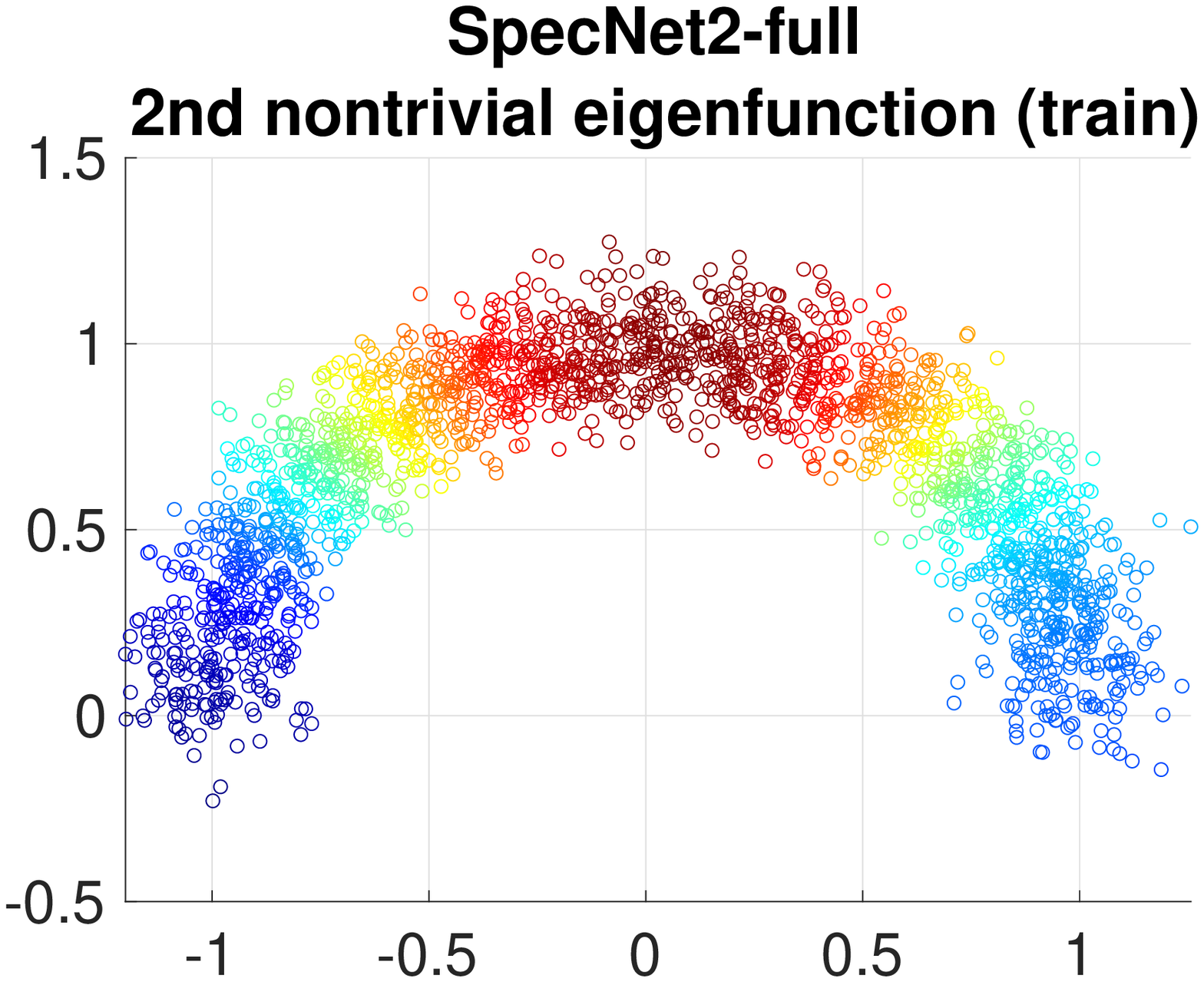}
    \includegraphics[width=0.245\textwidth]{./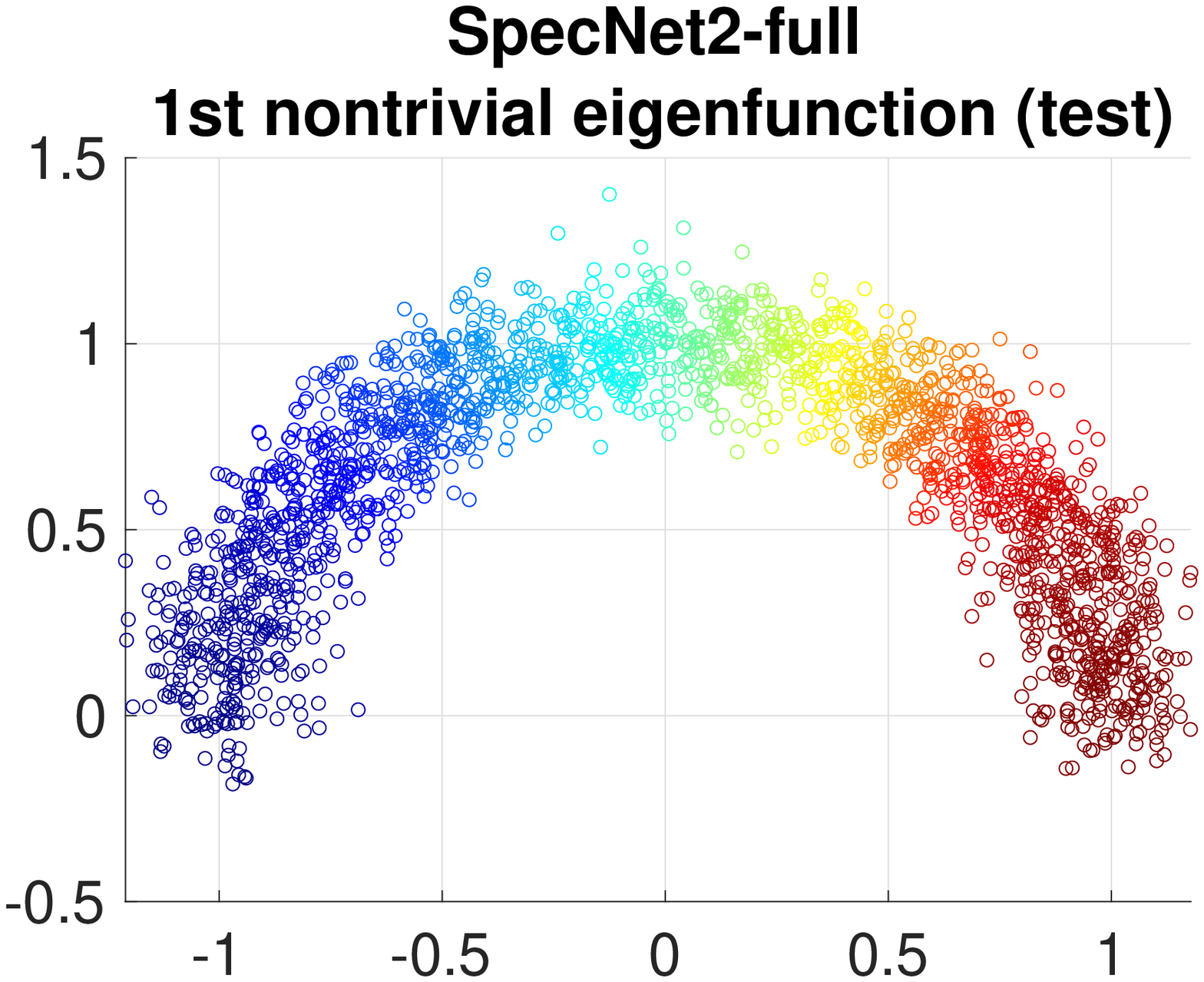}
    \includegraphics[width=0.245\textwidth]{./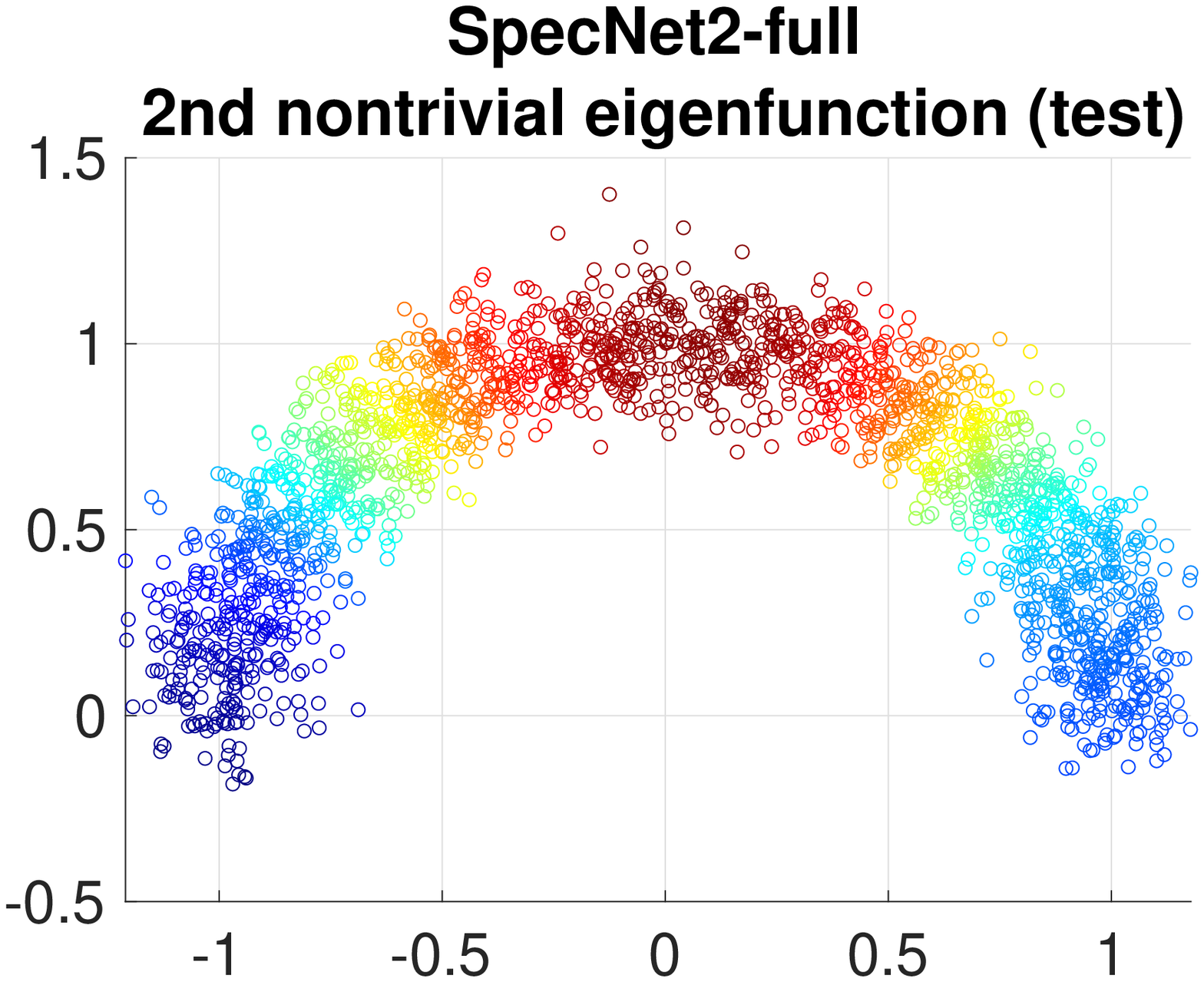}\\
    \includegraphics[width=0.245\textwidth]{./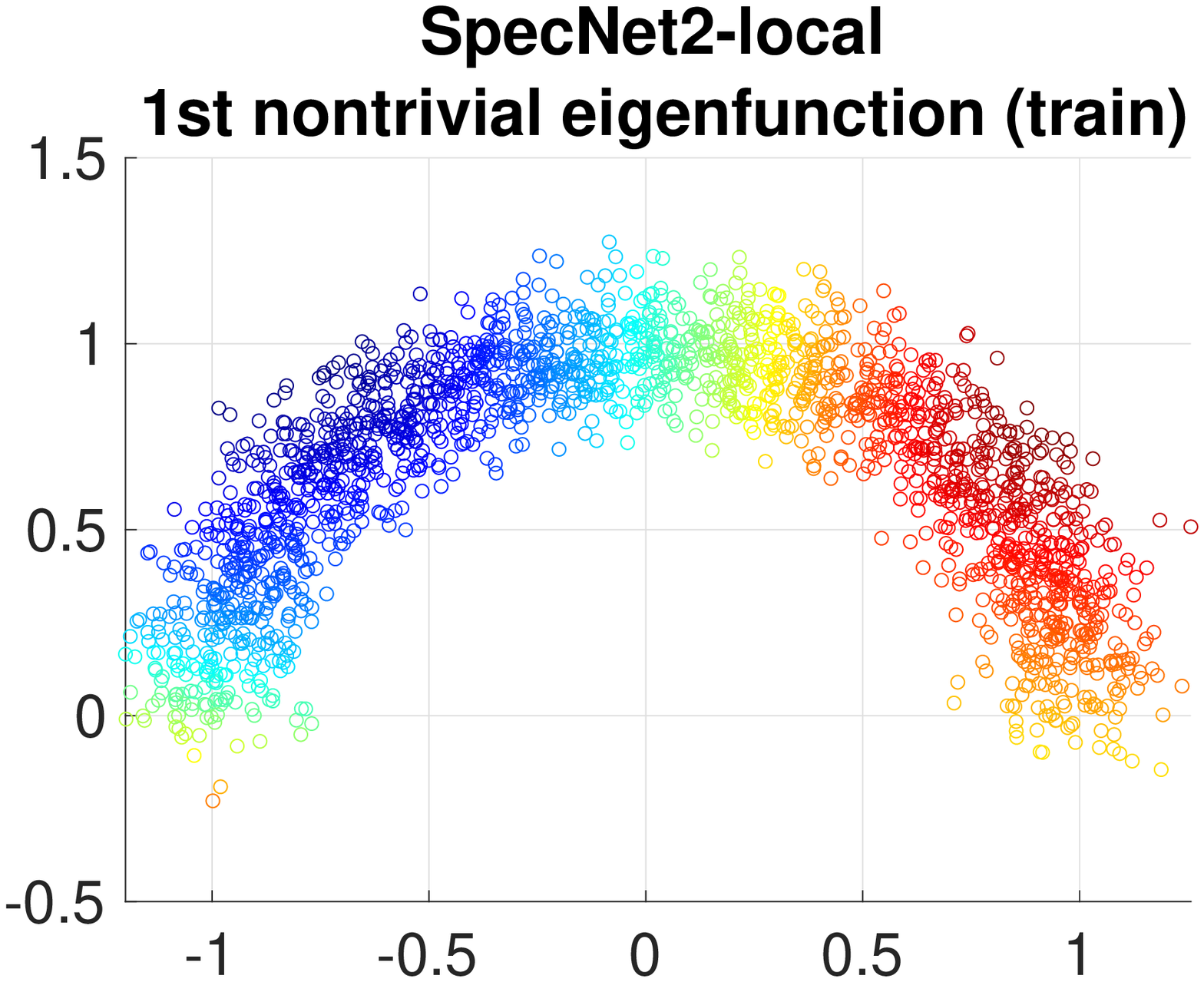}
    \includegraphics[width=0.245\textwidth]{./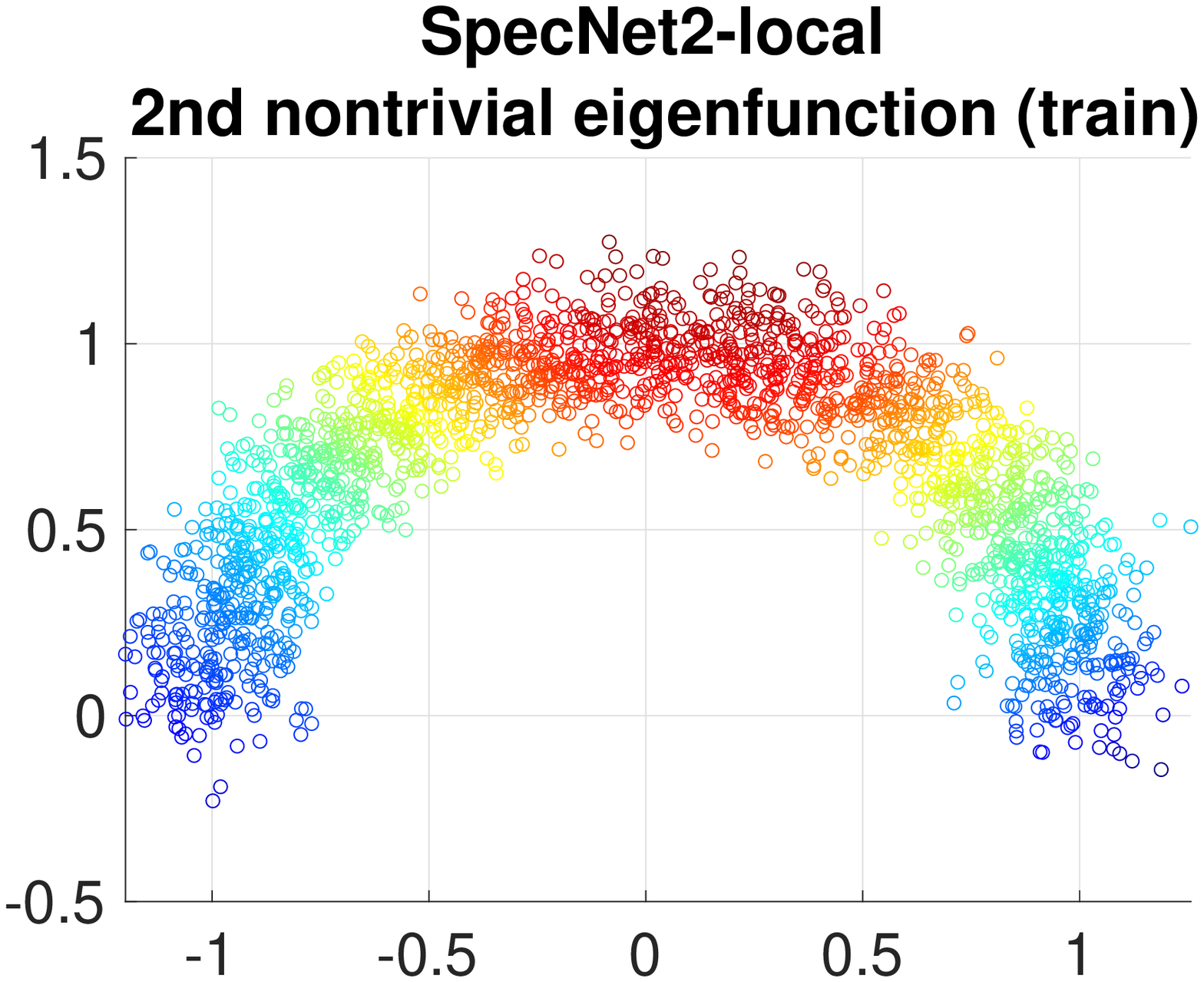}
    \includegraphics[width=0.245\textwidth]{./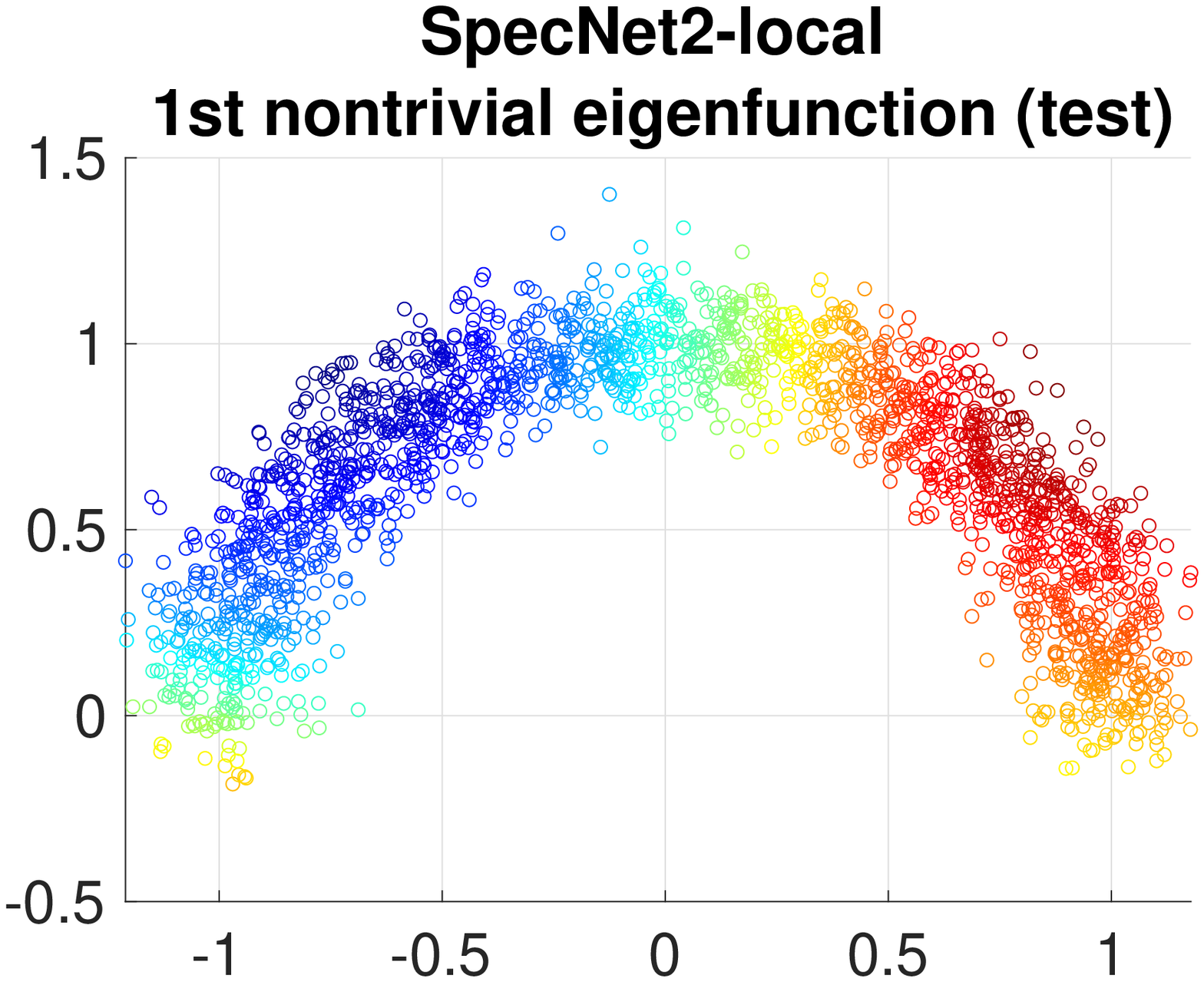}
    \includegraphics[width=0.245\textwidth]{./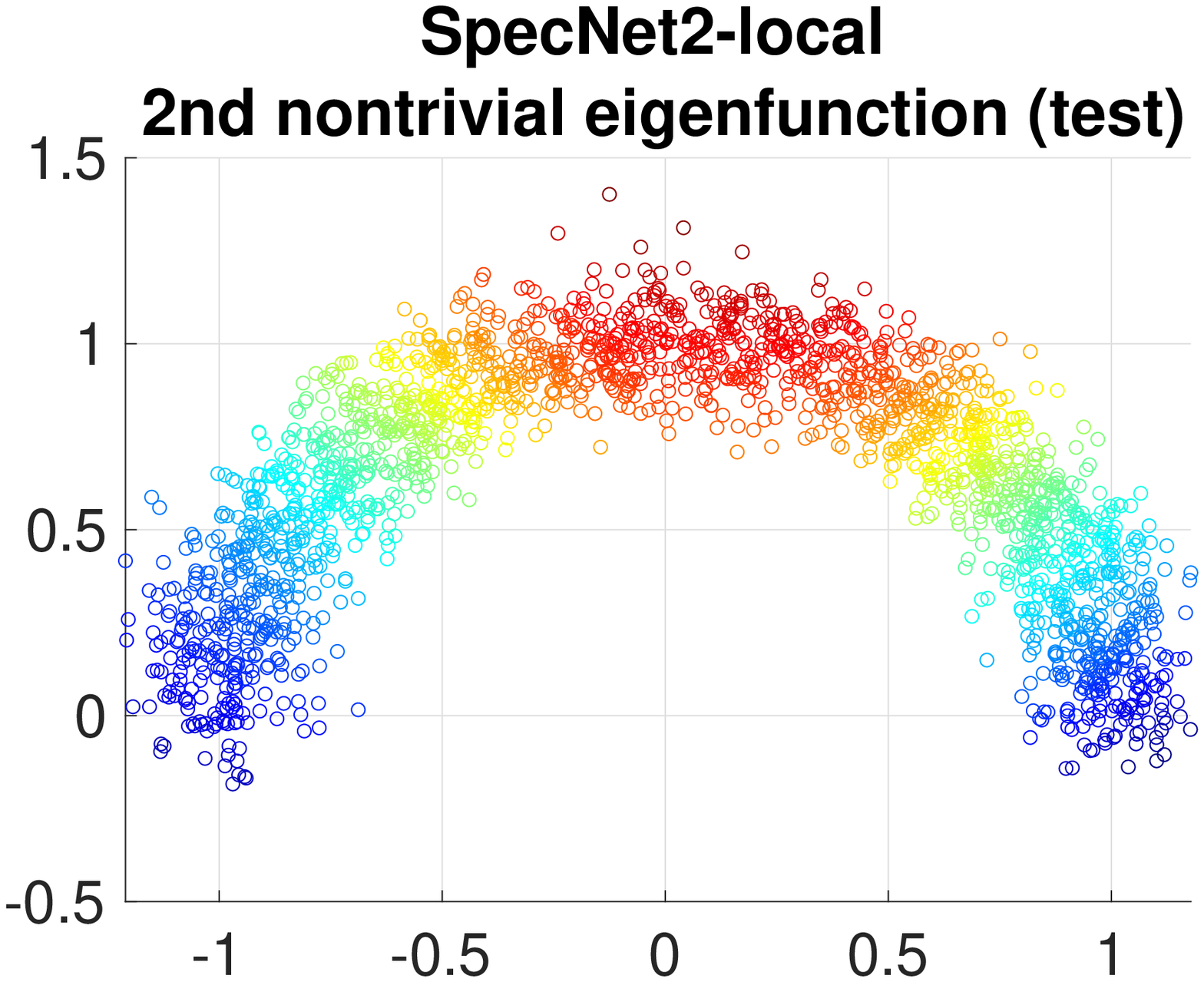}\\
    \includegraphics[width=0.245\textwidth]{./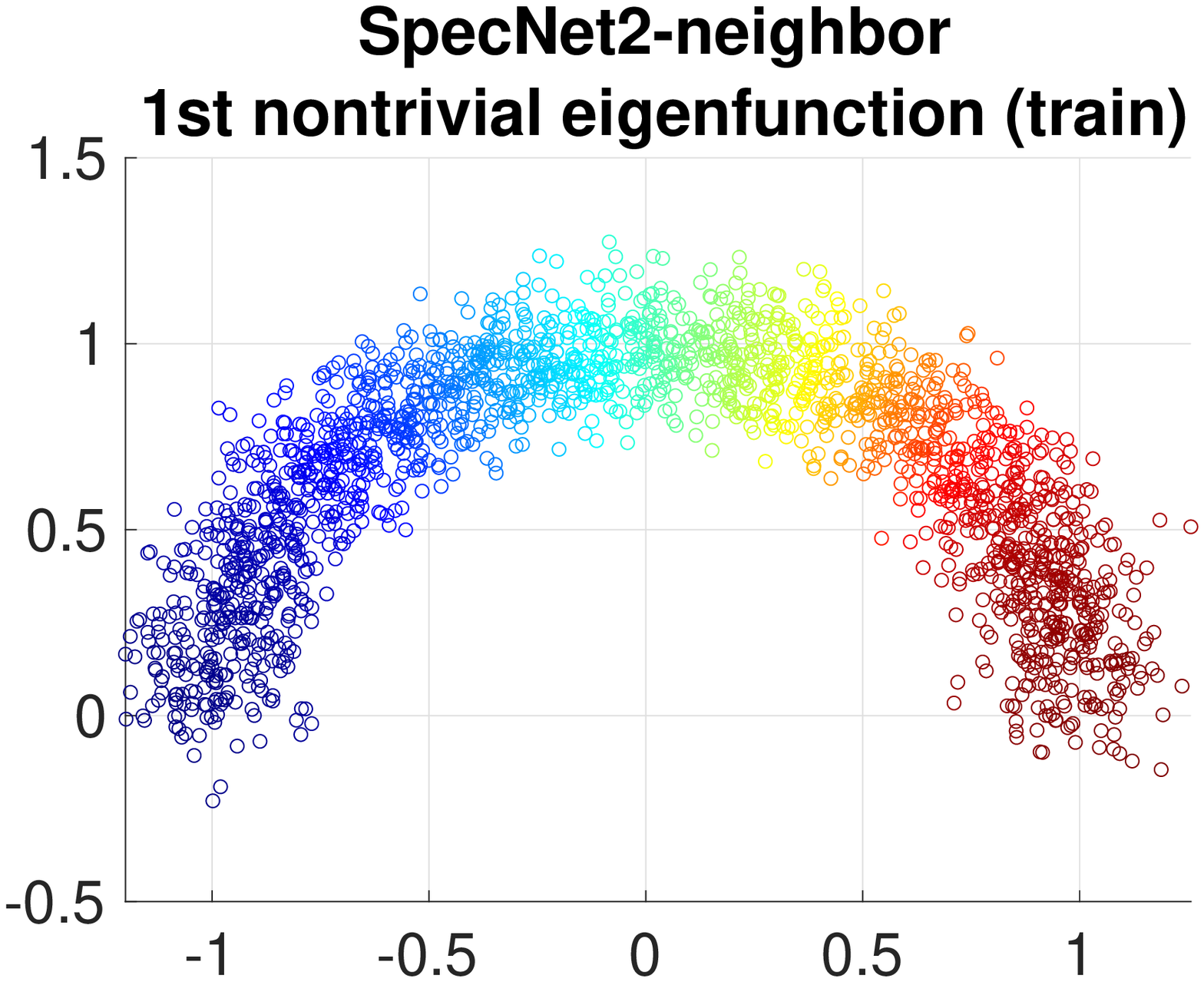}
    \includegraphics[width=0.245\textwidth]{./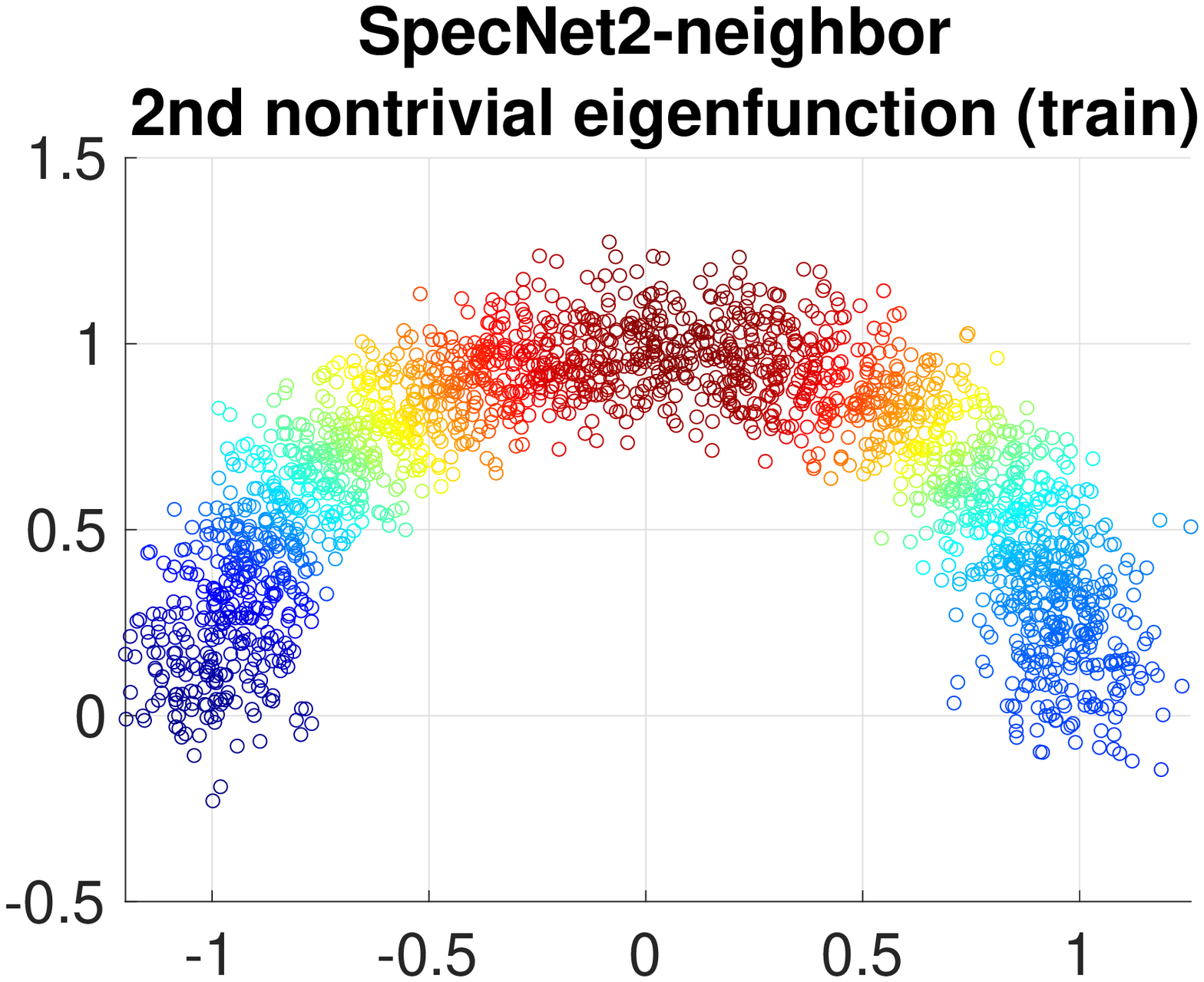}
    \includegraphics[width=0.245\textwidth]{./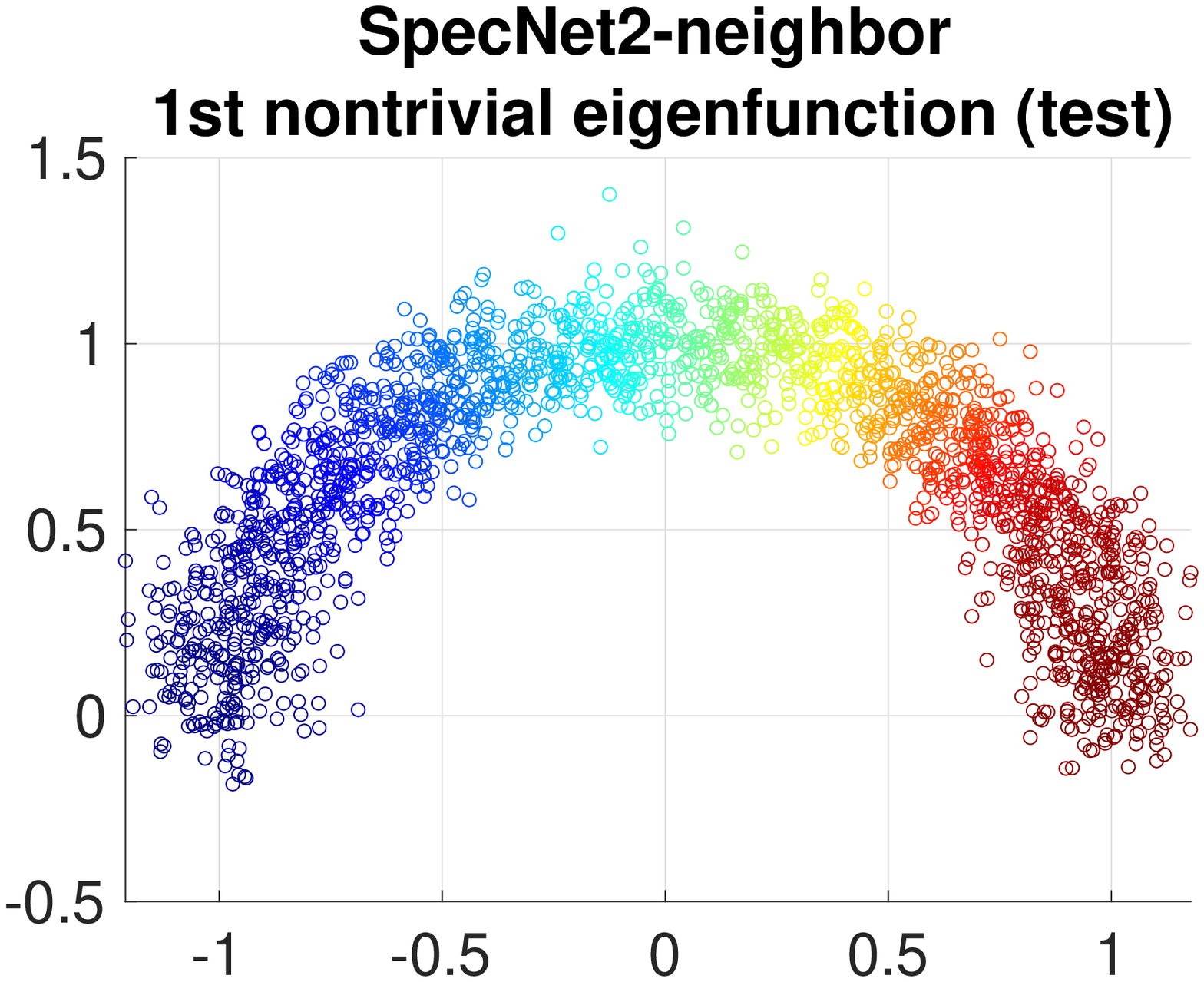}
    \includegraphics[width=0.245\textwidth]{./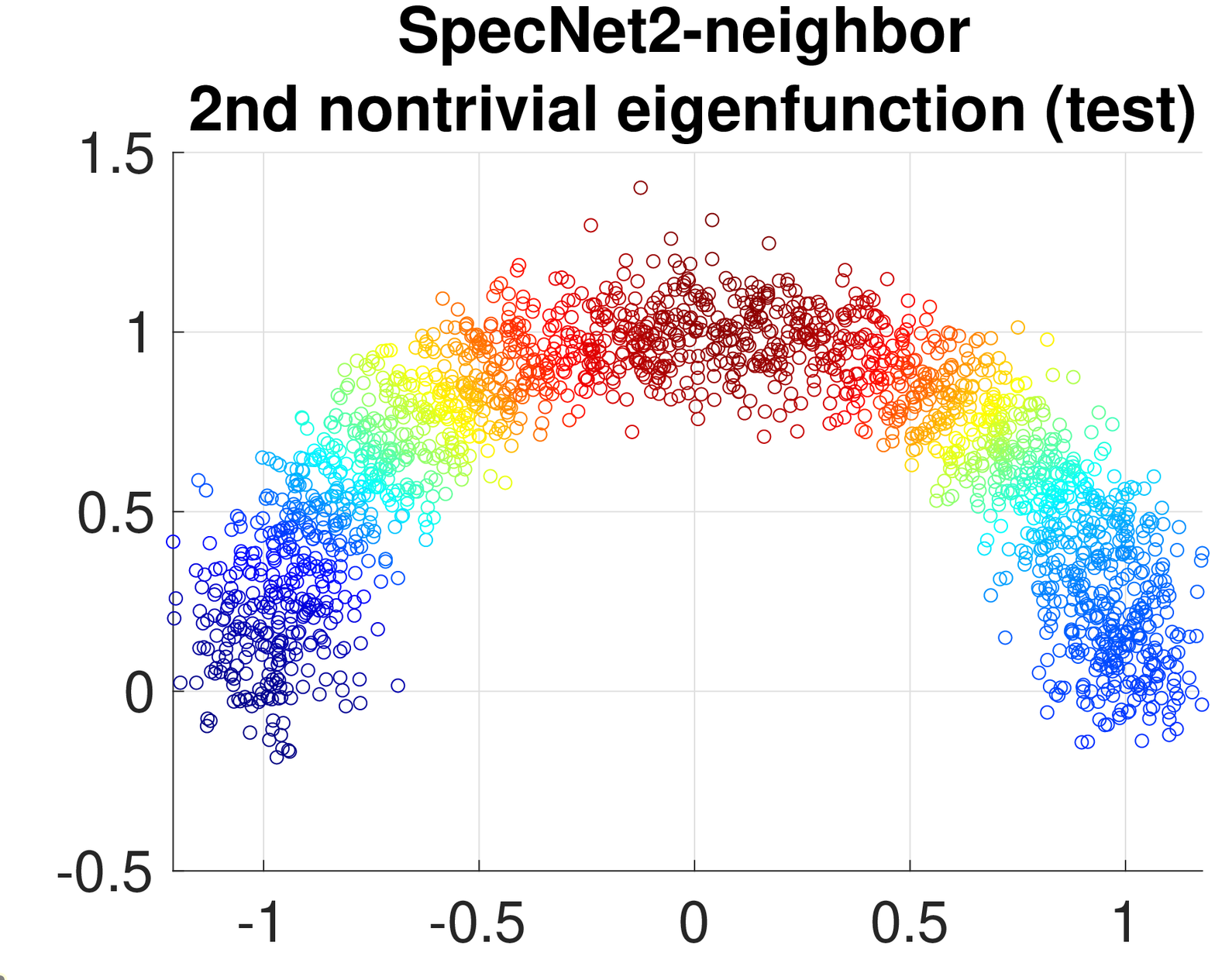}
    \caption{
    One moon datset: embeddings by different methods using the first two nontrivial eigenfunctions. The first row is the ground truth.}
    \label{fig:onemoon-embed}
\end{figure*}

\begin{figure*}[htbp]
    \centering
    \includegraphics[width=0.245\textwidth]{./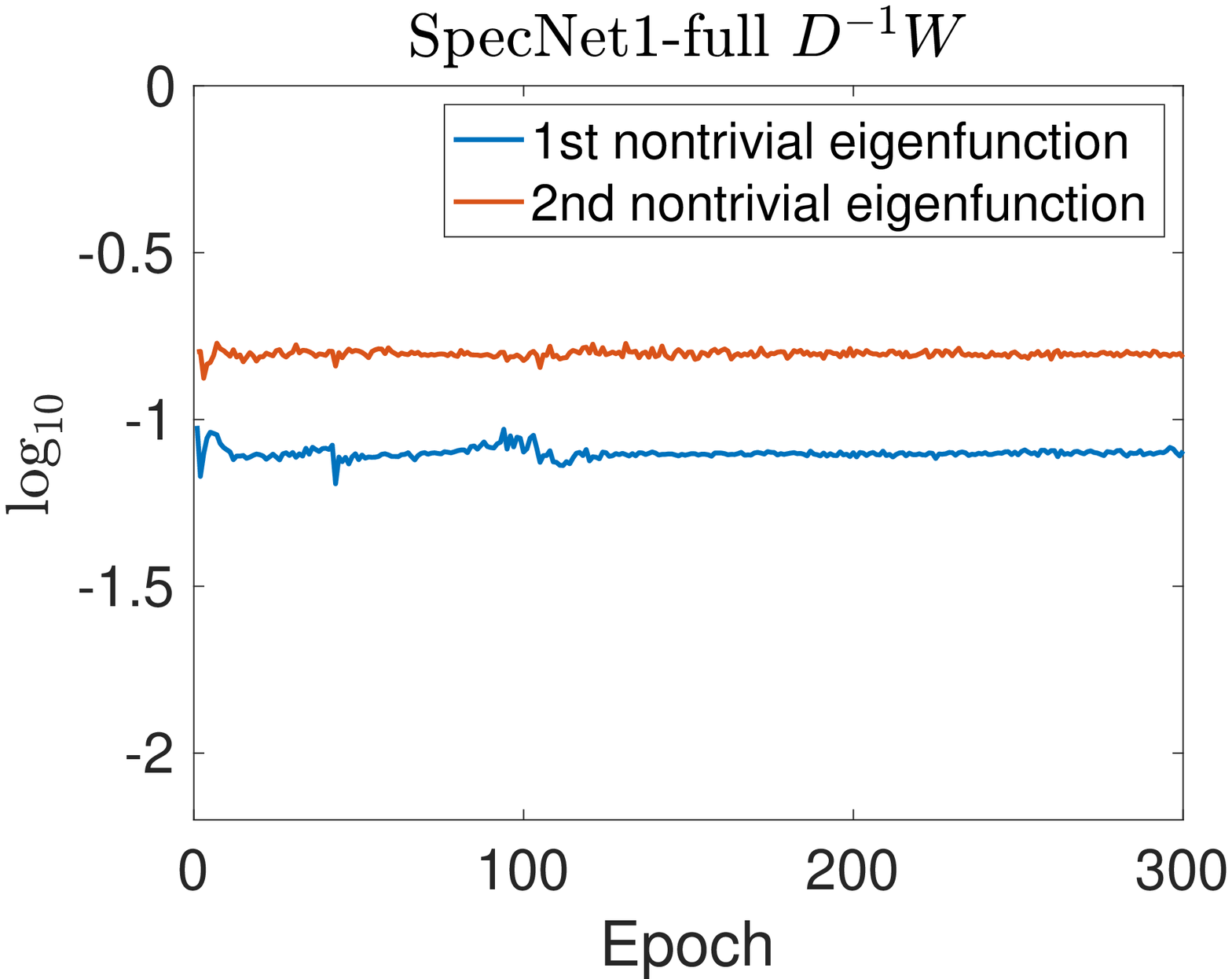}
    \includegraphics[width=0.245\textwidth]{./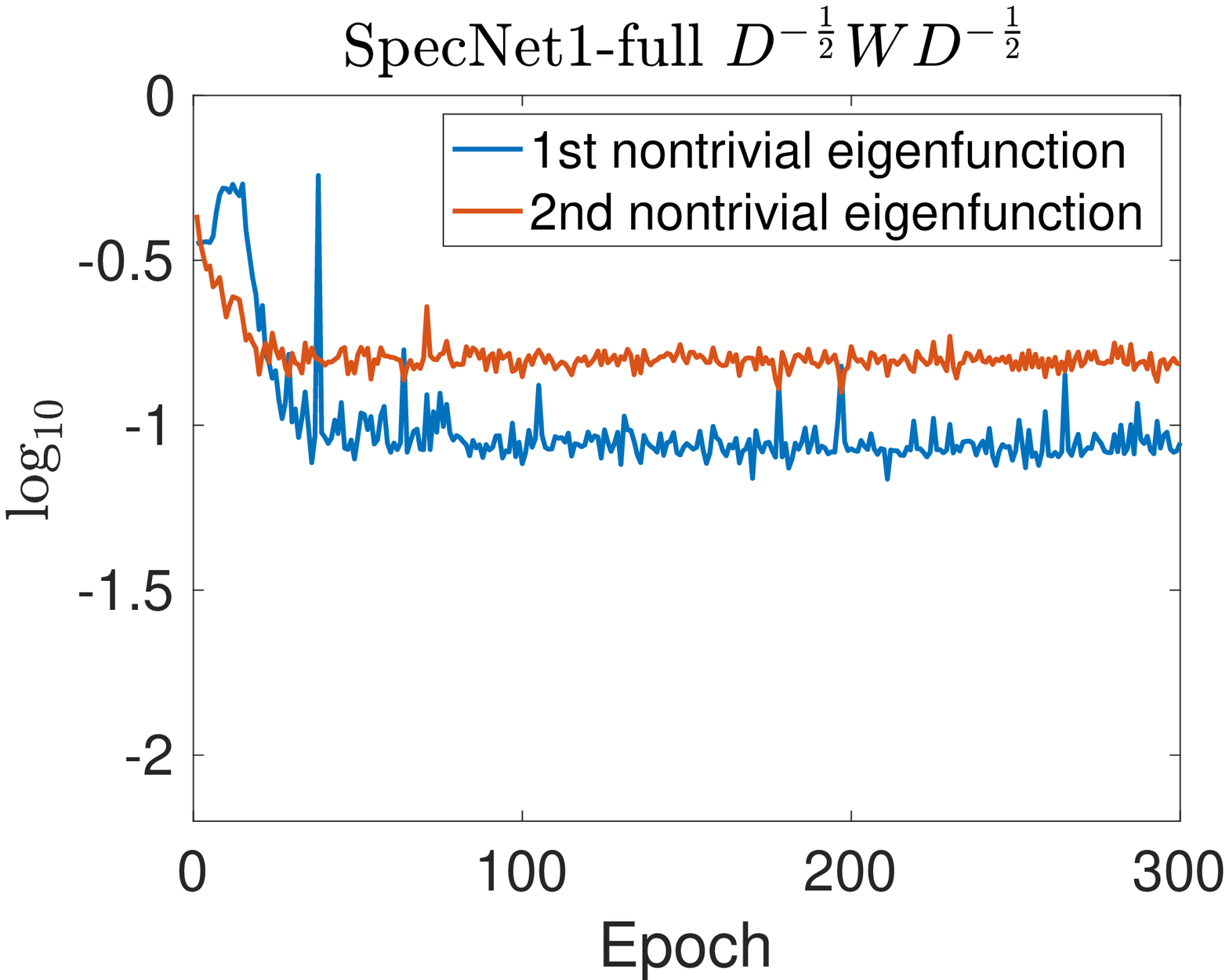}
    \includegraphics[width=0.245\textwidth]{./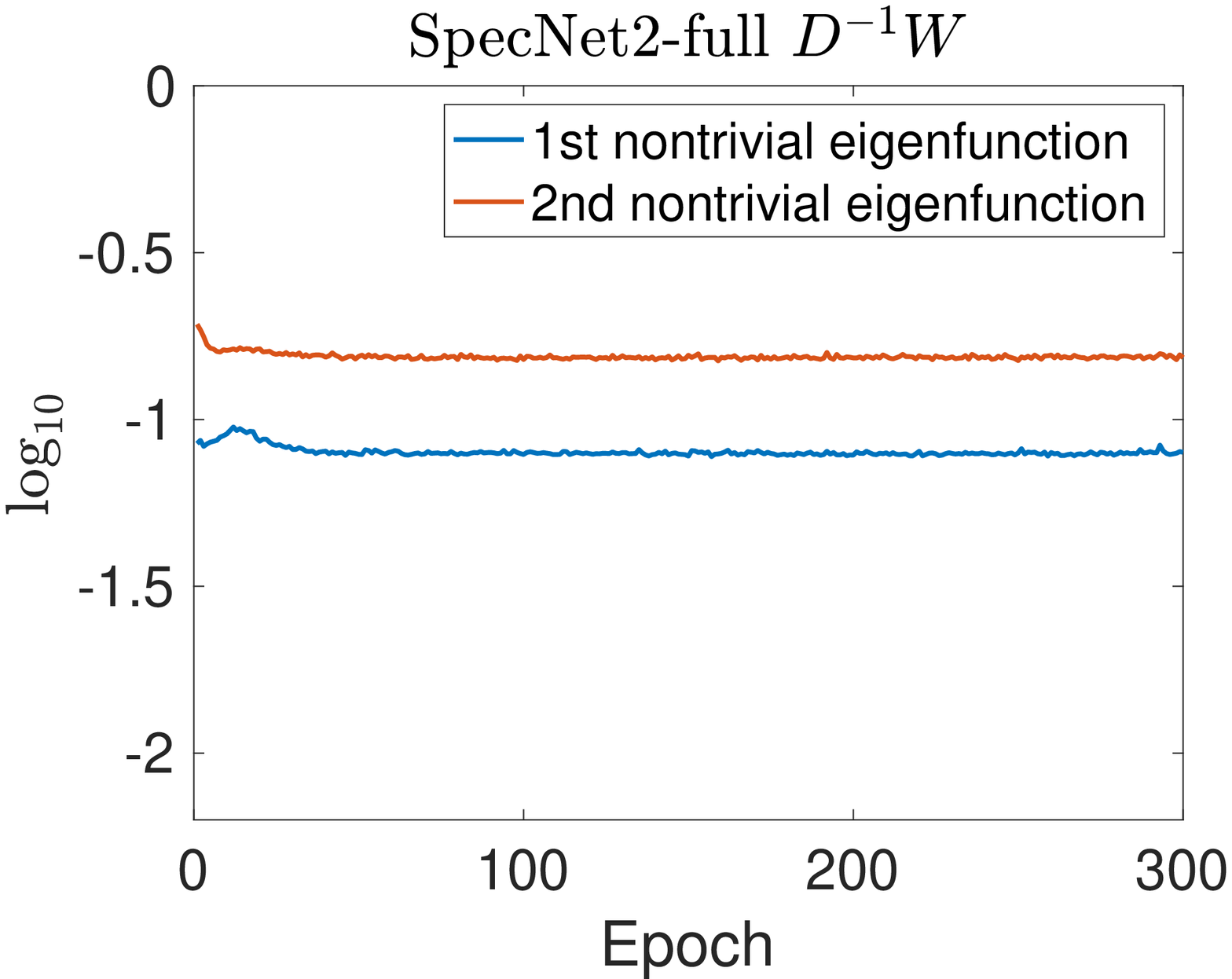}
    \includegraphics[width=0.245\textwidth]{./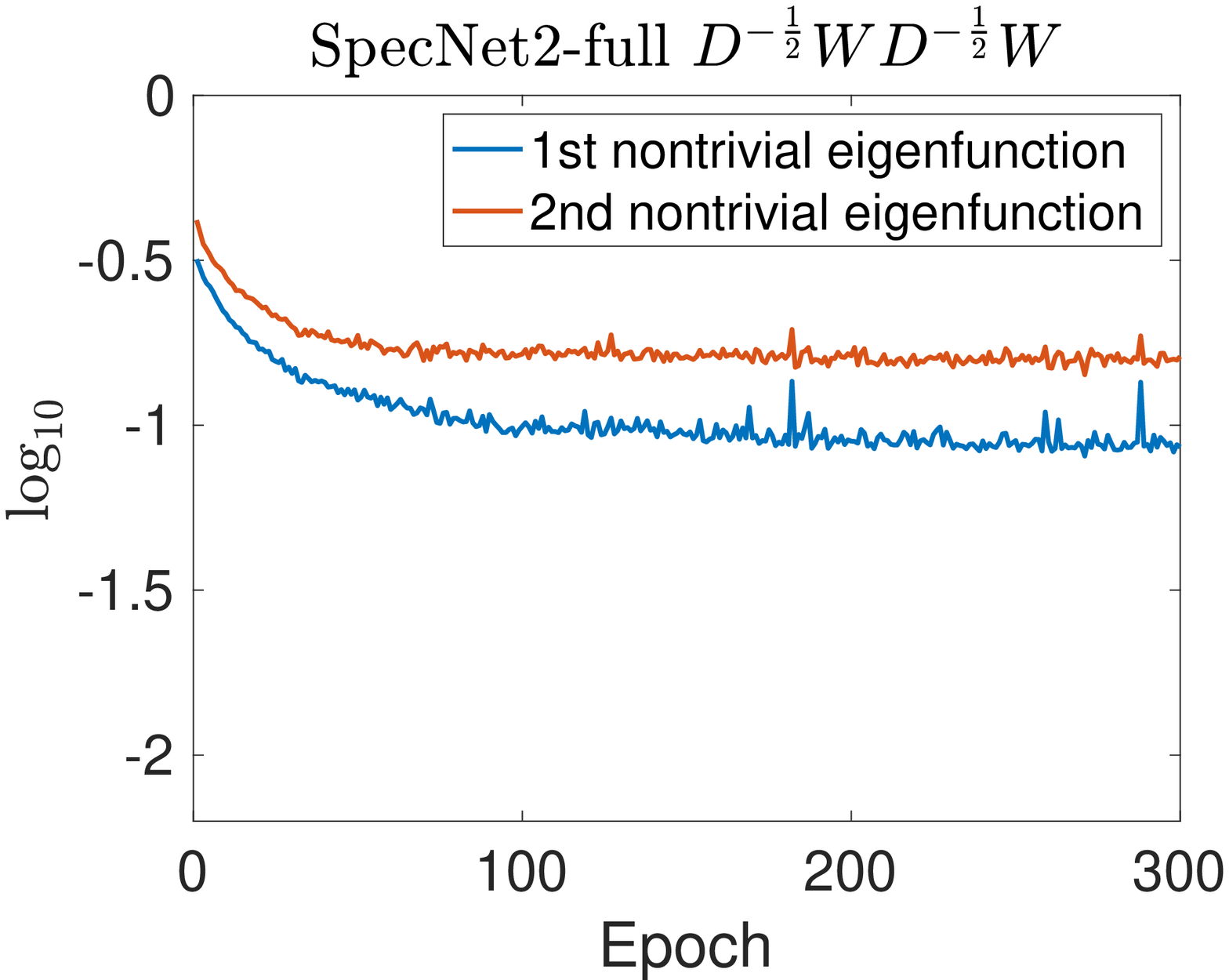}
    \caption{
    One moons datset: network approximations of eigenfunctions using different Laplacian matrix on the training set.}
    \label{fig:onemoon-compare}
\end{figure*}

\subsection{Two moons data}
\noindent\textbf{Data generation:} The two moons training set
consists of $n = 2000$ points in $\mathbb{R}^2$. One piece of moons is
generated by the equation $x_i = (\cos \eta_i-0.5, \sin
\eta_i-0.3) + \xi_i$, $i = 1,\dots,1000$, and the other piece is
generated by $x_i = (-\cos \eta_i+0.5,
-\sin\eta_i+0.3) + \xi_i$, $i = 1001,\dots,2000$, where $\eta_i$ are i.i.d. uniformly sampled on $[0,\pi]$ and
$\xi_i$ are i.i.d. Gaussian random variables of dimension two drawn from
$\calN(0, 0.0036 I_2)$. The testing set consists of 2000 points and is generated in the same way as the training set with a different realization. The sparse affinity matrix associated with the traning set is generated via
Gaussian kernel with bandwidth $\sigma=0.15$, and truncated at threshold 0.08.

\noindent\textbf{Network training:}
We use a fully-connected feedforward neural network with single 128-unit hidden layer:

SpecNet1: $2 \xrightarrow[]{\text{fc}} 128 - \text{ReLU} \xrightarrow[]{\text{linear}} 2 \xrightarrow[]{\text{orthogonal}} 2$;

SpecNet2: $2 \xrightarrow[]{\text{fc}} 128 - \text{ReLU} \xrightarrow[]{\text{linear}} 1$.

The batch size is 4 (the average number of neighbors of batches of size 4 in the sparse affinity matrix is about 670), and we use the Adam as the optimizer with learning rate $10^{-3}$ for SpecNet2 and $10^{-5}$ for SpecNet1.

\noindent\textbf{Error evaluation:} 
The classification is done in an unsupervised way.
Specifically, we label the training and testing samples that are generated
from one piece of moons as 1; label those samples generated from the other
piece of moons as 2; use them as the ground truth and train the network on
the training data without labels. Let us take the classification accuracy
on the training data as an example, we evaluate the network output
function corresponding to the first nontrivial eigenvector on the training
set and perform the standard $K$-means algorithm ($K=2$) to split their
one-dimensional embedding into two clusters also labeled as number 1 or 2,
denoted as $\tilde{\gamma}\in\mathbb{R}^n$. Denote the ground truth of
labels on the training set as $\gamma\in\mathbb{R}^n$. The classification
accuracy is computed by $\max\{\frac{\sum_{i=1}^n
\abs{\gamma_i-\tilde{\gamma}_i}}{n}, 1-\frac{\sum_{i=1}^n
\abs{\gamma_i-\tilde{\gamma}_i}}{n}\}$. The classification accuracy on the
testing set can be computed in a similar way.

\begin{figure*}[htbp]
    \centering
    \includegraphics[width=0.245\textwidth]{./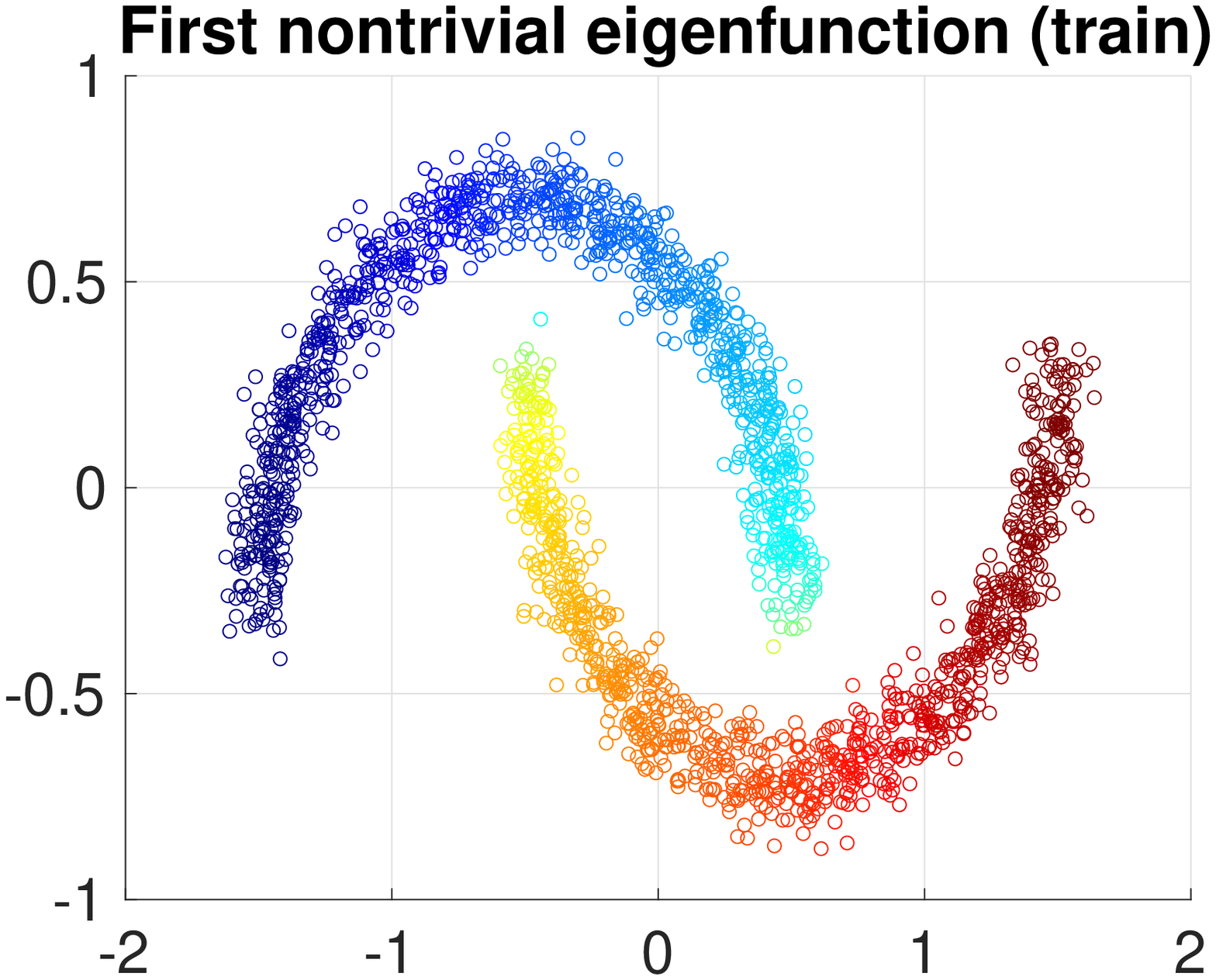}
    \includegraphics[width=0.245\textwidth]{./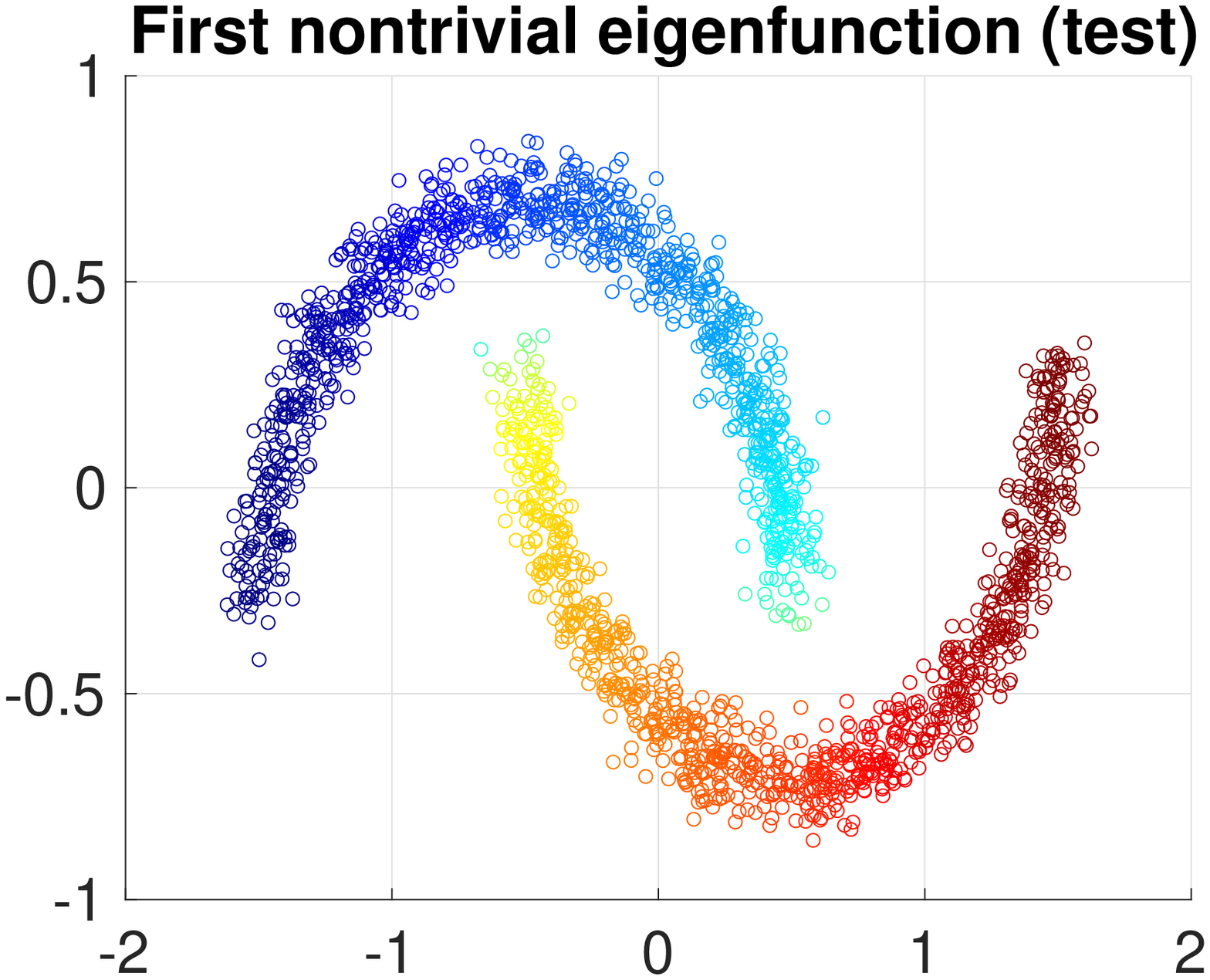}\\
    \includegraphics[width=0.245\textwidth]{./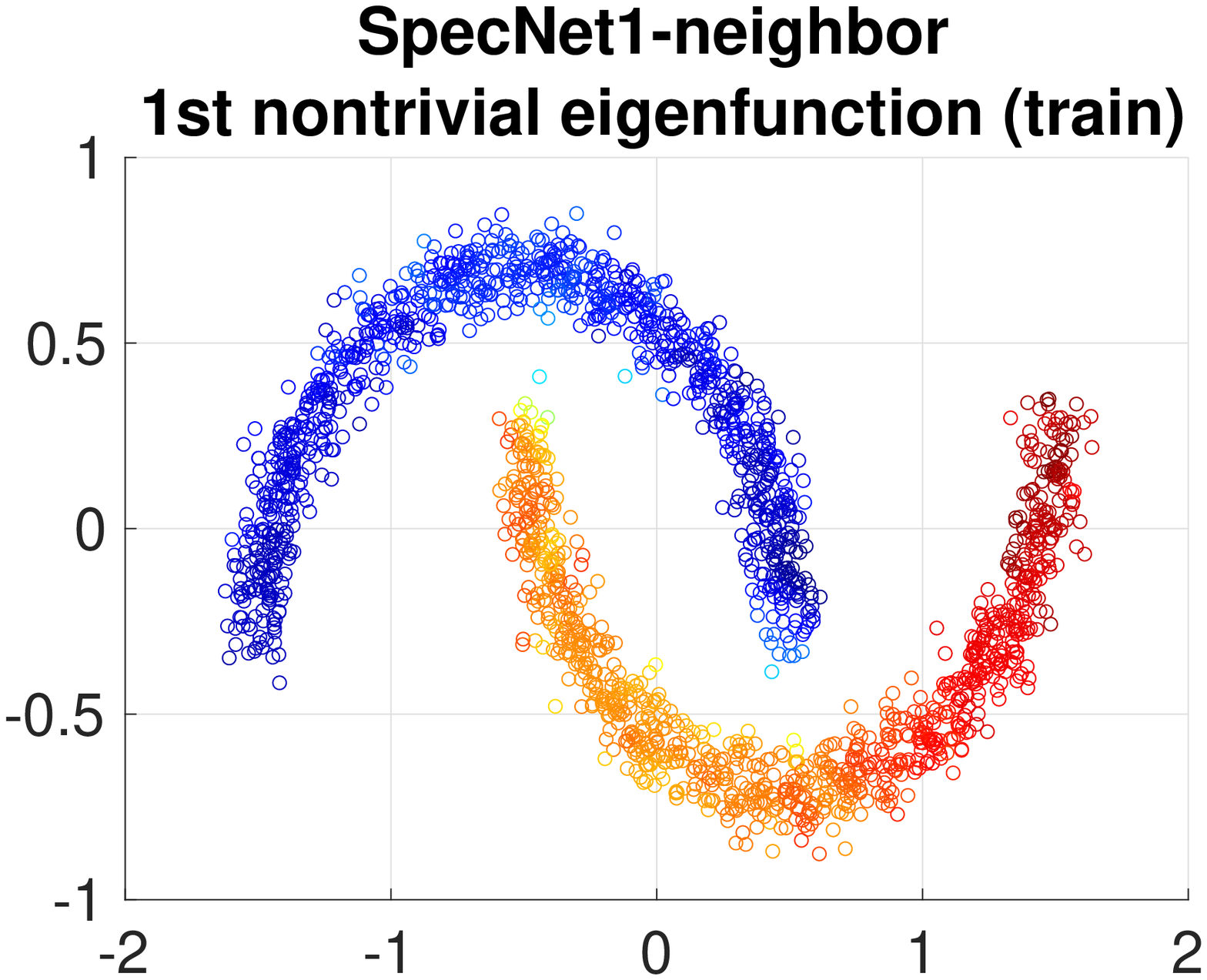}
    \includegraphics[width=0.245\textwidth]{./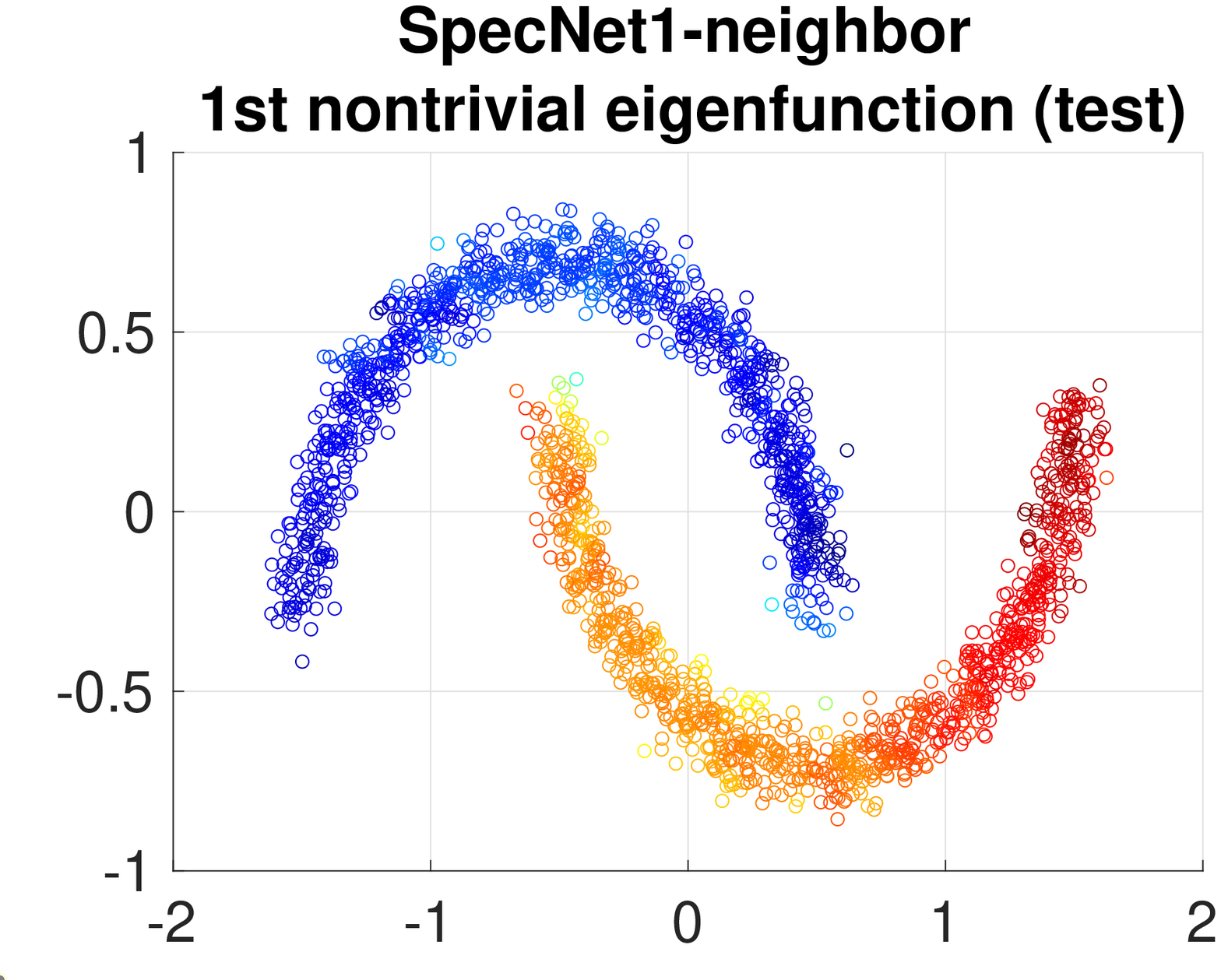}
    \includegraphics[width=0.245\textwidth]{./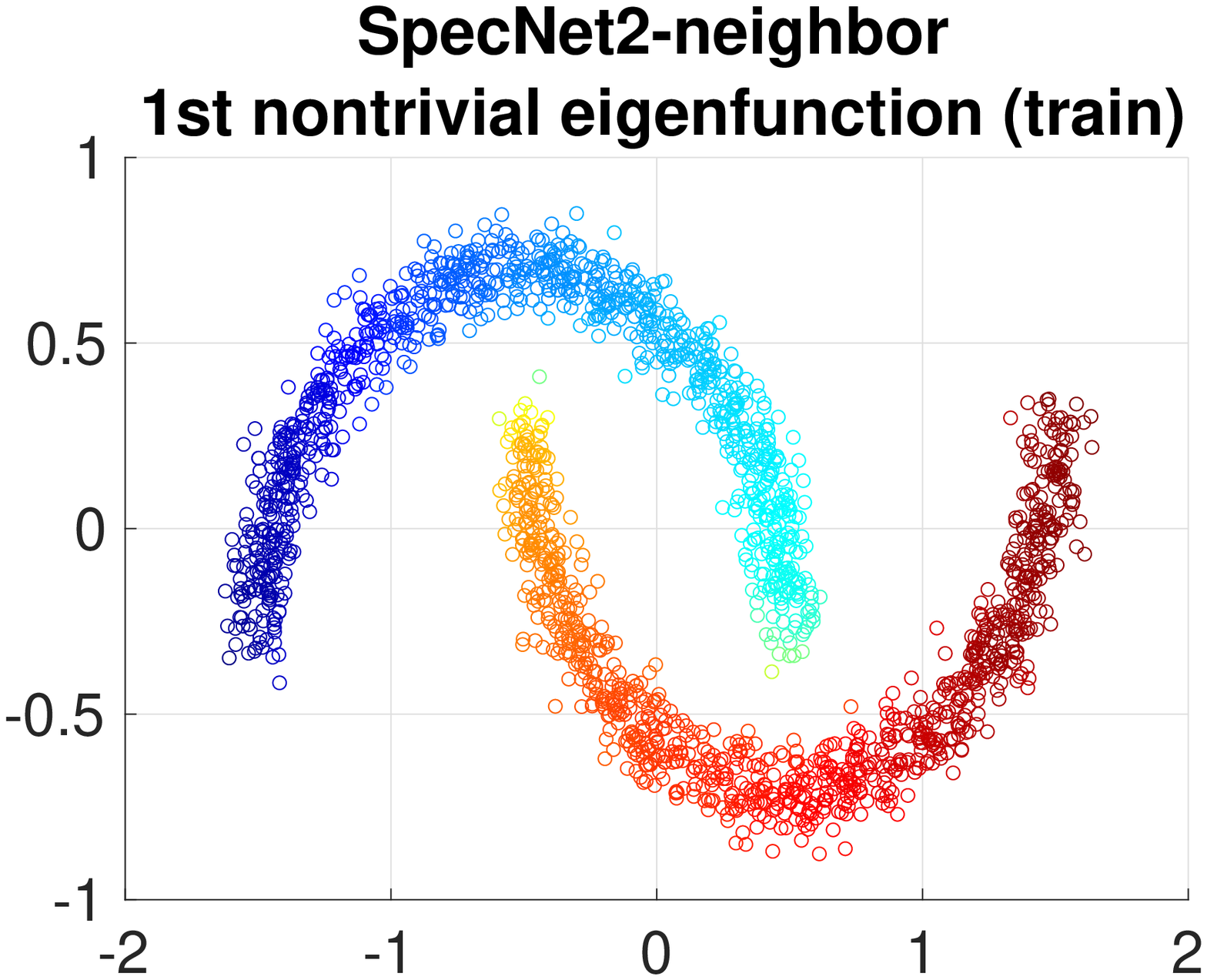}
    \includegraphics[width=0.245\textwidth]{./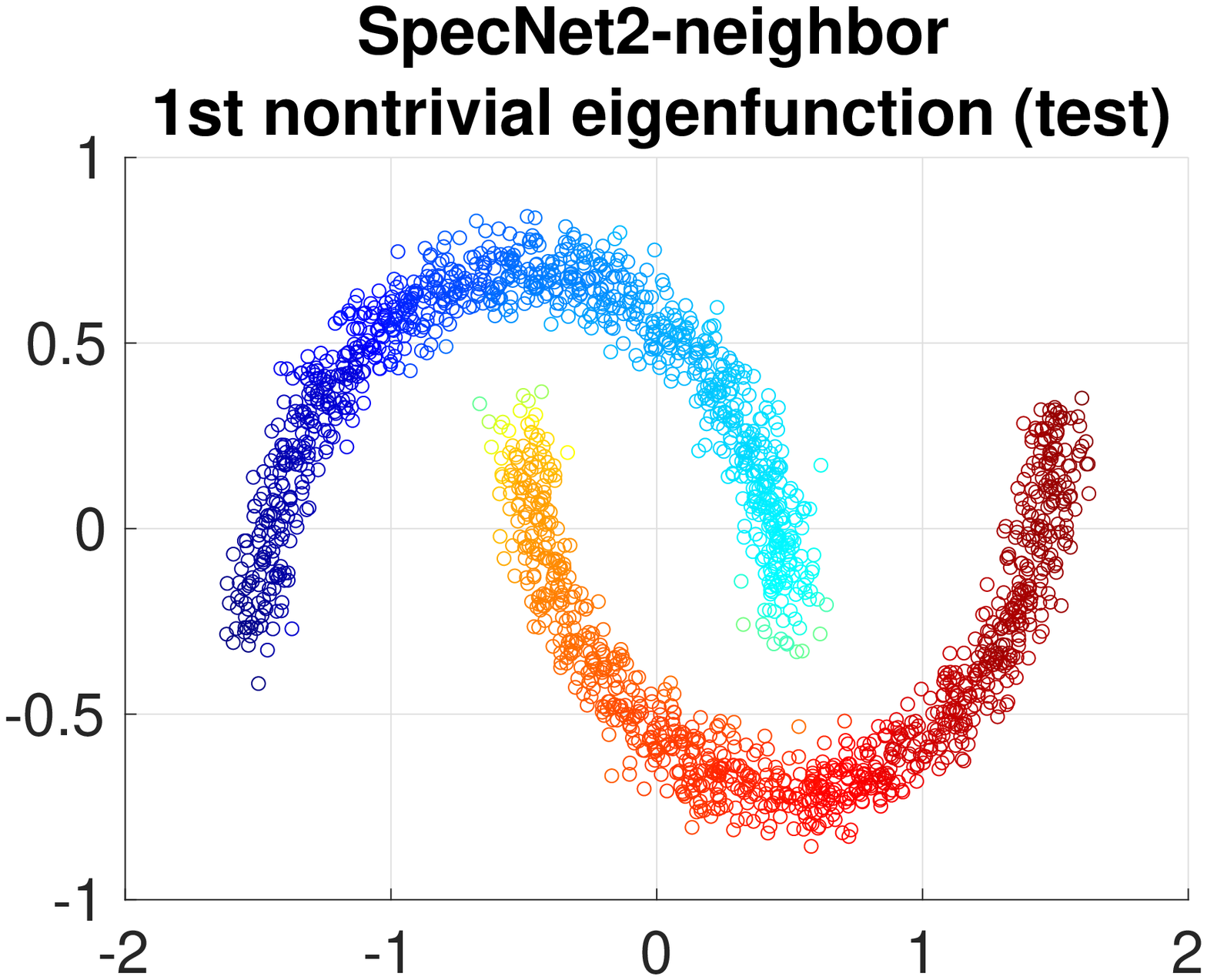}
    \caption{
    Two moons datset: embeddings by neighbor schemes using the first nontrivial eigenfunctions. The first row is the ground truth.}
    \label{fig:twomoon-embed}
\end{figure*}

\subsection{MNIST data}
\noindent\textbf{Data preprocessing:}
Our training set consists of 20000 sample images randomly selected from the MNIST training dataset and our testing set contains 10000 sample images from the MNIST testing dataset. Every sample in the training and testing set is vectorized as a vector in $\mathbb{R}^{784}$.

\noindent\textbf{Network training:}
We use a fully-connected feedforward neural network with two 256-unit hidden layers:

SpecNet1: $784 \xrightarrow[]{\text{fc}} 256 - \text{ReLU} \xrightarrow[]{\text{fc}} 256 -  \text{ReLU} \xrightarrow[]{\text{linear}} 7 \xrightarrow[]{\text{orthogonal}} 7$;

SpecNet2: $784 \xrightarrow[]{\text{fc}} 256 - \text{ReLU} \xrightarrow[]{\text{fc}} 256 -  \text{ReLU} \xrightarrow[]{\text{linear}} 6$.

We want to embed the training set using first six nontrivial eigenvectors of $D^{-1}W$, so the output dimension for SpecNet1 is 7 and that for SpecNet2 is 6, and we use the Adam as the optimizer with learning rate $10^{-4}$ for both SpecNet1 and SpecNet2.

We also show another example in Figure \ref{fig:MNIST-loss2} where we construct the adjacency matrix $A$ of an kNN graph on the training set by setting $A_{i,j} = 1$ if the $j$-th training sample is within 
$k$ nearest neighbors of the $i$-th training sample and $A_{i,j} = 0$ otherwise, and we use $k = 10$. As a result, the average numbers of neighbors of a batch of
size 2, 4 and 8 are about 28, 56 and 112 respectively. We observe that SpecNet2-neighbor with batch size 2 has higher variance, but it can still achieve better performance compared to SpecNet1-local with batch size 28 in the long run.

\begin{figure*}[htbp]
    \centering
    \includegraphics[width=0.245\textwidth]{./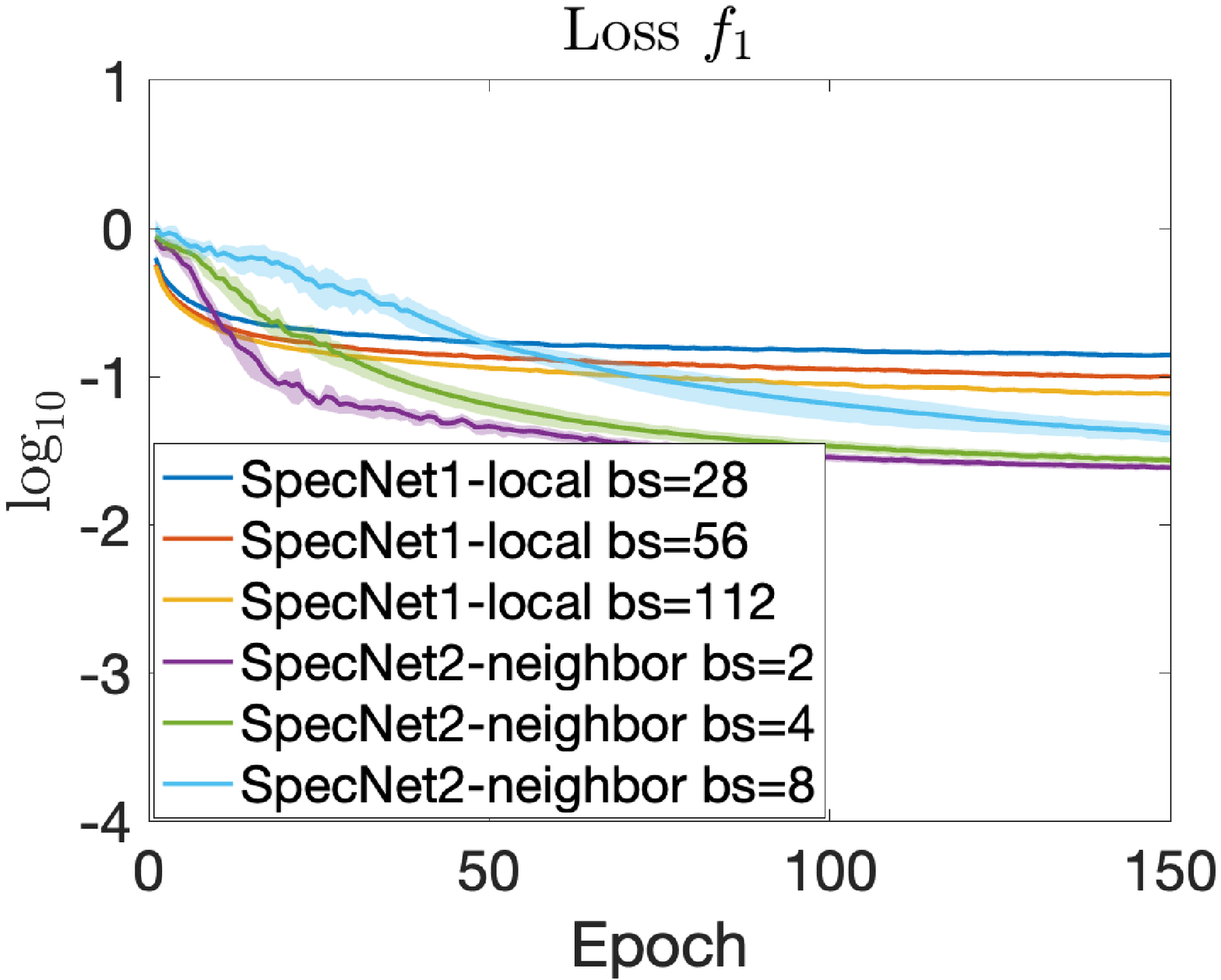}
    \includegraphics[width=0.245\textwidth]{./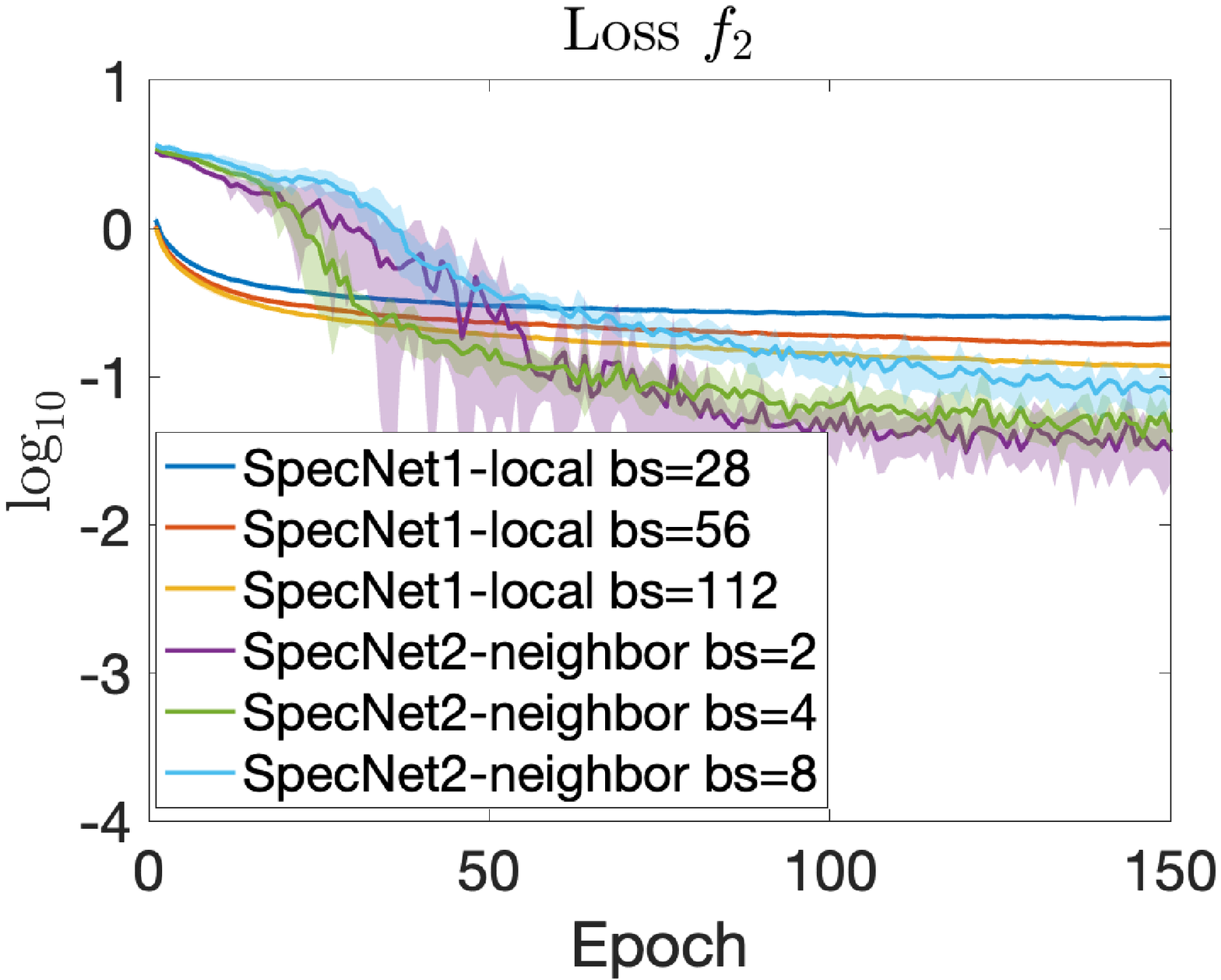}
    \includegraphics[width=0.245\textwidth]{./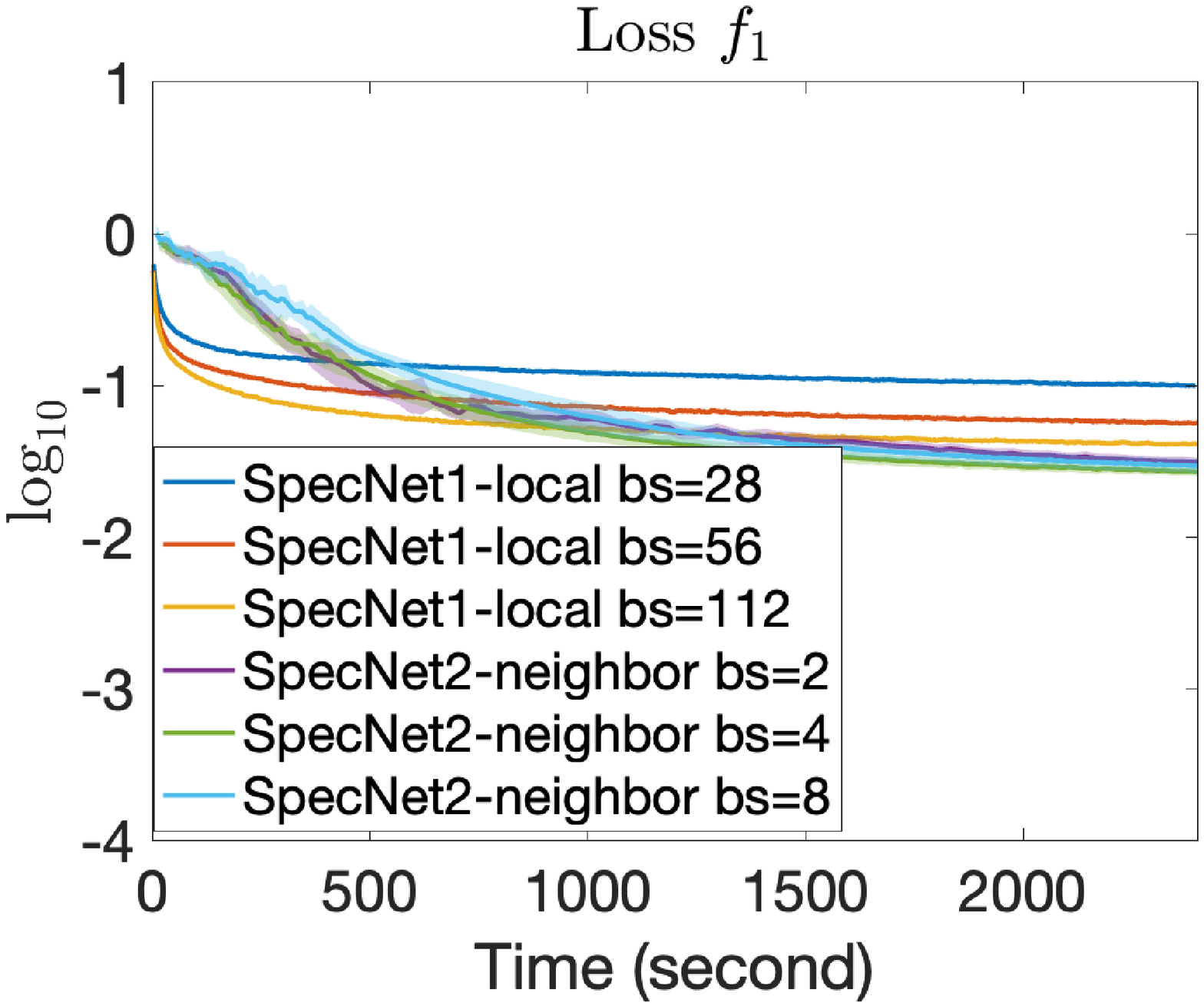}
    \includegraphics[width=0.245\textwidth]{./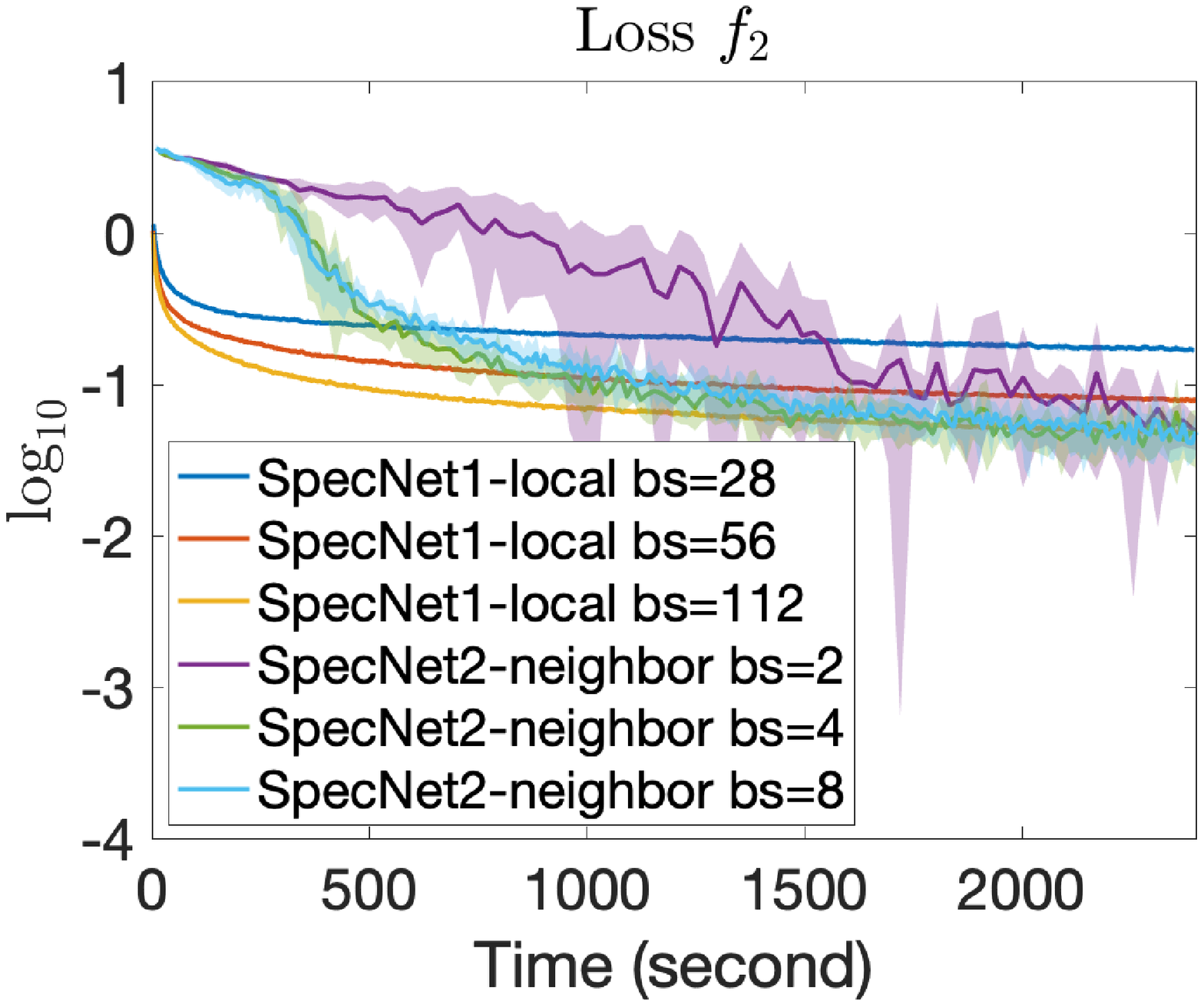}
    \caption{
    MNIST datset: plot of two different losses $\log_{10}(f_1(Y) - f_1^\star)$ and $\log_{10}(f_2(Y) - f_2^\star)$ over epochs (left two subfigures) and over time (right two subfigures), with $f_1$ and $f_2$ defined in \eqref{eq:specnet1} and \eqref{eq:uopt2}. Networks are trained on 20000 MNIST images on a 2021 14-inch Macbook Pro with an 8-core CPU. 
    }
    \label{fig:MNIST-loss2}
\end{figure*}

\section{Scaling of the losses and gradients}
\label{appendix:scaling}

We consider the generalized eigenvalue problem $WU = DU\Lambda$, where $W,
D \in \mathbb{R}^{n \times n}$, $X \in \mathbb{R}^{n \times K}$, $W_{i,j}
= k_{\sigma}(x_i, x_j) :=  e^{\frac{-\|x_i - x_j\|^2}{2\sigma^2}}$, and
$\sigma$ is fixed; $D$ is a diagonal matrix such that $D_{i,i} =
\sum_{j=1}^n W_{i,j}$.

This generalized eigenvalue problem can be viewed as the discretization of
the following continuous eigenvalue problem:
\begin{equation}
    \int k_{\sigma}(x,y) \psi_k(y) p(y)\diff{y} =
    \lambda_k u_{\sigma}(x)\psi_k(x),
\end{equation}
where $p(x)$ is the density function, and $u_{\sigma}(x) := \int
k_{\sigma}(x,y)p(y)\diff{y}\approx m_0 p(x)\sigma^d +
O_{d,p,k_\sigma}(\sigma^{d+2})$, where $m_0$ depends on the dimension $d$. And
$\psi_i$ satisfies the normalization condition:
\begin{equation}\label{normal}
    \int \psi_i(x)\psi_j(x)u_{\sigma}(x)p(x)\diff{x} = \delta_{ij}.
\end{equation}
Note that $\lim_{n\to\infty} \frac{D_{i,i}}{n} = u_{\sigma}(x_i)$
by the law of large numbers.

Consider the loss function
\begin{equation}
    g(Y) = \trace{-2Y^{\top}AY + Y^{\top}BYY^{\top}BY},
\end{equation}
where $Y \approx [y_1(x),\dots,y_k(x)] = [\sqrt{\lambda_1}\psi_1(x),
\dots, \sqrt{\lambda_k}\psi_k(x)]$ and entries of $Y$ is $O(1)$. We need
to know a proper scaling of $A$ and $B$ in terms of $W$ and $D$ so that
$g$ is $O(1)$ and does not scale with $n$. Recall that $\trace{Y^\top W
Y}$ and $\trace{Y^{\top}DYY^{\top}DY}$ are discretization of integrals,
\begin{align*}
    (Y^{\top}WY)_{k,k} 
    &= \sum_{i=1}^n\sum_{j=1}^n y_{k}(x_i)y_k(x_j)k_{\sigma}(x_i,x_j)\\
    & \approx n^2 \int \int k_{\sigma}(x,y) y_k(x) y_k(y) p(x) p(y)
    \diff{x}\diff{y},\\
    (Y^{\top}DY)_{k,l} 
    & = \sum_{i=1}^n y_k(x_i)y_l(x_i)D_{i,i}\\
    & \approx \sum_{i=1}^n y_k(x_i)y_l(x_i) n u_{\sigma}(x_i)\\
    & \approx n^2\int y_k(x)y_l(x)u_{\sigma}(x)p(x)\diff{x}.
\end{align*}
Hence, the proper scaling for $A$ and $B$ are $A = \frac{W}{n^2}$ and $B =
\frac{D}{n^2}$ respectively.

Now let us look at the functional (variational) derivative of $g$ with
respect to $y_s$. We first split $f$ into $g_1(Y) = \trace{-2Y^{\top}AY}$
and $g_2(Y) = \trace{Y^{\top}BYY^{\top}BY}$.
\begin{equation*}
    g_1(Y) \approx -2\sum_k \int\int
    k_{\sigma}(x,y)y_k(x)y_{k}(y)p(x)p(y)\diff{x}\diff{y}.
\end{equation*}
Replacing $y_s(x)$ by $y_s(x)+\epsilon \eta(x)$, and taking derivative with
repsect to $\epsilon$ at $0$, we have
\begin{equation*}
    \frac{\diff{g_1}}{\diff{\epsilon}}\Big|_{\epsilon=0} = 
    -4\int\int k_{\sigma}(x,y)y_s(x)\eta (y)p(x)p(y)\diff{x}\diff{y},
\end{equation*}
and the variational derivative of $g_1$ with respect to $y_s(x)$ is
\begin{align*}
    \frac{\partial g_1}{\partial y_s}(y) 
    = -4\int k_{\sigma}(x,y)y_s(x)p(x)\diff{x}
    \approx \frac{1}{n}(-4 \sum_{i=1}^n k_{\sigma}(x_i,y)y_s(x_i)).
\end{align*}
Therefore, the $O(1)$ scaling of the gradient of $g_1$ is $\nabla_Y g_1 =
-4\frac{W}{n}Y$.

On the other hand, a similar procedure can be applied to analyze the
scaling of the gradient of $g_2$. Recall,
\begin{align*}
    g_2(Y) =\sum_{k}(\sum_{i=1}^n (Y^\top BY)_{k,i}(Y^\top BY)_{i,k})
    \approx \sum_{k}\sum_{i}(\int y_k(x)y_i (x)u_{\sigma}(x)p(x)\diff{x})^2.
\end{align*}
Replacing $y_s(x)$ by $y_s(x)+\epsilon \eta(x)$, taking derivative with
respect to $\epsilon$ at $0$, and using the orthogonality condition
\eqref{normal}, we have,
\begin{equation*}
    \frac{\diff{g_2}}{\diff{\epsilon}}\Big|_{\epsilon=0}
    = 4[(\int y_s^2(x)u_{\sigma}(x)p(x)\diff{x})(\int
    y_s(x)\eta(x)u_{\sigma}(x)p(x)\diff{x})],
\end{equation*}
and 
\begin{align*}
    \frac{\partial g_2}{\partial y_s}(x_j) 
    & = 4 (\int y_s^2(y)u_{\sigma}(y)p(y)\diff{y})y_s(x_j)u_{\sigma}(x_j)\\
    & \approx 4\frac{1}{n}(\sum_{i=1}^n y_s^2(x_i) \frac{D_{i,i}}{n})
    y_s(x_j) \frac{D_{j,j}}{n}
    = \frac{4}{n^3}(\sum_{i=1}^n y_s^2(x_i)D_{i,i}) y_s(x_j)D_{j,j}.
\end{align*}
Hence the $O(1)$ scaling of the gradient of $g_2$ is $\nabla_Y g_2 =
\frac{4}{n^3}DYY^\top DY$.

\end{document}